\newcommand{\nobrackettag}[0]{\def\tagform@##1{\maketag@@@{##1}}}
\newcommand{\qedboxfull}{\vrule height 5pt width 5pt depth 0pt}
\newcommand{\qedfull}{\hfill{\qedboxfull}}
\newcommand{\A}{\mathcal{A}} 
\newcommand{\C}{\mathcal{C}} \newcommand{\D}{\mathcal{D}}
 \newcommand{\F}{\mathcal{F}}
\newcommand{\G}{\mathcal{G}} \renewcommand{\H}{\mathcal{H}}
 \renewcommand{\L}{\mathcal{L}}
\newcommand{\M}{\mathcal{M}} \newcommand{\N}{\mathcal{N}}
 \renewcommand{\P}{\mathcal{P}}
 \newcommand{\R}{\mathcal{R}}
\renewcommand{\S}{\mathcal{S}} \newcommand{\T}{\mathcal{T}}
\newcommand{\U}{\mathcal{U}} 
 \newcommand{\X}{\mathcal{X}}
\newcommand{\tup}[1]{\langle #1\rangle}   
\newcommand{\set}[1]{\{#1\}}                      
\newcommand{\inst}{\mathsf{inst}}
\newcommand{\outtype}[1]{\textsc{out-type}(#1)}
\newcommand{\outtypec}[2]{\textsc{out-type}_{#2}(#1)}
\newcommand{\ex}[1]{\mathit{#1}}
\newcommand{\true}{\mathsf{true}}
\newcommand{\false}{\mathsf{false}}
\newcommand{\BOX}[1]{ [\!{-}\!] #1}
\newcommand{\DIAM}[1]{\langle\!{-}\!\rangle #1}
\newcommand{\adom}[1]{\textsc{adom}(#1)}
\newcommand{\ans}[2][]{\mathit{ans}_{#1}(#2)}
\newcommand{\map}[2]{#1 \rightsquigarrow #2}
\newcommand{\calls}[1]{\textsc{calls}(#1)}
\newcommand{\cspec}{\ensuremath{\C}\xspace}
\newcommand{\db}{\ensuremath{db}\xspace}
\newcommand{\instspec}{I}
\newcommand{\mulpd}{\ensuremath{\mu\L^{\scriptsize @}_p}\xspace}
\newcommand{\inadom}{\ensuremath{\textsc{live}}}
\newcommand{\limp}{\rightarrow}
\newcommand{\myname}{\mathit{MyName}}
\newcommand{\ena}{\textbf{enables}}
\newcommand{\froma}{\textbf{from}}
\newcommand{\toa}{\textbf{to}}
\newcommand{\onev}{\textbf{on}}
\newcommand{\ifc}{\textbf{if}}
\newcommand{\thendo}{\textbf{then}}
\newcommand{\ev}[1]{\textsc{#1}}
\newcommand{\add}{\textnormal{\textbf{add}}}
\newcommand{\del}{\textnormal{\textbf{del}}}
\newcommand{\agent}{\ensuremath{\ex{Agent}}\xspace}
\newcommand{\spec}{\ensuremath{\ex{Spec}}\xspace}
\newcommand{\hasSpec}{\ensuremath{\ex{hasSpec}}\xspace}
\newcommand{\afacet}{AF}
\newcommand{\sfacet}{BF}
\newcommand{\str}[1]{``\cv{#1}"}
\newcommand{\agset}{\G}
\newcommand{\evset}{\M}
\newcommand{\dtset}{\T}
\newcommand{\dfset}{\F}
\newcommand{\dt}{T}
\newcommand{\df}{F}
\newcommand{\rel}[1]{\R_{#1}}
\newcommand{\drel}[1]{\R_{#1}}
\newcommand{\scname}[1]{\ensuremath{\mathbf{#1}}\xspace} 
\newcommand{\pname}[1]{\ensuremath{\mathsf{#1}}\xspace} 
\newcommand{\actname}[1]{\ensuremath{\textsc{#1}}\xspace} 
\newcommand{\buff}{\mathit{MBuffer}}
\newcommand{\newm}{\mathit{NewM}}
\newcommand{\oldm}{\mathit{OldM}}
\newcommand{\tdom}[1]{\Delta_{#1}}
\newcommand{\ddom}[1]{\Delta_{#1}}
\newcommand{\cdom}[1]{\Delta_{0,#1}}
\newcommand{\succp}{\mathsf{succ}}
\newcommand{\idb}{D_0}
\newcommand{\schema}{\D}
\newcommand{\crule}{\C}
\newcommand{\urule}{\U}
\newcommand{\uact}{\A}
\newcommand{\constr}{\Gamma}
\newcommand{\name}{\cname{n}}
\newcommand{\scset}{\S}
\newcommand{\cname}[1]{\mathtt{#1}}
\newcommand{\typed}[1]{\overline{#1}}
\newcommand{\type}[3]{\textsc{type}_{#3}(#1[#2])}
\newcommand{\typer}[2]{\textsc{type}_{#2}(#1)}
\newcommand{\rmas}{\tup{\dtset, \dfset, \cdom{\dfset},\scset, \evset,
    \agset,\instspec}}
\newcommand{\sys}{\X}
\newcommand{\ts}{\Upsilon}
\newcommand{\pts}{\Lambda}
\newcommand{\fts}{\Theta}
\newcommand{\trans}{{\rightarrow}}
\newcommand{\subst}{\theta}
\newcommand{\state}{s}
\newcommand{\cv}[1]{\mathtt{#1}}
\newcommand{\flatten}{\textsc{flatten}}
\newcommand{\cm}{\mathfrak{C}\xspace}
\newcommand{\instr}[1]{\mathsf{#1}}
\newcommand{\halt}{\instr{HALT}}
\newcommand{\inc}[2]{\instr{INC}(#1,#2)}
\newcommand{\cdec}[3]{\instr{CDEC}(#1,#2,#3)}
\newcommand{\goto}[1]{\instr{GOTO}\ #1}
\newcommand{\cell}[2]{[#1]_{#2}}
\newcommand{\lessthan}[1]{\ex{lessThan}_{#1}}
\newtheorem{counter}{Counter}[section]
\newtheorem{lemma}[counter]{Lemma}
\newtheorem{theorem}[counter]{Theorem}
\newtheorem{exampleAux}{Example}[section]
\newenvironment{example}{\begin{exampleAux}\upshape}{\qedfull\end{exampleAux}}
\newmdenv[
 linecolor=yellow,backgroundcolor=yellow,
 outerlinewidth=0,leftmargin=0cm,rightmargin=0cm,innerleftmargin=0cm,innertopmargin=0cm,innerbottommargin=0cm,innerrightmargin=0cm,
skipabove=0pt,skipbelow=0cm
]
{infobox}
\begin{document}
%
\title{Verification of Relational Multiagent Systems with Data Types\\
(Extended Version)}
\author{Diego Calvanese \qquad Marco Montali\\
 Free University of Bozen-Bolzano\\
 Piazza Domenicani 3, 39100 Bolzano, Italy\\
 \url{{calvanese,montali}@inf.unibz.it}
 \And
 Giorgio Delzanno\\
 University of Genova\\
 Via Dodecaneso 35, 16146 Genova, Italy\\
 \url{giorgio.delzanno@unige.it}
}
\maketitle

\begin{abstract}
  We study the extension of relational multiagent systems (RMASs), where agents
  manipulate full-fledged relational databases, with data types and facets
  equipped with domain-specific, rigid relations (such as total orders).
  Specifically, we focus on design-time verification of RMASs against rich
  first-order temporal properties expressed in a variant of first-order
  $\mu$-calculus with quantification across states.  We build on previous
  decidability results under the ``state-bounded'' assumption, i.e., in each
  single state only a bounded number of data objects is stored in the agent
  databases, while unboundedly many can be encountered over time.  We recast
  this condition, showing decidability in presence of dense, linear orders, and
  facets defined on top of them.  Our approach is based on the construction of
  a finite-state, sound and complete abstraction of the original system, in
  which dense linear orders are reformulated as non-rigid relations working on
  the active domain of the system only.  We also show undecidability when
  including a data type equipped with the successor relation.
\end{abstract}

\section{Introduction}

We study \emph{relational multiagent systems} (RMASs), taking inspiration from
the recently defined framework of data-aware commitment-based multiagent
systems (DACMASs) \cite{ChSi13,MoCD14}.  Broadly speaking, an RMAS is
constituted by agents that maintain data in an internal full-fledged relational
database, and apply proactive and reactive rules to update their own data, and
exchange messages with other agents. Messages have an associated payload, which
is used to move data from one agent to another.  Notably, when updating their
internal database, agents may also inject fresh data into the system, by
invoking external services. This abstraction serves as a metaphor for any kind
of interaction with the external world, such as invocation of web services, or
interaction with humans.

From the data perspective, previous research has mainly focused on a single,
countably infinite data domain, whose elements can only be compared for
equality and inequality.  This assumption is highly restrictive, since data
types used in applications are typically equipped with domain-specific, rigid
relations (such as total orders), and might be specialized through the use of
\emph{facets} \cite{ISOGPD07,SaCa12}.

The focus of this work is on design-time verification of RMASs against rich
first-order temporal properties, allowing for quantification across states.  By
considering only a countably infinite domain with equality, it has been shown
in \cite{BeLP12,BCDDM13,MoCD14} that decidability of verification holds for
variants of first-order temporal logics under the assumption that the system is
``state-bounded'', i.e., unboundedly many data objects can be encountered over
time, provided that in each single state only a bounded number of them is
stored in the agent databases \cite{BCDM14}. We recast this condition by
considering different options for the data types. Specifically, by exploiting
an encoding of two-counter machines, we show that decidability of verification
even of propositional reachability properties is lost when one of the data
types is equipped with the successor relation.
Our main technical result is showing decidability for a variant of first-order
$\mu$-calculus in presence of dense, linear orders, and facets defined on top
of them. In this case, we provide an explicit technique to construct a
finite-state, sound and complete abstraction of the original system, in which
dense linear orders are reformulated as non-rigid relations working on the
active domain of the system only.  Notably, this allows us to model and verify
state-bounded RMASs that include coordination mechanisms such as ticket-based
mutual exclusion protocols.


\section{Relational Multiagent Systems}

RMASs are data-aware multiagent systems constituted by agents that
exchange and update data.
%
%
Beside generic agents, an RMAS is equipped with a
so-called \emph{institutional agent}, which exists from the initial system
state, and can be contacted by the other agents as a sort of ``white-page'' agent,
i.e., to:
\begin{inparaenum}[\it (i)]
\item get information about the system as a whole;
\item obtain names of other agents so as to establish an interaction with them;
  and
\item create and remove agents.
\end{inparaenum}

At a surface level, RMASs and DACMASs share many aspects. There are however two
key differences in the way they model data. On the one hand, while DACMASs
consider only a single, abstract data domain equipped with equality only, in
RMASs data are typed and enriched with domain-specific relations. This deeply
impacts the modeling power of the system (see Section~\ref{sec:example}). On
the other hand, while agents in DACMASs operate with incomplete knowledge about
the data, and use a description logic ontology as a semantic interface for
queries, RMASs employ standard relational technology for storage and querying
services. This is done to simplify the treatment and isolate the core issues
that arise when incorporating data types and facets, but we believe our results
can be transferred to DACMASs as well.


An \emph{RMAS} $\sys$ is a tuple $\rmas$, where:
\begin{inparaenum}[(1)]
\item $\dtset$ is a finite set of \emph{data types};
\item $\dfset$ is a finite set of \emph{facets} over $\dtset$;
\item $\cdom{\dfset}$ is the initial data domain of $\sys$;
\item $\scset$ is a finite set of \emph{$\dfset$-typed service calls};
\item $\evset$ is a finite set of \emph{$\dfset$-typed relations} denoting
  messages with payload;
\item $\agset$ is a finite set of \emph{$\dfset$-typed agent specifications};
  and
\item $\instspec$ is the $\dfset$-typed specification of the
  \emph{institutional agent}.
\end{inparaenum}

\subsection{Data Types and Their Facets}
Data types and facets provide the backbone for modeling real-world objects manipulated by the RMAS agents.
A \emph{data type} $\dt$ is a pair $\tup{\tdom{\dt},\rel{\dt}}$, where
$\tdom{\dt}$ is an infinite set\footnote{Being infinite does not lead
  to a loss of generality, thanks to the notion of facet defined
  below.}, and $\rel{\dt}$ is a set of relation schemas. Each relation
schema $R/n \in \rel{\dt}$ with name $R$ and arity $n$ is associated with an $n$-ary predicate $R^\dt \subseteq \tdom{\dt}^n$.
Given a set $\dtset$ of data types, we denote by $\drel{\dtset}$
all domain-specific relations mentioned in $\dtset$.
Similarly, $\ddom{\dtset}$ groups all the (pairwise disjoint) data domains of
the data types in $\dtset$.
The interaction between data types is orthogonal to our work and is left for
the future.

\begin{example}
We consider the following, well-known data domains, whose relations retain the usual meaning:
\begin{compactitem}
\item Dense total orders such as $\tup{\mathbb{Q},\set{<,=}}$ and
  $\tup{\mathbb{R},\set{<,=}}$.
\item Total orders with successor, like:
{$\tup{\mathbb{Z},\set{<,=,\succp}}$}.
\end{compactitem}
\vspace*{-.4cm}
\end{example}

\noindent We assume that every RMAS has two special datatypes:
\begin{inparaenum}[\it (i)]
\item $\tup{\mathbb{A},\set{=}}$ for \emph{agent names} that, as in mobile
  calculi, behave as pure names \cite{Need89,MoPi05} and can only be tested for
  (in)equality.
\item $ \tup{\mathbb{B},\set{=}}$ for \emph{agent
    specification names} (see Section~\ref{sec:spec}).
\end{inparaenum}

Facets are introduced to restrict data types.
A \emph{facet} $\df$ is a pair $\tup{\dt,\varphi(x)}$ where $\dt
=\tup{\tdom{\dt},\rel{\dt}}$ is a data type, and $\varphi(x)$ is a monadic
\emph{facet formula} built as:
\[
\varphi(x) := \true \mid P(\vec{v}) \mid \neg \varphi(x) \mid \varphi_1(x) \lor \varphi_2(x)
\]
where $P(\vec{v})$ is a relation whose schema belongs to $\rel{\dt}$,
and whose terms $\vec{v}$ are either variable $x$ or data objects in $\tdom{\dt}$. We use the standard abbreviations $\false$ and $\varphi_1(x) \land \varphi_2(x)$.
Given a set $\dfset$ of facets, we
use $\drel{\dfset}$ and $\ddom{\dfset}$ as a shortcut for
$\drel{\dtset}$ and $\ddom{\dtset}$ respectively, where $\dtset$ is
the set of data types on which facets in $\dfset$ are
defined.

Given a \emph{facet} $\df = \tup{\dt,\varphi(x)}$ with
$\dt=\tup{\tdom{\dt},\rel{\dt}}$, a data object $\cname{d}$ \emph{belongs to}
$\df$ if:
\begin{inparaenum}[\it (i)]
\item $\cname{d} \in \tdom{\dt}$;
\item $\varphi(x)$ holds in $\df$ under substitution $[x/\cname{d}]$, written
  $\df,[x/\cname{d}] \models \varphi(x)$. In turn, given substitution $\sigma =
  [x/\cname{d}]$, relation $\df,\sigma \models \varphi(x)$ is inductively
  defined as follows:

\vspace*{-.5cm}
\[
\small
\begin{array}{@{}l@{~}c@{~~}l}
  \df,\sigma \models \true\\
  \df,\sigma \models R(\vec{v})\sigma & \text{if} & R(\vec{v})\sigma \text{ is true in } \dt\\
  \df,\sigma \models \neg \varphi(x) & \text{if} & \df,\sigma \not\models  \varphi(x) \\
  \df,\sigma \models \varphi_1(x) \land \varphi_2(x) & \text{if} &  \df,\sigma \models \varphi_1(x) \text{ and }  \df,\sigma \models \varphi_2(x)
\end{array}
\]
\end{inparaenum}

Notice that a \emph{base facet} that simply ranges over
all data objects of a data type can be encoded with $\true$ as its facet
formula. In particular, we use $\afacet =
\tup{\tup{\mathbb{A},\set{=}},\true}$ and  $\sfacet =
\tup{\tup{\mathbb{B},\set{=}},\true}$ to refer to two base facets for agent
and specification names respectively.

 \begin{example}
\label{ex:bool}
 An Enumeration $\cname{s_1},\ldots,\cname{s_n}$ over string values can be modeled as facet
 $\tup{\tup{\mathbb{S},\set{=}},\begin{array}{@{}l@{}}\bigvee_{i \in \set{1,\ldots,n}} x = \cname{s_i}}\end{array}$.
 This also accounts for the type of boolean, which can be captured by
 $\mathit{Bool} =  \tup{\tup{\mathbb{S},\set{=}},x=``\cv{t}" \lor x =
   ``\cv{f}"}$.
\end{example}

\begin{example}
$\tup{\tup{\mathbb{R},\set{>,=}},(x > \cname{0} \land \cname{18} > x)
  \lor x > \cname{65}} $
denotes ages of junior or senior people.
\end{example}

Facets are used as relation types.
Given a set $\dfset$ of facets, an \emph{$\dfset$-typed relation schema}
$\typed{R}$ is a pair $\tup{R/n,\dfset_R}$, where $R/n$ is a relation
schema with name $R$ and arity $n$, and $\dfset_R$ is an $n$-tuple
$\tup{\df_1,\dots,\df_n}$ of facets in $\dfset$.

An \emph{$\dfset$-typed database schema} $\schema$ is a finite set of $\dfset$-typed relation schemas, such that no two typed relations in
$\schema$ share the same name.
%

In the following, we denote the $i$-th component of $R$ as
 ${R}[i]$, and write $\type{R}{i}{\schema}$ to indicate the type
 associated by $\schema$ to $R[i]$. We also denote the tuple of types
 associated by $\schema$ to all components of $R$ as $\typer{R}{\schema}$.
 To simplify readability, we also seldomly use notation $\typed{R}(\df_1,\ldots,\df_n)$ as a
shortcut for $\typed{R} = \tup{R/n,\tup{\df_1,\dots,\df_n}}$.


Obviously, since relations are typed, it is important to define when their
tuples agree with their facets.
%
Let $\typed{R} = \tup{R/n,\dfset_R}$ be a relation schema. We say that a fact
$R(\cname{o_1},\ldots,\cname{o_n})$ \emph{conforms to $\typed{R}$} if for every
$i \in \set{1,\ldots,n}$, we have that $\cname{o_i}$ belongs to $\df_i$.
Let $\dfset$ be a set of facets, and $\schema$ be an $\dfset$-typed
database schema. A database instance $I$ \emph{conforms to} $\schema$
if every tuple $R(\cname{o_1},\ldots,\cname{o_n}) \in I$ conforms to
its corresponding relation schema $\typed{R} \in \schema$.

\subsection{Initial Data Domain}

Giving a data type $\dt = \tup{\tdom{\dt},\rel{\dt}}$, we
isolate a \emph{finite} subset $\cdom{\dt} \subset \tdom{\dt}$ of
\emph{initial data objects} for $\dt$. This subset explicitly enumerates those
data objects that can be used in the initial states of the agent
specifications (cf.~Section~\ref{sec:spec}), plus specific ``control
data objects'' that are explicitly
mentioned in the agent specifications themselves, and consequently contribute to
determine the possible executions.

We extend
this notion to cover also those objects used in the definition of
facets. Giving a facet $\df = \tup{\dt, \varphi(x)}$ with $\dt = \tup{\tdom{\dt},\rel{\dt}}$, the set of \emph{initial
data objects} for $\df$ 
 is a finite subset of
$\tdom{\dt}$ that contains all data objects explicitly mentioned in
$\varphi(x)$.
The \emph{initial data domain} of an RMAS with set $\dfset$ of facets,
written $\cdom{\dfset}$,
is then defined as the (disjoint) union of initial data objects for
each facet in $\dfset$.



\subsection{Typed Service Calls}
Typed service calls provide an abstract mechanism for agents to
incorporate new data objects when updating their own
databases. As argued in
  \cite{BCDDM13,MoCD14,BCDM14}, this is crucial to make the system ``open'' to
  the external world, and accounts for a variety of interaction modes,
  such as interaction with services or humans.
  We exploit this mechanism to model in particular the agent ability to inject new data according to internal decisions taken
  by the agent itself, but still external to its specification.

Given a set $\dfset$ of facets, an \emph{$\dfset$-typed service}
$\typed{\scname{f}}$ is a triple $\tup{\scname{f}/n,\dfset^{in},\df^{out}}$, where
\begin{inparaenum}[\it (i)]
\item $\scname{f}/n$ is a function
schema with name $\scname{f}$ and arity $n$;
\item $\dfset^{in}$ is an $n$-tuple
$\tup{\df_1,\dots,\df_n}$ of facets in $\dfset$ representing the \emph{input
types} of the service call;
\item $\df^{out}$ is a facet in $\dfset$ representing the \emph{output
  facet} of the service call.
\end{inparaenum}
As for typed relations, in $\scset$ there are no two
typed services that share the same name.
Intuitively, when invoked with a tuple of ground data objects belonging
to their input facets, the service non-deterministically
returns a data object that belongs to the output facet.

\begin{example}
\label{ex:price}
 Service
  $\typed{\scname{getPrice}} = \tup{\scname{getPrice}/0,\set{SF},PF}$
  gets a string in $SF = \tup{\tup{\mathbb{S},\set{=}},\true}$
 referring to a product, and returns a rational price $PF
  = \tup{\tup{\mathbb{Q},\set{<,=}},x > \cname{0}}$ .
\end{example}

\begin{example}
\label{ex:services}
Given facet $AF = \tup{\tup{\mathbb{A},\set{=}},\true}$, service
  $\typed{\scname{getN}} = \tup{\scname{getN}/0,\emptyset,AF}$
  returns agent names.
\end{example}

\subsection{Agent Specifications}
\label{sec:spec}
In RMASs, agent specifications consist of three main components.
The first is the data component, whose intensional part is a typed database
  schema with constraints; every agent adopting the same specification
  starts with the same initial extensional data, but during the
  execution it autonomously evoles by interacting with other
  agents and services. The second is a proactive behavior, constituted by a set of condition-action
   communicative rules that determine which messages can be emitted by the
  agent, together with their actual payload and target agent.
The third is a reactive behavior, constituted by
ECA-like update rules that determine how the agent
  updates its own data when a certain message with payload is received
  from or sent to another agent.

Given a set $\dfset$ of facets with initial data domain $\cdom{\dfset}$, an \emph{$\dfset$-typed agent
  specification} is a tuple $\tup{\name,\schema,\constr,\idb,\crule,\uact,\urule}$,
where:
\begin{compactenum}
\item $\name \in \mathbb{B} \cap \cdom{\dfset}$ is the
  \emph{specification name}, which is assumed to be also part of the
  initial data domain.
\item $\schema$ is an $\dfset$-typed database schema. We assume
    that the schema is always equipped with a special unary relation
    $\typed{\myname}$, whose unique component is typed with $AF$, and
    that is used to keep track of the global name associated to the
  agent in the system.
\item $\constr$ is a finite set of database constraints over
  $\schema$, i.e., of domain-independent first-order formulae over
  $\schema$ and
  $\drel{\dfset}$, using only constants from $\cdom{\dfset}$.
\item $\idb$ is the \emph{initial agent state}, i.e., a database instance
  that conforms to $\schema$, satisfies all constraints in $\constr$,
  and uses only constants from $\idb$.
\item $\crule$ is a set of \emph{communicative rules}, defined below.
\item $\uact$ and $\urule$ are sets of \emph{update actions} and \emph{update rules}, defined below.
\end{compactenum}

When clear from the context, we use the name of a component with
superscript the name of the specification to extract that component
from the specification tuple. For example, $\schema^\cv{n}$ denotes
the database schema above.

\smallskip
\noindent \textbf{Communicative rules.} These rules are used to
determine which messages with payload are enabled to be sent by the agent to
other agents, depending on the current configuration of the agent
database. When multiple ground messages with payload are
enabled, the agent nondeterministically chooses one of them, according
to an internal, black-box policy.

A \emph{communicative rule} is a rule of the form 
\[Q(t,\vec{x}) ~\ena~ M(\vec{x}) ~\toa~ t\]
where:
\begin{inparaenum}[\it (i)]
\item $Q$ is a domain-independent FO query over $\schema$ and $\drel{\dfset}$, whose
  terms are variables $t$ and
$\vec{x}$, as well as data objects in $\cdom{\dfset}$;
\item $M(\vec{x})$ is a message, i.e., a typed relation whose schema belongs to $\evset$.
\end{inparaenum}

Let $\dfset$ be a set facets, $\schema$ a $\dfset$-typed
database schema, $D$ a database instance that conforms to $\schema$, and $Q(x_1,\ldots,x_n)$ a FO query over $\schema$ and
$\drel{\dfset}$ that uses only constants in $\cdom{\dfset}$. The \emph{answer $\ans{Q, D}$ to $Q$ over $D$}
is the set of assignments $\theta$ from the free variables $\vec{x}$
of $Q$ to data objects in $\cdom{\dfset}$, such that $D \models Q\theta$.
We treat
$Q\theta$ as a boolean query, and we say
$\ans{Q\theta,D} \equiv \true$ if and only if $D \models Q\theta$.

In the following, we use the special query $\inadom_\dt(x)$ as a
shortcut for the query that returns all data
objects in the current active domain that belong to data type
$\dt$. Given schema $\schema$, such a query can be easily expressed as
the union of conjunctive queries checking whether $x$ belongs to a
component of some relation in $\schema$, such that the component has
type $\dt$.   In this respect, notice that any query can be relativized
to the active domain through $\inadom$ atoms.



We also make use to the anonymous variable ``$\_$'' to signify an
existentially quantified variable not used elsewhere.

\smallskip
\noindent \textbf{Update actions}. These are parametric actions used
to update the agent current database instance, possibly injecting new
data objects by interacting with typed services.

An \emph{update action} is a pair
$\tup{\typed{\alpha},\alpha_{spec}}$, where:
\begin{inparaenum}[\it (i)]
\item
 $\typed{\alpha}$ is the \emph{action schema}, i.e., a typed relation accounting for the action name
and for the number of action parameters, together with their
types;
\item
$\alpha_{spec}$ is the action specification and has the form
$\alpha(\vec{\pname{p}}) : \{e_1,\ldots,e_n\}$, where
$\{e_1,\ldots,e_n\}$ are update effects.
\end{inparaenum}
Each update effects has the form
\[\map{Q(\vec{\pname{p}},\vec{x})}{\add~A,\del~D}\]
where
  \begin{inparaenum}[\it (i)]
 \item $Q$ is a domain-independent FO query over $\schema$ and $\drel{\dfset}$, whose
  terms are parameters $\vec{\pname{p}}$, variables $\vec{x}$, and data objects in $\cdom{\dfset}$;
\item $A$ is a set of ``add'' facts over $\schema$ that
    include as terms: free variables $\vec{x}$ of $Q$,
    parameters $\vec{\pname{p}}$ and terms
    $\scname{f}(\vec{x},\vec{\pname{p}})$, with $\typed{\scname{f}}$
  in $\scset$;
\item $D$ is a set of ``delete'' facts that include as terms free variables
  $\vec{x}$ and parameters $\vec{\pname{p}}$.
\end{inparaenum}

An update action is applied by grounding its parameters $\vec{\pname{p}}$
with data objects $\vec{\cname{o}}$. This results in partially grounding each of its
effects. The effects are then applied in parallel over the agent
database, as follows. For each partially grounded effect
$\map{Q(\vec{\pname{o}},\vec{x})}{\add~A,\del~D}$,
$Q(\vec{\pname{o}},\vec{x})$ is evaluated over the current database
and for each obtained answer $\theta$, the fully ground facts
$A\theta$ (resp., $D\theta$) are obtained. All the ground facts in
$D\theta$ are deleted from the agent database. Facts in
$A\theta$, instead, could contain (ground) typed service
calls. In this case, every service call is issued, obtaining back a (possibly fresh) data
object belonging to the output facet of the service. The instantiated
facts in $A\theta$ obtained by replacing the ground service calls with
the corresponding results are then added to the current database,
giving priority to additions.

\smallskip
\noindent \textbf{Update rules.}
These are conditional, ECA-like rules used by the agent to invoke an
update action on its own data when a message with payload is exchanged
with another agent.

An \emph{update rule} is a rule of the form
\begin{compactitem}
\item (on-send) $\onev~ M(\vec{x})~\toa~t~\ifc~ Q(\vec{y}_1) ~\thendo~
  \alpha(\vec{y}_2)$, with $\vec{y}_1 \cup \vec{y}_2\subseteq \vec{x}
  \cup \set{t}$, or 
\item (on-receive) $\onev~ M(\vec{x})~\froma~s~\ifc~ Q(\vec{y}_1) ~\thendo~
  \alpha(\vec{y}_2)$, with $\vec{y}_1 \cup \vec{y}_2\subseteq \vec{x}
  \cup \set{s}$,
\end{compactitem}
where:
\begin{inparaenum}[\it (i)]
\item $M(\vec{x})$ is a message, i.e., a typed relation whose schema belongs to $\evset$;
\item $Q$ is a FO query over $\schema$, whose terms are variables
  $\vec{y}_1$ and data objects in $\cdom{\dfset}$;
\item $\alpha$ is an update action in $\uact$, whose parameters are
  bound to variables $\vec{y}_2$.
\end{inparaenum}

\smallskip
\noindent
\textbf{Institutional Agent Specification.}
In an RMAS, an \emph{institutional} agent is dedicated to the
management of the system as a whole. Differently from DACMASs \cite{MoCD14},
we do not assume here that the institutional agent has full visibility of
the messages exchanged by all agents acting into the system. It is
simply an agent that is always active in the system and whose name, $\inst$ in the following, is known by every other agent.
Still, we assume that the institutional agent has special duties, such as in
particular handling the creation of agents and their removal from the
system, and maintainance of agent-related information, like the set of
names for active agents, together with
their specifications.

Technically, the institutional agent specification $\instspec$ is a
standard agent specification named $\cname{ispec}$, partially grounded
as follows.
To keep track of agents and their specifications, $\schema_i$ contains
three dedicated typed relations:
\begin{inparaenum}[\it (i)]
\item $\tup{\agent/1,\tup{\afacet}}$, to store
  agent names;
\item $\tup{\spec/1,\tup{\sfacet}}$, to store specification names;
\item $\tup{\hasSpec/2,\tup{\afacet,\sfacet}}$, to store
  the relationship between agents and their specifications.
\end{inparaenum}
Given these special relations, $\inst$
can also play the role of \emph{agent registry}, supporting agents in
finding names of other agents to communicate with.
Additional system-level relations, such as agent roles, duties, commitments
\cite{MoCD14}, can be insterted into $\schema_\inst$ depending on the
specific domain under study.
To properly enforce that $\hasSpec/2$ relates agent to specification
names, foreign keys can be added to $\constr^\cv{ispec}$.
Futhermore, we properly initialize $\idb^\inst$ as follows:
\begin{inparaenum}[\it (i)]
\item $\agent(\inst) \in \idb^\cv{ispec}$;
\item $\spec(s_i) \in \idb^\cv{ispec}$ for every agent specification that
  is part of the RMAS, i.e., for specification name $\cname{ispec}$
  and all specification names mentioned in $\agset$;
\item $\hasSpec(\inst,\cname{instSpec}) \in \idb^\cv{ispec}$.
\end{inparaenum}
Obviously, $\inst$ may have other initial data, and
specific rules and actions. Of particular interest is the possibility
for $\inst$ of dynamically creating and removing other agents. This
can be encoded by readapting \cite{MoCD14}. Details
are given in the online appendix.

\smallskip
\noindent\textbf{Agent creation/removal.} Two actions are employed by the
institutional agent to insert or remove an agent into/from the
system. Their respective action schemas are
$\actname{newAg}(\sfacet)$ and $\actname{remAg}(\afacet)$. As for creation, $\inst$ employs the service call
introduced in Example \ref{ex:services} to introduce a name into
the $\agent$ relation, attaching to it the specification name passed
as input. However, some additional modeling effort is
needed so as to ensure that the introduced name is indeed new:
\[
\actname{newAg}(\pname{s}) :
\left\{
\begin{array}{@{}r@{\ }c@{\ }l@{}l@{}}
\ex{OldAg}(a) & \rightsquigarrow &\del&\set{\ex{OldAg}(a)}\\
\ex{FreshAg}(a) & \rightsquigarrow &\del&\set{\ex{FreshAg}(a)}\\
\true & \rightsquigarrow & \add &
\left\{
\begin{array}{@{}l@{}}
\ex{FreshAg}(\scname{getN}()),\\
\agent(\scname{getN}()),\\
\spec(\scname{getN}(),\pname{s})
\end{array}
\right\}\\
\agent(a) & \rightsquigarrow & \add& \set{\ex{OldAg}(a)}
\end{array}
\!\!\right\}
\]
Intuitively, apart from adding the new agent and attaching the
corresponding specification, the action updates the two accessory
agent relations $\ex{OldAd}$ and $\ex{FreshAg}$, which are assumed to
be part of $\schema^\cv{ispec}$, ensuring that in the next state
$\ex{OldAd}$ contains the set of agent names that were present in the
immediately preceding state, and that $\ex{FreshAg}$ contains the
newly injected name. Freshness can then be guaranteed by adding a
dedicated constraint to $\Gamma^\cv{ispec}$:
\[
\forall a. \ex{OldAg}(a) \land \ex{FreshAg}(a) \limp \false
\]
Removal of an agent is instead simply modelled as:
\[
\actname{remAg}(\pname{a}) :
\left\{ \hasSpec(\pname{a},s) \rightsquigarrow \del
\left\{
\begin{array}{@{}l@{}}
\agent(\pname{a}),\\
\hasSpec(\pname{a},s)
\end{array}
\right\}
\right\}
\]
Update rules that employ these special actions obviously depend on the
domain, by including specific on-send and on-receive rules in $\instspec$.



\subsection{Well-Formed Specifications}
In an RMAS, every piece of
information is typed. This immediately calls for a suitable notion of
\emph{well-formedness} that checks the compatibility of types in all agent
specifications. Intuitively, an RMAS $\sys$ is well-formed if:
\begin{inparaenum}[(1)]
\item every query appearing in $\sys$ consistently use variables, that
  is, if a variable appears multiple components, they all have the
  same data type;
\item every proactive rule instantiates the message payload with
  compatible data objects, and the destination agent with an agent name;
\item every reactive rule correctly relates the data types of the
  message payload with those of the query and of the update action;
\item each action effect uses parameters in a compatible way with the action type;
\item each action effect instantiates the facts in the head in a
  compatible way with their types;
\item each service call correctly binds its inputs and output.
\end{inparaenum}

We now formalize this intuition.
Let $\dfset$ be a set facets, and $\schema$ be a $\dfset$-typed
database schema. Let $Q$ be a FO query over $\schema$ and
$\drel{\dfset}$ that uses only constants in $\cdom{\dfset}$. We say
that $Q$ is \emph{$\schema$-compatible} if:
\begin{inparaenum}[\it (i)]
\item whenever a data object from $\cdom{\dfset}$ appears in 
  component $R[i]$ inside $Q$, then it belongs to $\type{R}{i}{\schema}$;
\item whenever the same variable $x$ appears in two components $R_1[i_1]$ and
  $R_2[i_2]$, then $\type{R_1}{i_1}{\schema} = \type{R_2}{i_2}{\schema} $.
\end{inparaenum}

By definition of compatibility, each free variable of a
$\schema$-compatible query is associated to a single facet/data
type. This allows us to characterize the ``output types'' of a query,
that is, the types associated to its free variables (and hence also
the types of its answer components). Given an $\dfset$-typed
database schema $\schema$ and  a well-formed FO query $Q(\vec{x})$ over $\schema$ and
$\drel{\dfset}$ that uses only constants in $\cdom{\dfset}$, the
\emph{output-type of $x_i \in \vec{x}$ according to $Q$}, written $\outtypec{x_i}{Q}$, is the unique data type
in $\dfset$ to which $x_i$ is associated by $Q$,  where $\dtset$
is the set of data types on which $\dfset$ is defined. We extend the
notion of output-type to a tuple of variables $\vec{x}' =
\tup{x_{i},\ldots,x_{k}} \subseteq \vec{x}$ with $1 \leq i \leq k
\leq n$, writing $\outtypec{\vec{x}'}{Q}$ as a shortcut for the tuple
$\tup{\outtypec{x_i}{Q},\ldots,\outtypec{x_k}{Q}}$. We also write
$\outtype{Q}$, as a shortcut for $\outtypec{\vec{x}}{Q}$. Notice that,
when applied to an atomic query, this notion corresponds exactly to
the typing of the corresponding relation, according to its schema.

Given an RMAS $\sys = \rmas$ and an agent specification
$\N = \tup{\name,\schema,\constr,\idb,\crule,\uact,\urule}$ in $\agset \cup
\set{\instspec}$, we say that:
\begin{compactitem}
\item $\crule$ is \emph{well-formed} if each of its communicative
  rules $Q(t,\vec{x})
~\ena~ M(\vec{x}) ~\toa~ t$
is such that 
\begin{inparaenum}[\it (i)]
\item  $\outtypec{t}{Q} = \mathbb{A}$ (i.e., $Q$ binds $t$ to an
agent name), and
\item $\outtype{Q} = \typer{M}{\evset}$ (i.e., the payload is
instantiated by $Q$ in a compatible way with the types of message $M$).
\end{inparaenum}
\item $\uact$ is \emph{well-formed} if each of its actions is
  well-formed. We say in turn that action $\tup{\typed{\alpha},\alpha_{spec}}$
  is \emph{well-formed} if every effect $\map{Q(\vec{\pname{p}},\vec{x})}{\add~A,\del~D}$
in $\alpha_{spec}$ is such that:
\begin{compactitem}
\item $Q$ is $\schema$-compatible.
\item Whenever a parameter $\pname{p}$ is mentioned in $Q$, the type
  to which $\pname{p}$ is assigned by $\outtype{Q}$ is the same to
  which $\pname{p}$ is assigned by $\typed{\alpha}$.
\item For every $n$-ary typed relation $\typed{R} \in \schema$, every fact $F$ of
  $R$ appearing in $D$, and for each $i \in \set{1,\ldots,n}$:
\begin{inparaenum}[\it (i)]
\item if the $i$-th position of $F$ contains a data object, then such data
  object belongs to the domain of $\type{R}{i}{\schema}$;
\item  if the $i$-th position of $F$ contains a variable $y \in
  \vec{x}$, then $\outtypec{y}{Q} = \type{R}{i}{\schema}$.
\end{inparaenum}
\item For every $n$-ary typed relation $\typed{R} \in \schema$, every fact $F$ of
  $R$ appearing in $A$, and for each $i \in \set{1,\ldots,n}$:
\begin{inparaenum}[\it (i)]
\item if the $i$-th position of $F$ contains a data object, then such data
  object belongs to the domain of $\type{R}{i}{\schema}$;
\item  if the $i$-th position of $F$ contains a variable $y \in
  \vec{x}$, then $\outtypec{y}{Q} = \type{R}{i}{\schema}$;
\item if the $i$-th position of $F$ contains a $k$-ary service call
  $f(\vec{y})$ with $\vec{y} \subseteq \vec{x}$ and
  $\tup{f/k,\dfset^{in},\df^{out}} \in \scset$, then
  $\outtypec{\vec{y}}{Q} = \dfset^{in}$ and $\df^{out} = \type{R}{i}{\schema}$.
\end{inparaenum}
\end{compactitem}
\item $\urule$ is \emph{well-formed} if all its update rules
  are well-formed. We discuss the case of on-send rules - the
  definition of well-formedness is identical for on-receive rules. An
  on-send rule in $\urule$ of the form
$\onev~ M(\vec{x})~\toa~t~\ifc~ Q(\vec{y}_1) ~\thendo~
  \alpha(\vec{y}_2)$, with $\vec{y}_1 \cup \vec{y}_2\subseteq \vec{x}
  \cup \set{t}$, is \emph{well-formed} if the following conditions hold:
\begin{inparaenum}[\it (i)]
\item if $t \in \vec{y_1}$, then $\outtypec{t}{Q} = \mathbb{A}$;
\item if $t$ appears in the $i$-th component of $\alpha$, then
  $\typed{\alpha}$ assigns type $\mathbb{A}$ to its $i$-th parameter;
\item for each variable $x \in \vec{x} \cap \vec{y}_1$, such that $x$
  appears in the $i$-th component of $M$, we have that
  $\type{M}{i}{\evset} = \outtypec{x}{Q}$;
\item for each variable $x \in \vec{x} \cap \vec{y}_2$, such that $x$
  appears in the $i$-th component of $M$ and in the $j$-th component
  of $\alpha$, we have that $\typed{\alpha}$ assigns type 
  $\type{M}{i}{\evset}$ to its $j$-th parameter.\end{inparaenum}
\item $\N$ itself is \emph{well-formed} if $\crule$, $\uact$ and
  $\urule$ are all well-formed.
\end{compactitem}
Finally, we say that the entire RMAS $\sys$ is \emph{well-formed} if
all agents specifications in $\agset \cup \set{\instspec}$ are well-formed.

 It is easy
to see that checking whether an RMAS is well-formed requires linear
time in the size of the specification.

From now on, we always assume that RMASs are well-formed.
It is important to notice that well-formedness does not guarantee that
the restrictions imposed by facets are always satisfied, but only that
the agent specification consistently use data types. Consistency with
facets is managed at runtime, by dynamically handling facet violations
(cf.~Section~\ref{sec:ts}).


\section{Modeling with RMAS}
\label{sec:example}

We briefly show how RMASs can be easily accommodate complex data-aware
interaction protocols, leveraging on data types. We take inspiration from
ticket-based mutual exclusion protocols \cite{BuGP99,BaKa08}. This can be used,
in our setting, to guarantee the possibility for an agent to engage in a
complex, critical interaction with the institutional agent.

Another interesting example, namely how to model a form of
\emph{contract net protocol} in RMASs, is provided in Section~\ref{sec:cn}. The interested reader
can also refer to \cite{MoCD14} for commitment-based interactions.

From now on, we assume that interaction in RMAS is synchronous. This
assumption is without loss of generality, since message queues for
asynchronous communication can be modelled as special typed relations
in the agent databases:

\begin{theorem}
\label{thm:asynch}
Asynchronous RMASs based on message queues can be simulated by
synchronous RMASs.
\end{theorem}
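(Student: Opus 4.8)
The plan is to show that from any asynchronous RMAS $\sys$ we can effectively build a synchronous RMAS $\sys'$ whose induced transition system is behaviorally equivalent to that of $\sys$, by \emph{internalizing} the message buffers into the agent databases as additional typed relations. Concretely, for each agent specification in $\agset$ (and for the institutional agent $\instspec$) I would extend its schema $\schema$ with a dedicated buffer relation $\buff$ (one per message schema $M \in \evset$) whose components store, for every in-flight message, the sender, the payload, and a positional tag used to recover the arrival order. An asynchronous communication is then split into two synchronous steps: a \emph{deposit} step and a \emph{delivery} step.

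First I would encode the deposit step. Whenever the original asynchronous sender fires a communicative rule $Q(t,\vec{x}) ~\ena~ M(\vec{x}) ~\toa~ t$, in $\sys'$ the same rule is kept, but the receiver-side reaction is redirected: instead of applying the original on-receive update action, the on-receive rule of $t$ for $M$ merely enqueues the tuple $(s,\vec{x})$ into its buffer relation together with its positional tag, leaving the rest of the receiver state untouched; the sender-side on-send reaction is retained verbatim, since in the asynchronous semantics the sender updates immediately. The delivery step is modelled as a \emph{self-message}: a fresh communicative rule lets an agent send a designated message to itself precisely when its buffer is non-empty, and the associated on-receive rule pops the head of the queue (the minimal positional tag) and runs the \emph{original} on-receive update action on the recovered payload. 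To realise the FIFO discipline without appealing to the successor relation --- whose presence would reintroduce undecidability --- I would tag enqueued messages with elements of a dense order, obtaining a fresh tag strictly above the current maximum via a service call constrained by the order facet; being at the head of the queue is then the order-minimality test, which is first-order expressible.

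Next I would argue correctness by exhibiting a bisimulation between the transition system of $\sys$ and that of $\sys'$. A configuration of $\sys$ is related to a configuration of $\sys'$ when the two agree on all original relations and the in-flight message multiset of $\sys$ is exactly the content encoded by the buffer relations of $\sys'$, modulo the tagging. Each asynchronous send of $\sys$ is matched by a single deposit step of $\sys'$, and each asynchronous receive by a single delivery step; the intermediate synchronous states introduced by the split are handled as stuttering, which the branching-time $\mu$-calculus variant tolerates once the delivery self-messages are marked as internal and filtered by relativizing formulae to the observable events. One then checks that the construction preserves well-formedness (the buffer relations and the self-message are typed compatibly with $\evset$) and, crucially, that it preserves state-boundedness: if every queue of $\sys$ is bounded in each state, then the buffer relations of $\sys'$ store boundedly many tuples per state, so decidability of verification transfers from $\sys'$ to $\sys$.

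The hard part will be faithfulness of the queue semantics under the proactive, nondeterministic scheduling of RMASs: since agents act by choosing among enabled communicative rules, I must ensure that deposit and delivery steps interleave with ordinary steps without creating runs absent from the asynchronous system --- in particular that a delivery cannot be pre-empted or reordered so as to violate FIFO, and that the self-message machinery does not leak observable transitions that would break the bisimulation. Aligning the stuttering and internal-step bookkeeping with the quantification across states of the logic, while keeping every auxiliary relation and tag within the active domain so that state-boundedness is retained, is the delicate point; the remaining steps are routine relational bookkeeping.
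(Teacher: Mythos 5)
Your construction coincides with the paper's in every structural respect: buffer relations internalized in the agent schemas, a deposit step in which the receiver's on-receive rule merely enqueues sender and payload, a delivery step realized as a self-message (the paper's $\mathit{nextM}$) whose on-receive rule pops the order-minimal tag and runs the original update action, dense-order tags precisely to avoid the successor relation, and the observation that state-boundedness transfers only under bounded queues. There is, however, one step that fails as written: you propose to obtain a fresh tag ``strictly above the current maximum via a service call constrained by the order facet''. Facets in this framework are monadic formulas $\varphi(x)$ built from the data type's rigid relations, the single variable $x$, and constants in $\tdom{\dt}$; they are evaluated against the data type alone and cannot refer to the agent's current database, so no facet can express ``greater than every tag currently stored in $\buff$''. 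An output facet can only carve out a fixed subset of $\mathbb{R}$ (or $\mathbb{Q}$), and the nondeterministic service semantics will happily return a value below the existing tags, silently breaking the FIFO discipline your delivery step relies on.

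The paper closes exactly this hole with accessory machinery rather than facets: two unary relations $\newm$ and $\oldm$ record, respectively, the identifier just returned by the 0-ary service $\scname{getRN}$ and the identifiers already present in the buffer, and the database constraint
\[
\forall id_n, id_o.\ \newm(id_n) \land \oldm(id_o) \limp id_o < id_n
\]
is added to $\constr_s$. Monotonicity is thus enforced \emph{a posteriori} by the execution semantics: any run in which the service returns a non-maximal value violates the constraint and is rolled back (lines \ref{alg:commita} and \ref{alg:commitb} of Figure~\ref{alg:ts}), so only valid fresh tags survive --- the same idiom the paper uses to guarantee freshness of agent names in $\actname{newAg}$. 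Once you replace your facet-based freshness with this constraint-plus-rollback mechanism, the rest of your argument goes through; indeed your matching (deposit to emission, delivery to processing) is step-for-step the correspondence the paper asserts after projecting away $\buff$, $\newm$ and $\oldm$, so the stuttering and internal-event bookkeeping you flag as the delicate point is not actually needed. Two harmless deviations remain: the paper uses a single wide buffer relation with boolean type-flags rather than one buffer per message schema, and it additionally treats unordered (multiset) buffers, which your FIFO-only argument omits but which the theorem statement does not strictly require.
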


\begin{proof}
We consider a form of reliable, asynchronous communication based on
message buffers. In particular, the model works as follows:
\begin{compactitem}
\item Messages sent by an agent to itself are processed immediately
  (in fact, there is no effective communication in this case).
\item Whenever a sender agent emits a message with payload targeting
  another agent, the message is atomically inserted into a message
  buffer attached to the target agent.
\item The target agent asynchronously reacts to the message by
  extracting it from the buffer (this could happen much later).
\item We consider two variations of this general model: one in which
  the buffer is ordered (i.e., it is a queue), and one in which the buffer
  is just a set of messages. Both models are interesting, because they reflect
  different assumptions on the asynchronous communication model. In fact, the
  first guarantees that the order in which messages are processed by
  the target follows the order in which messages where emitted
  (possibly by different agents). We call this communication model
  \emph{asynchronous, ordered} (AO for short), and use acronym AO-RMAS
  for an RMAS
  adopting the AO communication model.
Contrariwise, the second model
  accommodates the situation in which the order in which messages are
  received (i.e., processed) by a target agent does not necessarily
  reflect the order in which such messages were emitted. We call this communication model
  \emph{asynchronous, disordered} (AD for short), and use acronym AD-RMAS
  for an RMAS
  adopting the AD communication model.
\end{compactitem}
We prove that these asynchronous communication models can be both
accommodated by a synchronous RMAS that employs accessory data
structures in the agent schemas, specifically tailored to buffer
messages and decouple the
emission of a message from its processing by the target agent.

Given an AD-RMAS/AO-RMAS $\sys = \rmas$, we convert it into a standard,
synchronous RMAS $\sys_s = \tup{\dtset, \dfset, \cdom{\dfset},\scset_s, \evset_s,
    \agset_s,\instspec_s}$, where $\evset_s$ and $\scset_s$ just
  extend $\evset$ and $\scset$
  with an additional message/service as illustrated below, and where each agent specification $\N = \tup{\name,\schema,\constr,\idb,\crule,\uact,\urule}$ in $\agset \cup
\set{\instspec}$ becomes a corresponding agent specification $\N_s = \tup{\name,\schema_s,\constr_s,\idb,\crule_s,\uact_s,\urule_s}$ in $\agset_s \cup
\set{\instspec_s}$, according to the translation
mechanism illustrated in the following (notice that we are interested here in the correctness of the
  encoding, not in its efficient implementation; effective ways of
  realizing the translation can be provided by using this encoding as
  a basis).

Let us first focus on the database schema of the agent
specification. We set $\schema_s = \schema \cup \set{\typed{\buff},\typed{\newm},\typed{\oldm}}$, where $\typed{\buff}$ is a  global buffer tracking incoming messages that have been received
  by the agent but still needs to be (asynchronously) processed, while
  $\typed{\newm}$ and $\typed{\oldm}$ are unary accessory relations used
  to manage the generation of new identifiers for messages to be
  enqueued. The management of such identifiers closely resembles name
  management as discussed for the institutional agent. 

Specifically,
  $\typed{\buff}$ contains a numeric primary key, and internalizes the
  payload schemas of all message relations in
  $\evset$, plus an additional component to track the sender agent,
  and a boolean component indicating whether the payload has a valid content. A tuple
  in $\typed{\buff}$ contains a message identifier and sets exactly
  one of such boolean components to $\true$, leaving the others 
  $\false$. This indicates what is the type of the buffered message, and
  that the corresponding payload/sender components contain the actual
  message payload and sender agents, whereas all other payload/sender components contain meaningless
  values. For this latter aspect, we assume, without loss of
  generality, that all data types are equipped with an $\cv{undef}$ined data
  object. 

Technically, we fix an ordering over $\evset$, that is, a
  bijection 
\[msg: \set{1,\ldots,|\evset|} \longrightarrow \evset \]
and fix the function $index = msg^{-1} $.
We set the arity of $\typed{\buff}$ to $1+\sum_{i=1}^{|\evset|} (2 + a_i)$,
where $a_i$ is the arity of relation $msg(i)$. We make use of the
following three specific types:
\begin{inparaenum}[\it (i)]
\item  the $Bool$ facet (cf.~Example~\ref{ex:bool}),
\item the $RF$
facet, defined as $\tup{\tup{\mathbb{R},\set{=,<}},\true}$, and
\item the facet $AF$ for agent names.
\end{inparaenum}
 Specifically,
 we type component $\buff[1]$ (the relation primary key) with
$RF$. For each $i \in  \set{1,\ldots,|\evset|}$, we
type component $\buff\left[2+\sum_{j=1}^{i-1} (2 + a_j)\right]$ with
$Bool$, and component $\buff\left[3+\sum_{j=1}^{i-1} (2 + a_j)\right]$
with $AF$, where $a_i$ is the arity of $msg(i)$. Furthermore, for each $i \in
\set{1,\ldots,|\evset|}$ and for every $k \in \set{1,\ldots,a_i}$
($a_i$ being the arity of $msg(i)$), we set the type of component
$\buff\left[3+\sum_{j=1}^{i-1} (2 + a_j) + k\right]$ to be the same as
the type of component $msg(i)[k]$.

Unary relations $\typed{\newm}$ and $\typed{\oldm}$ are respectively
used to store newly created or already existing message
identifiers. Their unique component is consequently typed with
$RF$.

Let us now consider the database constraints. We set $\constr_s =
\constr \cup \set{\Phi_{msgId}}$, where $\Phi_{msgId}$ is a
constraint ensuring that new message identifiers do not clash with
already existing identifiers, and whose specific shape depend on
whether the original RMAS is asynchronous ordered or unordered. In
particular:
\begin{compactitem}
\item if  $\sys$ is an AD-RMAS, then $\Phi_{newMsg}$ is
\[
 \forall id_n,id_o. \newm(id_n) \land \oldm(id_o) \limp id_o \neq id_n
\]
(where $id_o \neq id_n$ is an abbreviation for $\neg (id_o = id_n)$).
\item if  $\sys$ is an AO-RMAS, then $\Phi_{newMsg}$ is
\[
 \forall id_n,id_o. \newm(id_n) \land \oldm(id_o) \limp id_o < id_n
\]
In fact, for an ordered RMAS, a newly created message must be enqueued
after all pending messages that were enqueued before. 
\end{compactitem}

We now focus on the behavior of $\N_s$, that is, on how the rules of
$\N$ are translated into corresponding rules in $\N_s$ so as to
simulate the asynchronous communication model on top of a synchronous
communication model. Since asynchronous communication requires to decouple the
emission of a message from the reaction of the target agent,
$\urule_s$ only maintains the on-send rules of $\urule$, replacing the
on-receive rules with other on-receive rules. This new on-receive
rules are organized in two groups. The first group of rules is just
used to insert message received from other agents into the buffer. In particular, for each $i \in \set{1,\ldots,|\evset|}$,
$\urule_s$ contains a rule of the form
\[
\onev~ M_i(\vec{x})~\froma~s~\ifc~ \neg \myname(s) ~\thendo~
  \actname{buffer}_{M_i}(\vec{x},s)
\]
where $M_i$ is the name of relation $msg(i)$, and
$\actname{buff}_{M_i}$ is a specific update action in $\uact_s$,
dedicated to insert the payload and sender agent of a message $M_i$
into the buffer. In particular,
$\actname{buff}_{M_i}(\vec{\pname{x}},\pname{s})$ is defined as: 
\[
\left\{
\begin{array}{@{}l@{}}
\oldm(m)  \rightsquigarrow \del\set{\oldm(m)}\\
\newm(m)  \rightsquigarrow \del\set{\newm(m)}\\
\true  \rightsquigarrow  \add 
\left\{
\begin{array}{@{}l@{}}
\newm(\scname{getRN}()),\\
\buff(\scname{getRN}(),\ldots,\underbrace{``\cv{t}"}_{\makebox[0pt]{\scriptsize
    i-th component}},\pname{p},\vec{\pname{x}},\ldots)\\
\end{array}
\right\}\\
\buff(m,\_,\ldots,\_)  \rightsquigarrow \add \set{\oldm(m)}
\end{array}
\!\!\right\}
\]
where $\typed{\scname{getRN}}$ is a service that returns a $RF$ data
object, and in the addition of the tuple
$\buff(\scname{getRN}(),\ldots,``\cv{t}",\pname{p},\vec{\pname{x}},\ldots)$,
attributes $``\cv{t}",\pname{p},\vec{\pname{x}}$ are inserted in those
positions corresponding to the boolean component, sender agent
component, and payload components dedicated to $msg(i)$, while all the
other boolean components are set to $``\cv{f}"$, and all remaining
components are set to $\cv{undef}$.

The processing of a buffered message is triggered by a special
communicative rule that is contained in $\crule_s$ together with all
the original rules in $\crule$. The purpose of the communicative rule
is to extract a message from the buffer, triggering the agent to
process it whenever the original specification contained on-receive
rules dedicated to this. This is done by self-sending a message $\mathit{nextM}$
Specifically:
\begin{compactitem}
\item If  $\sys$ is an AD-RMAS, the message extraction rule is:
\[
\begin{array}{l@{~}l}
\myname(a)& \\{}\land ~\buff(m,\_,\ldots,\_)&\ena~ nextM(m) ~\toa~ a
\end{array}
\]
Indeed, for a disordered RMAS, the order in which messages are
received is non-deterministic. This rule mimics such a
nondeterminism, since the agent nondeterministically picks one of the
buffered messages. 
\item If  $\sys$ is an AO-RMAS, the message extraction rule is:
\[
\begin{array}{l}
\myname(a) \land \buff(m,\_,\ldots,\_) \\
\land \neg (\exists m_2. \buff(m_2,\_,\ldots,\_)
\land m_2 < m)  \\ \ena~ nextM(m) ~\toa~ a
\end{array}
\]
Indeed, for an ordered RMAS, messages are deterministically received
according to the order in which they have been sent. This rule mimics
such a determinism by following a FIFO policy, picking the first message
in the queue. Recall that, for AO-RMAS, whenever a new message is
inserted into the queue, its primary key is greater than the primary
keys of already enqueued messages. 
\end{compactitem}
The last dimension to be covered is the agent reaction to a message to
be processed. This is done by suitably reformulating the original on-receive
rules present in $\urule$. Specifically, for each on-receive rule
\[\onev~ M(\vec{x})~\froma~s~\ifc~ Q(\vec{y}_1) ~\thendo~
  \alpha(\vec{y}_2)\]
 in $\urule$, with $\vec{y}_1 \cup \vec{y}_2\subseteq \vec{x}
  \cup \set{s}$, $\urule_s$ contains a corresponding on-receive rule
  (which, by construction of $\sys_s$, is triggered only by the agent itself)
\[
\small
\begin{array}{@{}l@{}}
\onev~ \mathit{nextM}(m) ~\froma~a\\
\ifc~
\myname(a)
\land
\Phi_{M}(m,\vec{y}_1,\vec{y}_2)
\land Q(\vec{y}_1)
~\thendo~
\alpha(m,\vec{y}_2)
\end{array}
\]
where $\Phi_{M}(m,s,\vec{x})$ is a query that:
\begin{inparaenum}[\it (i)]
\item checks whether
the identifier $m$ points to a tuple in the buffer that actually
refers to a message of type $M$ (this can be done by checking whether
the boolean component in position $\mathit{index}(M)$ is set to $``\cv{t}"$);
\item if so, extracts the sender of message $m$, and its payload $\vec{x}$.
\end{inparaenum}
Technically, the query is simply formulated as:
\[
\Phi_{M}(m,s,\vec{x}) =
\buff(m,\_,\ldots,\underbrace{``\cv{t}"}_{\makebox[0pt]{\scriptsize$\mathit{index}(M)$-th component}},s,\vec{x},\ldots)
\]
A final, additional update rule that always triggers when a
$\mathit{nextM}$ message is received is needed to properly update the
buffer, by removing the processed message:
\[
\onev~ \mathit{nextM}(m) ~\froma~a~
\ifc~
\myname(a)
~\thendo~
\actname{removeM}(m)
\]
where:
\[
\small
\actname{removeM}(\pname{m}) :
\set{
\buff(\pname{m},\vec{x}) \rightsquigarrow \del\set{\buff(\pname{m},\vec{x})}
}
\]
By putting everything together, if we project away the accessory
relations $\buff$, $\oldm$ and $\newm$, we obtain that the asynchronous execution semantics of $\sys$
under both the ordered and disordered assumption exactly corresponds to
that of $\sys_s$ under the standard synchronous semantics, as
precisely defined in Figure~\ref{alg:ts}.
\end{proof}
The proof of Theorem~\ref{thm:asynch} already gives a glimpse about
the modelling power of RMASs equipped with ordered types. We next
discuss how these features can be exploited to easily capture mutual
exclusion protocols based on tickets.

\subsection{Ticket-Based Mutual Exclusion Protocols}
The idea behind ticket-based mutual exclusion protocols is that, when
a process wants to access a critical section, it
must get a ticket, and wait until its turn arrives. We model tickets
using the base facet  $RF =
\tup{\tup{\mathbb{R},\set{<,=}},\true}$ for real numbers, and exploit the
domain-specific relation $<$ to compare agent tickets. In our
formulation, the critical section consists of a (possibly complex) interaction
with the $\inst$, excluding the possibility for other
agents to concurrently engage in the
same kind of interaction with $\inst$.

We focus on the realization of $\inst$, in such a way that mutual
exclusion is guaranteed no matter how the other agents behave.
First of all, $\inst$ gives top priority to handle ticket
requests by the agents. A ticket request is issued by another agent
using a 0-ary message $\ev{askTicket}$. Agent $\inst$ reacts by
invoking a ticket generation action, provided that the sender agent is
not already owner of a ticket, and the $\ex{Assigned}$ relation is
empty (see below):
\[
\begin{array}{@{}l@{}}
\onev\ \ev{askTicket}() \ \froma\ a \\
\ifc\ \neg  \ex{HasTicket}(a,\_) \land \neg \ex{Assigned}(\_,\_) \ \thendo\ \actname{genTicket}(a)
\end{array}
\]
Action $\actname{genTicket}$ takes as input an agent name, and uses
a typed service $\typed{\scname{getTicket}} =
  \tup{\scname{getTicket}/0,\emptyset,RF}$ to get a
  numerical ticket. The result is stored in the temporary relation
  $\ex{Assigned}$, tracing that the ticket has been assigned
  but the corresponding agent still needs to be informed.
\[
\actname{genTicket}(\pname{a}): \{ \true \rightsquigarrow \add\set{\ex{Assigned(\pname{a},\scname{getTicket}())}}\}
\]
To guarantee that every agent will have the possibility of engaging
the critical interaction with $\inst$, every time a ticket is assigned
to an agent, $\inst$ must ensure that such agent will be served
\emph{after} those already possessing a ticket. This is enforced
through the following database constraint, which leverages on the
domain-specific relation $>$ for tickets:
\[
\forall t_{new},t. \ex{Assigned}(\_,t_{new}) \land
\ex{HasTicket}(\_,t) \limp t_{new} > t
\]
An assigned ticket must be sent to the requestor agent:
\[
\ex{Assigned}(t,a) \ \ena\ \ev{giveTicket}(t) \ \toa\ a
\]
to which $\inst$ itself reacts by moving the tuple from the temporary
relation $\ex{Assigned}$ to $\ex{hasTicket}$:
\[
\begin{array}{l}
\onev\ \ev{giveTicket}(t) \ \toa\ a
\ \ifc\ \true \ \thendo\ \actname{bindTicket}(a,t)\\[4pt]
\actname{bindTicket}(\pname{a},\pname{t}):
\left\{
\begin{array}{@{}r@{~}c@{~}l@{}l@{}}
\true &\rightsquigarrow& \del&\set{\ex{Assigned}(\pname{a},\pname{t})}\\
\true & \rightsquigarrow& \add&\set{\ex{hasTicket}(\pname{a},\pname{t})}
\end{array}
\right\}
\end{array}
\]
Now, let $\ev{cMsg}$ be a critical message.
To engage in the critical interaction with
$\inst$ triggered by message $\ev{cMsg}$, the agent provides the
payload and the ticket. Agent $\inst$ positively react to the request
provided that the ticket indeed corresponds to the agent, and that the
ticket is now to be served (i.e., it is smaller than any other
ticket):
\[
\small
\begin{array}{l}
\onev\ \ev{cMsg}(\vec{p},t) \ \froma\ a
\\ \ifc\ \ex{hasTicket}(a,t) \land \neg (\exists
a',t'. \ex{hasTicket}(a',t') \land  t > t')\\ \thendo\ \actname{cAct}(a,\vec{p})
\end{array}
\]
This pattern can be replicated for any other critical
interaction. Additional, state relations can be added to discipline
the orderings among critical message exchanges.


\subsection{Contract Net}
\label{sec:cn}
We show how the classical contract net protocol \cite{Smi80} can be easily
accommodated in our framework. This can be considered as an example of
a ``price-based'' protocol, and therefore indirectly shows how
different kinds of auctions could be modelled as well, as, e.g., done
in \cite{Bela14}.

An RMAS that incorporates the contract net protocol contains two agent
specifications (that can be obviously enriched and extended on a
per-domain basis): the specification of an \emph{initiator} agent, and
the specification of a \emph{participant} agent. The first
specification is embodied by an agent that is interested in delegating
the execution of a task to another agent, so as to achieve a desired
goal. The second specification is embodied by agents that have the
capabilities and the interest in executing the task, provided that
they get back a reward.  


The system employs the following FIPA-like messages:
\begin{compactitem}
\item $\typed{\ex{cfp}}(SF)$ (from the initiator to
  participants) -- a
  call-for-proposal related to the execution of the provided task (for
  simplicity, we use strings to represent tasks, and we assume that
  the task name is used also as a \emph{conversation identifier});
\item $\typed{\ex{propose}}(SF,PF)$ (from a participant to the initiator), with $PF$ as in
Example~\ref{ex:price}  -- a proposal to execute the task indicated in
the first parameter, for the price indicated in the second parameter;
\item $\typed{\ex{reject}}(SF)$ (from the
  initiator to a participant) -- rejection of
  all proposals for the specified task;
\item $\typed{\ex{accept}}(SF,PF)$ (from the
  initiator to a participant) -- acceptance of
  a proposal;
\item  $\typed{\ex{inform}}(SF)$ (from a
  participant to the initiator) -- notification that the task has been executed.
\item  $\typed{\ex{failure}}(SF)$ (from a
  participant to the initiator) -- notification that the task
  execution failed.
\end{compactitem}

Let us focus on the realization of the protocol from the point of view
of $\inst$, which acts as the initiator. We first introduce the
relations used by $\inst$ to run the protocol:
\begin{compactitem}
\item $\typed{\agent}(AF)$ lists the (names of) agents known to the
  initiator agent; if the initiator agent is $\inst$, then it already
  holds all agents present in the system, otherwise the initiator
  agent can engage in a preliminary interaction with $\inst$ and/or
  other agents to collect such names.
\item $\typed{\ex{Task}}(SF,\ex{StateF})$ lists the task names that
  the initiator agent is interested to assign, i.e., those that can
  become the subject of an instance of the contract net protocol. $\ex{StateF} =
  \tup{\tup{\mathbb{S},\set{=}},x = \str{todo} \lor x = \str{assigned}
    \lor x = \str{done}}$ is an enumerative facet used to
  track the state of each task -- the three states are self-explanatory.
\item $\typed{\ex{Contacted}}(AF,SF)$ lists those agents that have
  been already contacted for a given task.
\item $\typed{\ex{PropPrice}}(AF,SF,PF)$ lists those agents that
  answered to a proposal with a certain price.
\item $\typed{\ex{AssignedTo}}(AF,SF,PF)$ lists those tasks that have
  been assigned to an agent for a given price. 
\end{compactitem}

We have now all the ingredients to model the behavioral rules of the
initiator agent. First of all, the initiator agent can issue a
call-for-proposal for any task in the $\str{todo}$ state, directed
towards an eligible agent. This is captured by the communicative rule:
\[
\begin{array}{@{}l@{}}
\ex{Task}(t,\str{todo}) \land \agent(a) \\
{} \land \Phi_{sui}(a,t) \land \neg \ex{Contacted}(a,t)
~\ena~ \ex{cfp}(t) ~\toa~ a
\end{array}
\]
where $\Phi_{sui}(a,t)$ is a boolean query that checks whether $a$ is
a suitable agent for executing $t$, and that does so by possibly involving
additional relations maintained by the initiator agent for this
specific purpose. An agent is considered eligible if it is suitable
and has not been already contacted for the selected task.

The initiator agent reacts to this message by indicating that agent
$a$ has been contacted for task $t$:
\[
\onev~\ex{cfp}(t) ~\toa~ a~\ifc~
\true
~\thendo~
\actname{markContacted}(a,t)
\]
where
\[
\actname{markContacted}(\pname{a},\pname{t})
:
\big\{
\true \rightsquigarrow \add\set{\ex{Contacted}(\pname{a},\pname{t})}
\!\big\}
\]

When a proposer agent sends back a proposal, the initiator agent
stores it into the $\typed{\ex{PropPrice}}$ relation:
\[
\onev~\ex{propose}(t,p) ~\froma~s~\ifc~
\true
~\thendo~
\actname{setProposal}(s,t,p)
\]
where
\[
\actname{setProposal}(\pname{s},\pname{t},\pname{p})
:
\big\{
\true \rightsquigarrow \add\set{\ex{PropPrice}(\pname{s},\pname{t},\pname{p})}
\!\big\}
\]
Notice that this formalization seamlessly enables the same agent to make different
proposals for the same task, but can be easily modified so as to
account for the situation where only one proposal per agent can be accepted.

The presence of at least one registered proposal enables the
initiator to assign the task to some agent, provided that such an
agent made the best proposal, i.e., that with the lowest price. Notice
that the initiator is free to choose \emph{when} to accept, and can
decide to contact further agents before actually selecting the best
proposal. 
\[
\begin{array}{@{}l@{}}
\ex{PropPrice}(a,t,p) \land 
\neg (\exists p_2. \ex{PropPrice}(\_,t,p_2) \land p_2 < p )
\\\ena~ \ex{accept}(t,p) ~\toa~ a
\end{array}
\]
When the initiator decides to actually accept the best offer, it
reacts by tracking to which agent the task has been assigned (and with
wich price), taking also care of properly updating the task state, as
well as to clean the $\ex{PropPrice}$ relation. This is done through
two different rules. The task assignment is handled by rule
\[
\onev~\ex{accept}(t,p) ~\toa~ a~\ifc~
\true
~\thendo~
\actname{markAssigned}(a,t,p)
\]
where $\actname{markAssigned}(\pname{a},\pname{t},\pname{p})$
:
\[
\left\{
\begin{array}{@{}l@{\ }c@{\ }l@{\ }l@{}}
\true & \rightsquigarrow &
\add& \set{\ex{AssignedTo}(\pname{a},\pname{t},\pname{p})}\\
\ex{PropPrice}(\pname{a},\pname{t},p_a) & \rightsquigarrow &
\del& \set{\ex{PropPrice}(\pname{a},\pname{t},p_a)}
\end{array}
\right\}
\]
The task state update is instead managed by rule
\[
\onev~\ex{accept}(t,p) ~\toa~ a~\ifc~
\true
~\thendo~
\actname{setState}(t,\str{assigned})
\]
where $\actname{setState}(\pname{t},\pname{state})$ is a generic
state-update action formalized as follows:
\[
\left\{
\begin{array}{@{}l@{\ }c@{\ }l@{\ }l@{}}
\ex{Task}(\pname{t},\pname{oldstate}) & \rightsquigarrow &
\del&\set{\ex{Task}(\pname{t},\pname{oldstate})}\\
&&\add&\set{\ex{Task}(\pname{t},\pname{state})}\\
\end{array}
\right\}
\]

The acceptance of an offer enables the initiator to send a rejection
to all the agents that made an offer but were not selected:
\[
\begin{array}{@{}l@{}}
\ex{PropPrice}(a,t,\_) \land 
\neg (\ex{AssignedTo}(a,t,\_))
\\\ena~ \ex{reject}(t) ~\toa~ a
\end{array}
\]
To track that a rejection has been sent, the initiator reacts to the rejection
message by removing all proposals registered for the corresponding
agent and task:
\[
\onev~\ex{reject}(t) ~\toa~ a~\ifc~
\true
~\thendo~
\actname{remProps}(a,t)
\]
where $\actname{remProps}(\pname{a},\pname{t})$
:
\[
\big\{
\ex{PropPrice}(\pname{a},\pname{t},p)
\rightsquigarrow
\del\set{\ex{PropPrice}(\pname{a},\pname{t},p)
}
\big\}
\]
Finally, an assigned task is marked as $\str{done}$ whenever the
corresponding agent informs the initiator that the task has been
executed, or brought back to the $\str{todo}$ state if the agent
signals a failure. These two cases are respectively handled by the two
on-receive rules
\[
\begin{array}{@{}l@{}}
\onev~\ex{inform}(t) ~\froma~ a
\\\ifc~
\ex{AssignedTo}(a,t,\_)
~\thendo~
\actname{setState}(t,\str{done})
\end{array}
\]
and
\[
\begin{array}{@{}l@{}}
\onev~\ex{inform}(t) ~\froma~ a
\\
\ifc~
\ex{AssignedTo}(a,t,\_)
~\thendo~
\actname{setState}(t,\str{todo})
\end{array}
\]
which reuse the action $\actname{setState}$ as defined above. The case
of a failure allows the initiator agent to restart a contract net
protocol for the non-executed task.


\section{Verification}
We now focus on the verification of RMASs against rich first-order
temporal properties.
\label{sec:ts}
The execution semantics of RMAS $\X = \rmas$ is captured
by a \emph{relational transition system} $\ts_\X =
\tup{\ddom{\dtset},\schema_\X, \Sigma, \state_0,\db,\trans}$, where:
\begin{inparaenum}[\it (i)]
\item $\schema_\X$ is the union of typed schemas in the
  specifications of $\agset$ and $\instspec$;
\item $\Sigma$ is a possibly infinite sets of \emph{states};
\item $\state_0 \in \Sigma$ is the \emph{initial state};
\item $\db$ is a function that, given a state $\state \in \Sigma$ and
  the name $\cv{n}$ of an agent active in $\state$, returns the database of $\cv{n}$ in state $\state$,
  written $\state.\db(\cv{n})$, which must be
  $\schema^{\cv{spec}_\cv{n}}$-conformant, where $\cv{spec}_\cv{n}$
  is the name of $\cv{n}$specification adopted by $\cv{n}$.
\item $\trans \subseteq \Sigma \times \Sigma$ is a transition relation
  between states.
\end{inparaenum}

\newcommand{\enaset}{\mathit{EMsg}}
\newcommand{\aset}{\mathit{ACT}}
\newcommand{\newag}{\mathit{NewAS}}
\newcommand{\curag}{\mathit{CurAS}}
\newcommand{\dbvar}{\mathit{DB}}
\newcommand{\comm}{\mathit{Commit}}
\newcommand{\callsvar}{\mathit{SCalls}}

\newcommand{\toadd}{\mathit{ToAdd}}
\newcommand{\toaddsk}{\mathit{ToAdd}_{s}}
\newcommand{\torem}{\mathit{ToDel}}
\algrenewcommand\algorithmicindent{1em}
\renewcommand{\gets}{:=}

\begin{figure*}[!t]
\begin{algorithmic}[1]
\small
\Procedure{build-ts}{$\sys$}
\\\textbf{input:} RMAS $\sys = \rmas$, \textbf{output:} Transition system  $\Upsilon_\X=\tup{\ddom{\dt},\Sigma, \state_0,\trans}$
\State $AS_{0} \gets \set{\tup{\cv{n},\cv{spec}_\cv{n}} \mid
  \hasSpec(\cv{n},\cv{spec}_\cv{n}) \in \idb^\inst}$ \Comment{Initial
  agents with their specifications}
\ForAll{$\tup{\cv{n},\cv{spec}_\cv{n}} \in AS_{0}$}
 $\state_0.\db(\cv{n}) \gets \idb^{\cv{spec}_\cv{n}}$
\Comment{Specify the initial state by extracting the initial DBs from
  the agent specs}
\EndFor
\State $\Sigma \gets \{\state_0\}$, $\trans \gets \emptyset$
\While{true}
\State \textbf{pick} $\state \in \Sigma$
\Comment{Nondepickterministically pick a state}
\State $\curag \gets \set{\tup{\cv{n},\cv{spec}_\cv{n}} \mid
  \hasSpec(\cv{a},\cv{spec}_\cv{n}) \in \state.\db(\inst)}$
\Comment{Get currently active agents with their specifications}
\State \textbf{pick} $\tup{\cv{a},\cv{spec}_\cv{a}} \in \curag$\Comment{Nondeterministically
  pick an active agent $\cv{a}$, elected as ``sender''}


\State $\enaset \gets \Call{get-msgs}{\cspec^{\cv{spec}_\cv{a}},s.\db(\cv{a}),\curag}$ \Comment{Get the enabled
  messages with target agents}

\If{$\enaset \neq \emptyset$}
\State \textbf{pick} $\tup{M(\vec{\cv{o}}),\cv{b}} \in
\enaset$, with $\tup{\cv{b},\cv{spec}_\cv{b}} \in \curag$
\Comment{Pick a message+target agent and trigger message
  exchange and reactions}
\State $\aset_\cv{a} \gets \emptyset$, $\aset_\cv{b} \gets \emptyset$
\Comment{Get the actions with actual parameters to be applied by
  $\cv{a}$ and $\cv{b}$}
\ForAll{matching on-send rules $``\onev~ M(\vec{x})~\toa~ t~\ifc~ Q(t,\vec{x}) ~\thendo~
  \alpha(t,\vec{x})"$ in $\urule^{\cv{spec}_\cv{a}}$} 
\If{$\ans{Q(\cv{b},\vec{\cv{o}}},\state.\db(\cv{a}))$ and
  $\alpha(\cv{b},\vec{\cv{o}})$ conforms to $\typed{\alpha} \in \uact^\cv{a}$} 
$\aset_\cv{a} \gets \aset_\cv{a} \cup \alpha(\cv{b},\vec{\cv{o}})$
\label{alg:acta}
\EndIf
\EndFor
\ForAll{matching on-receive rules $``\onev~ M(\vec{x})~\froma~ s~\ifc~ Q(s,\vec{x}) ~\thendo~
  \alpha(s,\vec{x})"$  in $\urule^{\cv{spec}_\cv{b}}$} 
\If{$\ans{Q(\cv{a},\vec{\cv{o}}},\state.\db(\cv{b}))$ and
  $\alpha(\cv{a},\vec{\cv{o}})$ conforms to $\typed{\alpha} \in \uact^\cv{b}$} 
$\aset_\cv{b} \gets \aset_\cv{b} \cup \alpha(\cv{a},\vec{\cv{o}})$
\label{alg:actb}
\EndIf
\EndFor
\State $\tup{\torem^\cv{a},\toaddsk^\cv{a}} \gets
\Call{get-facts}{\sys,s.\db(\cv{a}),\aset_{\cv{a}}}$, 
 $\tup{\torem^\cv{b},\toaddsk^\cv{b}} \gets
\Call{get-facts}{\sys,s.\db(\cv{b}),\aset_{\cv{b}}}$
\State $\dbvar_s^{\cv{a}} \gets (s.\db(\cv{a}) \setminus
\torem^{\cv{a}}) \cup \toaddsk^{\cv{a}}$ \Comment{Calculate new
  $\cv{a}$'s DB, still with service calls to be issued}
\State $\dbvar_s^{\cv{b}} \gets (s.\db(\cv{b}) \setminus
\torem^{\cv{b}}) \cup \toaddsk^{\cv{b}}$ \Comment{Calculate new
  $\cv{b}$'s DB, still with service calls to be issued}
\If{for each $\scname{f}(\vec{\cv{o}}) \in \calls{\dbvar_s^{\cv{a}}
    \cup \dbvar_s^{\cv{b}}} $ with  $\typed{\scname{f}} = \tup{\scname{f}/n,\dfset^{in},\df^{out}}
  \in \scset$,  $\vec{\cv{o}}$ conforms to $\dfset^{in}$}
\Comment{Check service input types}
\label{alg:scin}
\State \textbf{pick }$ \sigma \in \left\{\theta \mid
\begin{array}{@{}l@{}}
\mathit{(i)\ } \theta \text{ is a total function, } 
\mathit{(ii)\ } \theta:\callsvar  \rightarrow
\ddom{\dtset},
\mathit{(iii)\ } \text{for each } \scname{f}(\vec{\cv{o}}), 
\scname{f}(\vec{\cv{o}})\theta \text{ conforms to } \df^{out}
\end{array}
\right\}
$
\label{alg:scout}
\label{alg:sc}


\State $\dbvar^{\cv{a}}_{\mathit{cand}} \gets \dbvar_s^{\cv{a}}\sigma, \dbvar^{\cv{b}}_{\mathit{cand}}
\gets \dbvar_s^{\cv{b}}\sigma$
\Comment{Obtain new candidate DBs by substituting service calls with
  results}
\If{$\dbvar^{\cv{a}}_{\mathit{cand}}$ conforms to
  $\schema^\cv{a}$) $\land$ ($\dbvar^{\cv{a}}_{\mathit{cand}}$ satisfies
  $\constr^{\cv{a}}$)}
$\dbvar_{\cv{a}} \gets \dbvar^{\cv{a}}_{\mathit{cand}}$
\Comment{Update $\cv{a}$'s DB}
\label{alg:commita}
\Else\ 
$\dbvar_{\cv{a}} \gets \state.\db(\cv{a})$
\Comment{Rollback $\cv{a}$'s DB}
\EndIf
\If{$\dbvar^{\cv{b}}_{\mathit{cand}}$ conforms to
  $\schema^\cv{b}$) $\land$ ($\dbvar^{\cv{b}}_{\mathit{cand}}$ satisfies
  $\constr^{\cv{b}}$)}
$\dbvar_{\cv{b}} \gets \dbvar^{\cv{b}}_{\mathit{cand}}$
\Comment{Update $\cv{b}$'s DB}
\label{alg:commitb}
\Else\ 
$\dbvar_{\cv{b}} \gets \state.\db(\cv{b})$
\Comment{Rollback $\cv{b}$'s DB}
\EndIf
\State \textbf{pick} fresh state $\state'$ \Comment{Create new state}
\State $\newag \gets \emptyset$ \Comment{Determine the (possibly
  changed)   set of active agents and their specs}
\If{$\cv{a} = \cv{\inst}$}
 $\newag \gets \set{\tup{\cv{n},\cv{spec}_\cv{n}} \mid \hasSpec(\cv{n},\cv{spec}_\cv{n}) \in \dbvar_{\cv{a}}}$
\ElsIf{$\cv{b} = \cv{\inst}$}
 $\newag \gets \set{\tup{\cv{n},\cv{spec}_\cv{n}} \mid \hasSpec(\cv{n},\cv{spec}_\cv{n}) \in \dbvar_{\cv{b}}}$
\Else\ $\newag \gets \curag$
\Comment{No change if $\inst$ is not involved in the interaction or
  must reject the update}
\EndIf
\ForAll{$\tup{\cv{n},\cv{spec}_\cv{n}} \in \newag$} \Comment{Do the
  update for each active agent}
\If{$\cv{n} = \cv{a}$}
  $\state'.\db(\cv{n}) \gets \dbvar_{\cv{a}}$ 
  \Comment{Case of sender agent} 
\ElsIf{$\cv{n} = \cv{b}$} 
  $\state'.\db(\cv{n}) \gets \dbvar_{\cv{b}}$
\Comment{Case of target agent}
\ElsIf{$\cv{n} \not\in \curag$} 
\Comment{Case of newly created agent}
\State $\state'.\db(\cv{n}) \gets
\idb^{\cv{spec}_\cv{n}} \cup \set{\myname(\cv{n})}$
\Comment{$\cv{n}$'s initial DB gets the initial data fixed by its
  specification, plus its name}
\Else\ $\state'.\db(\cv{n}) \gets \state.\db(\cv{n})$
\Comment{Default case: persisting agent not affected by the interaction}
\EndIf
\EndFor
\If{ $\exists \state'' \in \Sigma$ s.t.~$\state''.\db(\inst) =
  \state'.\db(\inst)$ and for each $\tup{\cv{n},\_} \in \curag$,
  $\state''.\db(\cv{n}) = \state'.\db(\cv{n})$ }
\State $\trans \gets \trans \cup \tup{\state,\state''}$
\Comment{State already exists: connect $\state$ to that state}
\Else
\ $\Sigma \gets \Sigma \cup \set{\state'}$, 
$\trans \gets \trans \cup \tup{\state,\state'}$
\Comment{Add and connect new state}
\EndIf
\EndIf
\EndIf

\EndWhile
\EndProcedure

\Function{get-msgs}{$\cspec$,$\dbvar$,$\curag$}
\Comment{Evaluate communicative rules $\cspec$ on DB $\dbvar$, and
  return the enabled messages with targets}
\State $\enaset \gets \emptyset$
\ForAll{communicative rules $``Q(t,\vec{x})
~\ena~ M(\vec{x}) ~\toa~ t"$ in $\cspec$}
\ForAll{$\subst \in \ans{Q,\dbvar}$} \Comment{$\subst$
  provides an actual payload and target agent}
\label{alg:msg-init}
\If{$t\subst \in \set{\cv{n} \mid \tup{\cv{n},\_} \in \curag}$ and
  $M(\vec{x})\subst$ conforms to $\typed{M} \in \evset$} 
\Comment{$\subst$ is well-typed and has an active agent as target}
\State $\enaset \gets \enaset \cup
\tup{M(\vec{x},t)\subst,t\subst}$ \Comment{Add the ground event
  and target agent to the set of enabled events}
\label{alg:msg-end}
\EndIf 
\EndFor
\EndFor
\State \textbf{return} $\enaset$
\EndFunction

\Function{get-facts}{$\X$, $\dbvar$, $\aset$}
\Comment{Applies actions $\aset$ on DB $\dbvar$, and returns 
  facts to be added and deleted}
\State $\toaddsk \gets \emptyset$; $\torem \gets \emptyset$ 
\Comment{$\toaddsk$: facts with
  embedded service calls, to be added; $\torem$: facts to be deleted}
\ForAll{instantiated actions $\alpha(\vec{\cv{v}}) \in \aset$}
\ForAll{effects $``\map{Q(\vec{p},\vec{x})}{\add~A,~\del~D}"$
  in the definition of $\alpha$}
\ForAll{$\theta \in \ans{Q(\vec{\cv{v}},\vec{x}), D}$} \Comment{Get an
    answer from the left-hand side}
\State $\toaddsk \gets \toaddsk \cup A\theta[\vec{p}/\vec{\cv{v}}]$
\Comment{Get facts to add (with embedded service calls)} 
\State $\torem \gets \torem \cup D\theta[\vec{p}/\vec{\cv{v}}]$
\Comment{Get facts to delete}
\EndFor
\EndFor
\EndFor
\State \textbf{return} $\tup{\torem,\toaddsk}$
\Comment{Recall: facts to be added still contain service calls - to be
substituted with actual results}
\EndFunction

\end{algorithmic}
\vspace{-.3cm}
\caption{Procedure for constructing a transition system describing the
execution semantics of an RMAS; given a set $F$ of facts, $\calls{F}$
returns the ground service calls contained in $F$}\label{alg:ts}
\end{figure*}


The full $\ts_\X$ construction starting from the initial state is
  given in Figure~\ref{alg:ts}. We report the main steps in the following. The initial state
$\state_0$ is constructed by assigning $\state_0.\db(\inst)$ to the
initial database instance $\idb^\cv{ispec}$ of $\instspec$, and the initial
database of each agent mentioned in  $\idb^\cv{ispec}$ taking from
its specification. The construction then proceeds by mutual induction
over $\Sigma$ and $\trans$, repeating the following steps forever:
\begin{inparaenum}[(1)]
\item A state $\state$ is picked from $\Sigma$.
\item An active agent $\cv{a}$ is nondeterministically picked
  selecting its name from $\state.\db(\inst)$.
\item The communicative rules of $\cv{a}$ are evaluated, extracting
  all enabled messages with their ground payloads and destination agents.
\item An enabled messages is nondeterministically picked.
\item The on-send/on-receive rules of the two involved agents are
  triggered, fetching all actions to be applied.
\item The actions are applied over the respective databases. If there
  are service calls involved, they are nondeterminstically substituted
  with resulting data objects, consistently with the service output facets.
\item Each agent updates its own database provided that the database
  resulting from the parallel application of the actions is compatible
  with the schema and satisfies all constraints. Otherwise the old
  database is maintained, so as to model a sort of ``transaction rollback''.
\item If one of the involved agents is $\inst$ and the update leads to
  the introduction of a new agent into the system, it database is
  initialized in accordance to its specification.
\item The global state so obtained is declared to be successor of the
  state picked at step 1.
\end{inparaenum}

Interestingly, $\ts_\X$ is in general \emph{infinite-branching}, because of
the substitution of service calls with their results, and \emph{infinite
runs}, because of the storage of such data objects in time.

\smallskip
\noindent \textbf{The $\mulpd$ Verification Logic.}
To specify sophisticated properties over RMASs we employ the
$\mulpd$ logic. This logic combines the salient features of those
introduced in \cite{BCDDM13} and \cite{MoCD14}.
$\mulpd$ supports the full $\mu$-calculus to predicate over the system
  dynamics. Recall that the $\mu$-calculus is virtually the most
  expressive temporal logics: it subsumes LTL and CTL$^*$. To query
  possibly different agent databases, $\mulpd$ adopts FO queries
  extended with location arguments \cite{MoCD14}, which are dynamically bound to agents.
Furthermore, to track the temporal evolution of data objects, $\mulpd$
adopts a controlled form of FO quantification across time: quantification is limited to those objects
  that \emph{persist} in the system:
%
%
\[
\small
\begin{array}{@{}l@{}}
  \Phi ::=\  Q_\ell \mid \lnot \Phi \mid \Phi_1 \land \Phi_2 \mid \exists
    x. \inadom_\dt(x) \land \Phi \mid Z \mid \mu Z.\Phi \mid{}\\
  \bigwedge_{i \in \set{1,\ldots,n}} \inadom_{\dt_i}(\vec{x_i}) \land
  \DIAM{\Phi}  \mid \bigwedge_{i \in \set{1,\ldots,n}} \inadom_{\dt_i}(\vec{x_i}) \land \BOX{\Phi}  \\
 \end{array}
\]

where $Q_\ell$ is a (possibly open) FO query with location arguments, in which the only
constants that may appear are those in $\cdom{\dfset}$, and $Z$ is a second order
predicate variable (of arity 0).  Furthermore, the following assumption holds:
in the $\DIAM{}$ and $\BOX{}$ cases, the variables $x_1,\ldots,x_n$
are exactly the free variables of $\Phi$, once we substitute to each bounded predicate variable $Z$ in $\Phi$ its bounding formula $\mu Z.\Phi'$.
We  adopt the usual abbreviations, including $\nu Z. \Phi$ for
greatest fixpoints.
Notice that the usage of $\inadom$ can be safely substituted by an
atomic positive query.

The semantics of $\mulpd$ is defined over a relational transition
system similarly to the semantics of $\mu\L_p$ in
\cite{BCDDM13}. The most peculiar aspect is constituted by $Q_\ell$,
which allows one to dynamically inspect the
databases maintained by active agents. In particular, $Q_\ell$ is a
standard (typed) FO query, whose atoms have the form $R(\vec{x})@a$,
where $R$ is a (typed) relation, and $a$ denotes an agent name. The
evaluation of the atomic query $R(\vec{x})@a$ over a relational
transition system $\ts$ with substitution $\theta$ returns those
states $\state$ of $\ts$ such that:
\begin{compactitem}
\item $a\theta$ is an active agent  in $\state$, that is,
  $\agent(a\theta) \in \state.\db(\inst)$;
\item the atomic query $R(\vec{x})\theta$ evaluates to $\true$ in the
  database instance that agent $a\theta$ has in state $\state$, i.e.,
$\ans{R(\vec{x})\theta,\state.\db(a\theta)} \equiv \true$.
\end{compactitem}

\begin{example}
Consider the protocol in Section~\ref{sec:example}, assuming that $\inst$ uses a unary typed
relation $\ex{inCritical}$ to store the agent that is currently
in the critical interaction. Given:
\[\small
\begin{array}[t]{@{}l@{}}
\ex{First}(a) = \exists t. \ex{hasTicket}@\inst(a,t) \land \\\neg
\exists a',t'. \ex{hasTicket}@\inst(a',t') \land a' \neq a \land t' < t,\\[4pt]
\nu Z. (\forall a. \agent@\inst(a) \land \ex{First}(a) \limp \\
\qquad\mu Y. (\ex{inCritical}@\inst(a) \lor (\agent@\inst(a) \land \DIAM{Y}))
\land \BOX{Z}
\end{array}
\]
models that when an
agent is ``first'', there will be a run in which it
persists into the system until it enters the critical
interaction.
\end{example}

\section{Decidability of Verification}
\label{sec:dec}
We now study different aspects of the following \emph{verification problem}: given a closed $\mulpd$
property $\Phi$ and an RMAS $\sys$, check whether $\Phi$ holds over the
relational transition system $\ts_\sys$, written $\ts_\sys
\models \Phi$.
Unsurprisingly, this problem in general is undecidable.
%
In a recent series of works, verification of data-aware dynamic systems has
been studied under the notion of \emph{state-boundedness} \cite{BCDM14}, which,
in the context of RMASs, can be phrased as follows.
An RMAS $\sys$ is \emph{state-bounded} if, for every state $\state$ of
$\ts_\sys$, the number of data objects stored in each agent database
does not exceed a pre-defined bound.

As shown in previous work, state-boundedness still allows one to model systems
that encounter infinitely many different data objects (and, possibly, even
agents) along their runs, provided that they do not accumulate in the same
state. In our setting, this means that infinitely many different agents can
interact, provided that at each time point only a bounded number of them is
active \cite{MoCD14}. Similarly, from Theorem~\ref{thm:asynch} we obtain that
when an RMAS is state-bounded, asynchronous communication can be modelled only
by putting a threshold on the size of each message queue.

\cite{MoCD14} have shown that verification of state-bounded DACMASs is
decidable. We study now how data types impact on this.

\smallskip\noindent
\textbf{Compilation of Facets.~}
%
Facets can be 
eliminated, getting
a \emph{shallow-typed} RMAS, i.e., one using base facets only.


\begin{theorem}
\label{thm:facet}
For every RMAS $\sys$, there exists a corresponding shallow-typed RMAS
$\widehat{\sys}$ such that, for every $\mulpd$ property $\Phi$, we
have $\ts_\sys \models \Phi$ if and only if $\ts_{\widehat{\sys}}
\models \Phi$.
\end{theorem}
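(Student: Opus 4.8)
The plan is to leverage the fact that each facet formula $\varphi(x)$ is an ordinary first-order formula over the rigid relations in $\drel{\dfset}$, and that these relations are already part of the query language used throughout $\sys$. Hence every membership test ``$\cname{d}$ belongs to $\df$'', which $\sys$ performs implicitly through its conformance machinery, can be internalised as an explicit first-order condition inside $\widehat\sys$. Concretely, I would build $\widehat\sys$ from $\sys$ by replacing each facet $\df=\tup{\dt,\varphi(x)}$ occurring as a relation type or a service type with the base facet $\tup{\dt,\true}$ (leaving the data types, and in particular the name types $\mathbb{A},\mathbb{B}$, untouched), so that $\widehat\sys$ is shallow-typed by construction.

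To recover the discarded restrictions on stored data, for every agent specification and every typed relation $\typed R=\tup{R/n,\tup{\df_1,\dots,\df_n}}$ with $\df_i=\tup{\dt_i,\varphi_i(x)}$, I would add to its constraint set $\constr$ the domain-independent formula $\forall \vec x.\,R(\vec x)\limp\bigwedge_{i=1}^n\varphi_i(x_i)$. In any reachable state of $\sys$ every fact conforms to its facets, so it satisfies exactly these formulas; conversely a shallow-typed database satisfying them is facet-conformant. Since, by well-formedness, a service result is always stored into a relation position whose type is precisely the output facet $\df^{out}$, these same constraints also enforce service-output facets; and since the variables feeding a service-input position are, by well-formedness, typed with that same input facet, the input restriction is already guaranteed by the constraints holding in the source database. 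Where a facet instead restricts a message payload or occurs in a guard, I would simply conjoin the corresponding $\varphi_i$ (relativised to the active domain via $\inadom$) to the query of the pertinent communicative or update rule, using that such queries may already mention $\drel{\dfset}$.

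I would then prove $\ts_\sys\models\Phi$ iff $\ts_{\widehat\sys}\models\Phi$ by exhibiting a data-aware bisimulation relating states that carry the same agent databases, arguing by induction along the construction of Figure~\ref{alg:ts}. The heart of the induction is that the commit/rollback test of $\widehat\sys$ (shallow-schema conformance together with the augmented constraints, lines~\ref{alg:commita}--\ref{alg:commitb}) accepts exactly the candidate databases accepted by the facet-conformance-plus-original-constraint test of $\sys$; since $\mulpd$ is invariant under such a bisimulation (as established for $\mu\L_p$ in the cited works), the two sides agree on every $\Phi$.

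The main obstacle is that base facets can no longer discharge the two roles played by the facet machinery around service calls in Figure~\ref{alg:ts}: they neither gate whether a transition is attempted at all through the input check at line~\ref{alg:scin}, nor prune the nondeterministic choice of a service output at line~\ref{alg:scout}. The delicate case is the output: in $\widehat\sys$ the pick at line~\ref{alg:sc} ranges over the whole data type, so an output violating $\varphi^{out}$ is still selected and only later rejected by the new constraint, producing a rollback rather than a non-existent transition. I would therefore have to show that every such rollback transition of $\widehat\sys$ leads to a state bisimilar to one reachable in $\sys$ — leaning on the facts that a rolled-back update leaves the involved database unchanged and that the rejected object is never stored, so that no spurious data object is introduced — which is precisely the step where a bisimulation, rather than a plain state isomorphism, is required.
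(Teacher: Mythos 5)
Your construction coincides with the paper's on everything except the service calls: the paper likewise replaces every facet $\tup{\dt,\varphi(x)}$ by the base facet $\tup{\dt,\true}$, adds to $\widehat{\constr}$ the per-component constraints $\forall x_i.\, R(\_,\ldots,x_i,\ldots,\_) \limp \varphi_i(x_i)$, and conjoins facet formulas to the queries of communicative and update rules so that the filters of lines~\ref{alg:msg-init}--\ref{alg:msg-end} and \ref{alg:acta}--\ref{alg:actb} of Figure~\ref{alg:ts} are reconstructed. For service \emph{outputs} your shortcut is legitimate and leaner than the paper's: well-formedness does require $\df^{out} = \type{R}{i}{\schema}$ for every position hosting a service term, so the relation constraints already reject non-conformant outputs at commit time, making the paper's accessory relation $\ex{Output}_f$ with constraint $\forall x.\,\ex{Output}_f(x) \limp \psi(x)$ redundant in principle.

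The genuine gap is your treatment of service \emph{inputs}. The paper deliberately does not discharge the check of line~\ref{alg:scin} by appeal to well-formedness --- it explicitly warns that well-formedness ``does not guarantee that the restrictions imposed by facets are always satisfied'' and that facet consistency is managed at runtime; accordingly it materializes the input check by adding $\ex{Input}_f(\vec{x})$ facts in every $\add$-set containing a call to $\scname{f}$, together with the constraint $\forall x_1,\ldots,x_n.\,\ex{Input}_f(x_1,\ldots,x_n) \limp \bigwedge_i \varphi_i(x_i)$ and cleanup effects. Your argument covers only variable-fed inputs, where the clause $\outtypec{\vec{y}}{Q} = \dfset^{in}$ applies; but service terms have the form $\scname{f}(\vec{x},\vec{\pname{p}})$, and no well-formedness condition ties the facet of an action parameter $\pname{p}$ to the input facet it feeds (only to the message-payload facet). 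Hence a well-formed $\sys$ can reach a configuration where line~\ref{alg:scin} blocks the entire step --- no transition at all --- while your $\widehat{\sys}$ issues the call and \emph{commits} the resulting database: a transition with no counterpart in $\ts_\sys$, which already breaks the claimed equivalence for reachability properties. You would need the paper's $\ex{Input}_f$ device (or a strengthened well-formedness notion) here. A secondary imprecision: your closing obligation --- that each spurious rollback transition of $\widehat{\sys}$ reaches ``a state bisimilar to one reachable in $\sys$'' --- is weaker than what $\mulpd$ invariance demands, since formulas with $\DIAM{\Phi}$ and $\BOX{\Phi}$ require every such transition to be matched by a transition of $\sys$ \emph{from the bisimilar source state} (a rollback self-loop at a state from which every $\sys$-transition strictly changes some database is not matched); in fairness, the paper's own proof glosses this same rollback-versus-pruning point behind its ``project away the accessory relations'' remark, so on that step you are no less rigorous than the original, merely more candid about where the difficulty sits.
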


\begin{proof}
Let $\sys = \rmas$. We construct $\widehat{\sys} =  
\tup{\dtset, \widehat{\dtset}, \cdom{\dfset},\widehat{\scset}, \widehat{\evset},
    \widehat{\agset},\widehat{\instspec}}$ as follows:
\begin{compactitem}
\item $\widehat{\dtset}$ is the set of base facets constructed
  starting from the types in $\dtset$.
\item $\widehat{\scset}$ and $\widehat{\evset}$,
    are obtained from
    $\scset$ and $\evset$ by substituting the facet attached to each 
    component with the corresponding base facet: whenever a
    component is originally typed with facet $\tup{\dt,\varphi(x)} \in
    \dfset$, the corresponding component is typed with the base facet
    $\tup{\dt,\true} \in \widehat{\dtset}$.
\item $\widehat{\scset}$ and $\widehat{\evset}$,
    are obtained from
    $\scset$ and $\evset$ by substituting the facet attached to each 
    component with the corresponding base facet: whenever a
    component is originally typed with facet $\tup{\dt,\varphi(x)} \in
    \dfset$, the corresponding component is typed with the base facet
    $\tup{\dt,\true} \in \widehat{\dtset}$.
  \item Each agent specification $\N = \tup{\name,\schema,\constr,\idb,\crule,\uact,\urule}$ in $\agset \cup
\set{\instspec}$ becomes a corresponding agent specification $\widehat{\N} = \tup{\name,\widehat{\schema},\widehat{\constr},\idb,\widehat{\crule},\widehat{\uact},\widehat{\urule}}$ in $\widehat{\agset} \cup
\set{\widehat{\instspec}}$. The database schema $\widehat{\schema}$
transforms $\schema$ similarly to how $\widehat{\scset}$ and
$\widehat{\evset}$ transform ${\scset}$ and ${\evset}$: for every
$n$-ary typed relation $\typed{R} \in \schema$, a corresponding
$n$-ary relation $\typed{R}$ is included in $\widehat{\schema}$, such
that, for every $i \in \set{1,\ldots,n}$, $\type{R'}{i}{\widehat{\schema}} =
\tup{T,\true}$ if and only if $\type{R}{i}{\schema} =
\tup{T,\varphi(x)}$. In addition, for every typed service call
$\typed{f}(\tup{T_1,\varphi_1(x)},\ldots,\tup{T_n,\varphi_n(x)})$ in $\scset$,
$\widehat{\schema}$ contains a relation
$\typed{\mathit{Input}_f}(\tup{T_1,\varphi_1(x)},\ldots,\tup{T_n,\varphi_n(x)})$,
whose use is explained below.

The other elements of $\widehat{\N}$ ensure that the type checks of $\N$ are properly recreated in
the form of special queries and constraints. In particular:
\begin{compactitem}
\item For every communicative rule
``$Q(t,\vec{x})$ $\ena$ $M(\vec{x})$ $\toa$ $t$''
in $\crule$, with $|\vec{x}| = n$, $\widehat{\crule}$ contains the corresponding rule
\[
Q(t,\vec{x}) \land \hspace*{-1cm}\bigwedge_{i \in \set{1,\ldots,n},
  \tup{T_i,\varphi_i(x)} = \type{M}{i}{\evset}} \hspace*{-.5cm}
\varphi(x_i) 
~\ena~M(\vec{x})
\]
This guarantees that the filter criterion applied on lines \ref{alg:msg-init}-\ref{alg:msg-end} of
Figure~\ref{alg:ts} is properly reconstructed, so that $\sys$ and
$\widehat{\sys}$ produce the same sets of enabled messages.
\item A similar approach is applied to the update rules in $\urule$,
  incorporating into each condition the facet expressions of the facets
  attached to the corresponding action components, in such a way that the filter
  criterion applied on lines \ref{alg:acta} and \ref{alg:actb} of Figure~\ref{alg:ts} is
  properly reconstructed. This ensures that $\sys$ and
$\widehat{\sys}$ produce the same sets of instantiated actions.
\item Actions $\uact$ need to be translated by ensuring that the types
  of relations in $\schema$ and those of the service call input/outputs in
  $\scset$ are properly checked. The typing of relation components is
  guaranteed by augmenting the set $\constr$ of
  constraints. Specifically, beside
  all the original  constraints in $\constr$,
for each $n$-ary typed relation $\typed{R}$
  in $\schema$ and every $i \in \set{1,\ldots,n}$, we insert into
  $\widehat{\constr}$ a dedicated constraint
\[
\forall
  x_i. R(\_,\ldots,x_i,\ldots,\_) \limp \varphi_i(x_i)
\]
where $\varphi_i$ is the facet formula of $\type{R}{i}{\schema}$. This
technique guarantees that $\sys$ and $\widehat{\sys}$ equivalently
evaluate the conditions on lines \ref{alg:commita} and  \ref{alg:commitb} of Figure~\ref{alg:ts}
($\widehat{\sys}$ always satisfies the conformance test, and lifts
the original conformance test of $\sys$ as a test on the
satisfaction of database constraints, expressed in the second conjunct
of lines  \ref{alg:commita} and  \ref{alg:commitb}). Finally, the tests expressed
on lines \ref{alg:scin} and \ref{alg:scout} of Figure~\ref{alg:ts},
which respectively check whether the service calls involved in an
action application have inputs and outpus conforming to their
respective facets, is reformulated using the technique illustrated in the following. For every action
$\typed{\alpha} \in \uact$, $\widehat{\uact}$ contains an action
$\typed{\alpha}'$, constructed by properly manipulating the set of
facts in the $\add$-set. Specifically, for each effect
``$\map{Q(\vec{\pname{p}},\vec{x})}{\add~A,\del~D} $'' in the
specification of $\alpha$, $\alpha'$ contains a corresponding effect
``$\map{Q(\vec{\pname{p}},\vec{x})}{\add~A',\del~D}$'', where:
\[
\small
\begin{array}{@{}l@{}l@{}}
A' = A \cup 
& \set{\ex{Input}_f(\vec{x}) | F \in A \text{ and }
  \scname{f}(\vec{x}) \text{ appears in } F}\\
& \set{\ex{Output}_f(\scname{f}(\vec{x})) | F \in A \text{ and }
  \scname{f}(\vec{x}) \text{ appears in } F}
\end{array}
\]
Intuitively, $A'$ adds a fact for relation $\ex{Input}_f/n$ and a fact
for relation $\ex{Output}_f/1$ for every $n$-ary service call
$\scname{f}$ appearing in $A$, in such a way that the contect of these two facts
respectively correspond to the input and output of $\scname{f}$. Since it is not important that such
facts are persisted in the agent database, but it is only important
that they are present after the action is applied, the specification
of each action in $\widehat{\uact}$ also contains the following
effects:
\[
\big\{\ex{Input}_{f_i}(\vec{x}) \rightsquigarrow
  \del\set{\ex{Input}_{f_i}(\vec{x})}
~\big|~ \typed{f_i} \in \scset
\big\}
\]
The conformance with the service input facets can then be reformulated
similarly to the case of relations in $\schema$, that is, by further
augmenting the set $\widehat{\constr}$ of constraints. Specifically,
for each $n$-ary service call $\typed{\scname{f}} = \tup{\scname{f}/n,\dfset^{in},\df^{out}}$
  in $\scset$, we insert two
  dedicated constraints in
  $\widehat{\constr}$:
\begin{compactenum}
\item by denoting with $\varphi_i$ the facet formula of the $i$-th component of $\dfset^{in}$,
\[\forall
  x_1,\ldots,x_n. \ex{Input}_f(x_1,\ldots,x_n) \limp \bigwedge_{i \in
    \set{1,\ldots,n}}\varphi_i(x_i)\]
\item by denotwing with $\psi$ the facet formula of $\df^{out}$,
\[\forall
  x. \ex{Output}_f(x) \limp \psi(x)\]
\end{compactenum} This
mechanism lifts the checks applied for $\sys$ on lines \ref{alg:scin}
and \ref{alg:scout} of Figure~\ref{alg:ts}
(which is trivially true for $\widehat{\sys}$) as additional
constraint checks on lines \ref{alg:commita} and \ref{alg:commitb}, where the satisfaction of
database constraints is tested. 
\end{compactitem}
\end{compactitem}
The translation mechanism ensures that the execution semantics of
$\widehat{\sys}$ suitably reconstructs that of $\sys$, i.e., if we
project away the accessory relations used for the service call inputs,
we have that
$\ts_{\widehat{\sys}}$ is equivalent to $\ts_\sys$. 
\end{proof}
As a consequence of Theorem~\ref{thm:facet}, we have that, for
shallow-typed RMASs, the transition system construction can be
simplified as shown in the \textsc{build-ts-shallow} procedure of
Figure~\ref{alg:ts-sh}.

\begin{figure*}[!t]
\begin{algorithmic}[1]
\small
\Procedure{build-ts-shallow}{$\widehat{\sys}$}
\\\textbf{input:} Shallow-typed RMAS $\widehat{\sys}  = \tup{\dtset, \widehat{\dtset},
  \cdom{\dfset},\widehat{\scset}, \widehat{\evset}}$, \textbf{output:} Transition system  $\Upsilon_\X=\tup{\ddom{\dt},\Sigma, \state_0,\trans}$
\State $AS_{0} \gets \set{\tup{\cv{n},\cv{spec}_\cv{n}} \mid
  \hasSpec(\cv{n},\cv{spec}_\cv{n}) \in \idb^\inst}$ \Comment{Initial
  agents with their specifications}
\ForAll{$\tup{\cv{n},\cv{spec}_\cv{n}} \in AS_{0}$}
 $\state_0.\db(\cv{n}) \gets \idb^{\cv{spec}_\cv{n}}$
\Comment{Specify the initial state by extracting the initial DBs from
  the agent specs}
\EndFor
\State $\Sigma \gets \{\state_0\}$, $\trans \gets \emptyset$
\While{true}
\State \textbf{pick} $\state \in \Sigma$
\Comment{Nondeterministically pick a state}
\State $\curag \gets \set{\tup{\cv{n},\cv{spec}_\cv{n}} \mid
  \hasSpec(\cv{a},\cv{spec}_\cv{n}) \in \state.\db(\inst)}$
\Comment{Get currently active agents with their specifications}
\State \textbf{pick} $\tup{\cv{a},\cv{spec}_\cv{a}} \in \curag$\Comment{Nondeterministically
  pick an active agent $\cv{a}$, elected as ``sender''}


\State $\enaset \gets \Call{get-msgs}{\widehat{\cspec}^{\cv{spec}_\cv{a}},s.\db(\cv{a}),\curag}$ \Comment{Get the enabled messages with target agents}

\If{$\enaset \neq \emptyset$}
\State \textbf{pick} $\tup{M(\vec{\cv{o}}),\cv{b}} \in
\enaset$, with $\tup{\cv{b},\cv{spec}_\cv{b}} \in \curag$
\Comment{Pick a message+target agent and trigger message
  exchange and reactions}
\State $\aset_\cv{a} \gets \emptyset$, $\aset_\cv{b} \gets \emptyset$
\Comment{Get the actions with actual parameters to be applied by
  $\cv{a}$ and $\cv{b}$}
\ForAll{matching on-send rules $``\onev~ M(\vec{x})~\toa~ t~\ifc~ Q(t,\vec{x}) ~\thendo~
  \alpha(t,\vec{x})"$ in $\widehat{\urule}^{\cv{spec}_\cv{a}}$} 
\If{$\ans{Q(\cv{b},\vec{\cv{o}}},\state.\db(\cv{a}))$} 
$\aset_\cv{a} \gets \aset_\cv{a} \cup \alpha(\cv{b},\vec{\cv{o}})$
\label{alg:acta-sh}
\EndIf
\EndFor
\ForAll{matching on-receive rules $``\onev~ M(\vec{x})~\froma~ s~\ifc~ Q(s,\vec{x}) ~\thendo~
  \alpha(s,\vec{x})"$  in $\widehat{\urule}^{\cv{spec}_\cv{b}}$} 
\If{$\ans{Q(\cv{a},\vec{\cv{o}}},\state.\db(\cv{b}))$} 
$\aset_\cv{b} \gets \aset_\cv{b} \cup \alpha(\cv{a},\vec{\cv{o}})$
\label{alg:actb-sh}
\EndIf
\EndFor
\State $\tup{\torem^\cv{a},\toaddsk^\cv{a}} \gets
\Call{get-facts}{\widehat{\sys},s.\db(\cv{a}),\aset_{\cv{a}}}$, 
 $\tup{\torem^\cv{b},\toaddsk^\cv{b}} \gets
\Call{get-facts}{\widehat{\sys},s.\db(\cv{b}),\aset_{\cv{b}}}$
\State $\dbvar_s^{\cv{a}} \gets (s.\db(\cv{a}) \setminus
\torem^{\cv{a}}) \cup \toaddsk^{\cv{a}}$ \Comment{Calculate new
  $\cv{a}$'s DB, still with service calls to be issued}
\State $\dbvar_s^{\cv{b}} \gets (s.\db(\cv{b}) \setminus
\torem^{\cv{b}}) \cup \toaddsk^{\cv{b}}$ \Comment{Calculate new
  $\cv{b}$'s DB, still with service calls to be issued}
\State \textbf{pick }$ \sigma \in \left\{\theta \mid
\begin{array}{@{}l@{}}
\mathit{(i)\ } \theta \text{ is a total function, } 
\mathit{(ii)\ } \theta:\callsvar  \rightarrow
\ddom{\dtset},
\mathit{(iii)\ } \text{for each } \scname{f}(\vec{\cv{o}}), 
\scname{f}(\vec{\cv{o}})\theta \text{ conforms to } \df^{out}
\end{array}
\right\}
$
\label{alg:sc-sh}
\State $\dbvar^{\cv{a}}_{\mathit{cand}} \gets \dbvar_s^{\cv{a}}\sigma, \dbvar^{\cv{b}}_{\mathit{cand}}
\gets \dbvar_s^{\cv{b}}\sigma$
\Comment{Obtain new candidate DBs by substituting service calls with
  results}
\If{$\dbvar^{\cv{a}}_{\mathit{cand}}$ satisfies
  $\widehat{\constr}^{\cv{a}}$}
$\dbvar_{\cv{a}} \gets \dbvar^{\cv{a}}_{\mathit{cand}}$
\Comment{Update $\cv{a}$'s DB}
\label{alg:commita-sh}
\Else\ 
$\dbvar_{\cv{a}} \gets \state.\db(\cv{a})$
\Comment{Rollback $\cv{a}$'s DB}
\EndIf
\If{$\dbvar^{\cv{b}}_{\mathit{cand}}$ satisfies
  $\widehat{\constr}^{\cv{b}}$}
$\dbvar_{\cv{b}} \gets \dbvar^{\cv{b}}_{\mathit{cand}}$
\Comment{Update $\cv{b}$'s DB}
\label{alg:commitb-sh}
\Else\ 
$\dbvar_{\cv{b}} \gets \state.\db(\cv{b})$
\Comment{Rollback $\cv{b}$'s DB}
\EndIf
\State \textbf{pick} fresh state $\state'$ \Comment{Create new state}
\State $\newag \gets \emptyset$ \Comment{Determine the (possibly
  changed)   set of active agents and their specs}
\If{$\cv{a} = \cv{\inst}$}
 $\newag \gets \set{\tup{\cv{n},\cv{spec}_\cv{n}} \mid \hasSpec(\cv{n},\cv{spec}_\cv{n}) \in \dbvar_{\cv{a}}}$
\ElsIf{$\cv{b} = \cv{\inst}$}
 $\newag \gets \set{\tup{\cv{n},\cv{spec}_\cv{n}} \mid \hasSpec(\cv{n},\cv{spec}_\cv{n}) \in \dbvar_{\cv{b}}}$
\Else\ $\newag \gets \curag$
\Comment{No change if $\inst$ is not involved in the interaction or
  must reject the update}
\EndIf
\ForAll{$\tup{\cv{n},\cv{spec}_\cv{n}} \in \newag$} \Comment{Do the
  update for each active agent}
\If{$\cv{n} = \cv{a}$}
  $\state'.\db(\cv{n}) \gets \dbvar_{\cv{a}}$ 
  \Comment{Case of sender agent} 
\ElsIf{$\cv{n} = \cv{b}$} 
  $\state'.\db(\cv{n}) \gets \dbvar_{\cv{b}}$
\Comment{Case of target agent}
\ElsIf{$\cv{n} \not\in \curag$} 
\Comment{Case of newly created agent}
\State $\state'.\db(\cv{n}) \gets
\idb^{\cv{spec}_\cv{n}} \cup \set{\myname(\cv{n})}$
\Comment{$\cv{n}$'s initial DB gets the initial data fixed by its
  specification, plus its name}
\Else\ $\state'.\db(\cv{n}) \gets \state.\db(\cv{n})$
\Comment{Default case: persisting agent not affected by the interaction}
\EndIf
\EndFor
\If{ $\exists \state'' \in \Sigma$ s.t.~$\state''.\db(\inst) =
  \state'.\db(\inst)$ and for each $\tup{\cv{n},\_} \in \curag$,
  $\state''.\db(\cv{n}) = \state'.\db(\cv{n})$ }
\State $\trans \gets \trans \cup \tup{\state,\state''}$
\Comment{State already exists: connect $\state$ to that state}
\Else
\ $\Sigma \gets \Sigma \cup \set{\state'}$, 
$\trans \gets \trans \cup \tup{\state,\state'}$
\Comment{Add and connect new state}
\EndIf
\EndIf

\EndWhile
\EndProcedure
\end{algorithmic}
\vspace{-.3cm}
\caption{Simplification of \textsc{build-ts} dealing with
  shallow-typed RMASs}\label{alg:ts-sh}
\end{figure*}



\subsection{RMASs with the Successor Relation}
We now show that including at least one data type with the successor
relation compromises decidability:

\begin{theorem}
Verification of a propositional reachability property over
state-bounded, shallow-typed RMASs that use a single data type equipped with the
successor relation is
undecidable, even when the RMAS contains a single agent that uses
unary relations only.
\end{theorem}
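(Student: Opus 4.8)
The plan is to reduce the halting problem for two-counter (Minsky) machines, which is undecidable, to the verification problem at hand. Given a two-counter machine $\cm$ with a finite list of labelled instructions of the forms $\inc{c}{j}$, $\cdec{c}{j}{k}$ and $\halt$ operating on counters $c_1,c_2$, I will build a state-bounded, shallow-typed RMAS $\sys$ whose only nontrivial agent is the institutional agent $\inst$, which uses the single data type $\tup{\mathbb{Z},\set{<,=,\succp}}$ of the example above (only $\succp$ and $=$ are actually needed) and only unary relations, in such a way that $\cm$ halts from the all-zero configuration if and only if a distinguished propositional atom $\ex{Halted}$ becomes reachable in $\ts_\sys$. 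Since the latter is expressed by the closed $\mulpd$ property $\mu Z.\,(\ex{Halted} \lor \DIAM{Z})$, undecidability of verification follows at once.

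The heart of the encoding — and exactly where the successor relation buys undecidability while keeping the system state-bounded — is the representation of counter values by \emph{single} data objects. The value $n$ of counter $c_i$ is encoded by placing in a unary relation $C_i$ the unique object that is the $n$-th $\succp$-successor of a fixed control object $\cname{0} \in \cdom{\dfset}$; thus, however large a counter grows, the agent stores only one object per counter. The control location is tracked by a unary relation ranging over the finitely many instruction labels, all of which are control objects in $\cdom{\dfset}$. A self-message loop drives the simulation: since a message an agent sends to itself is processed immediately, $\inst$ repeatedly emits a self-message that triggers the on-send/on-receive rules implementing the current instruction, entirely on its own.

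Each instruction is simulated as follows. A $\halt$ instruction simply asserts $\ex{Halted}$. An $\inc{c_i}{j}$ instruction must replace the object $d$ in $C_i$ by its successor; since that object need not be in the active domain, I inject a fresh candidate through a typed service call and, reusing the $\ex{OldAg}/\ex{FreshAg}$ freshness pattern shown earlier, keep its old and new values in accessory unary relations $\ex{Old}_i,\ex{New}_i$ subject to the database constraint $\forall x,y.\ \ex{Old}_i(x)\land\ex{New}_i(y)\limp\succp(x,y)$; on success $C_i$ is set to the new object and the control moves to label $j$. A $\cdec{c_i}{j}{k}$ instruction first tests $C_i=\cname{0}$, branching to $j$ in that case, and otherwise injects a candidate predecessor constrained by $\succp(y,x)$ in the same way, sets $C_i$ to it, and moves to label $k$. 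By the rollback semantics of the construction in Figure~\ref{alg:ts}, any nondeterministic service output violating the successor constraint is discarded and leaves the configuration unchanged, so among the service choices there is always exactly one branch realising the intended step. Crucially, at every reachable state $\inst$ stores only the two counter objects, the finitely many control objects, and at most one transient object used during a single update, so $\sys$ is state-bounded; and since it uses only base facets and unary relations, it is shallow-typed of the required restricted shape.

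The main obstacle, and the step I would treat most carefully, is reconciling the nondeterministic, rollback-based service mechanism with a faithful \emph{deterministic} simulation, i.e.\ proving soundness and completeness of the reduction. Completeness holds because, for every legal increment or decrement, there is a service choice satisfying the relevant successor constraint, yielding the intended successor configuration; soundness holds because the constraints force any \emph{committed} update to correspond to a legal $\cm$-step, while incorrect service choices can only produce rollbacks that leave the configuration unchanged and hence cannot create spurious halting runs. Assembling these observations, reachability of $\ex{Halted}$ in $\ts_\sys$ is equivalent to halting of $\cm$, which establishes undecidability even under the stated single-agent, unary-relation, state-bounded restrictions.
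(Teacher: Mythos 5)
Your proposal is correct and takes essentially the same route as the paper's own proof: a reduction from two-counter machine halting in which each counter value is represented by a single data object, a self-sent message drives the instruction cycle, $\succp$-constraints relating the old and new counter objects combine with the rollback semantics of the transition-system construction to filter the nondeterministic service-call results, and halting corresponds to reachability of $\ex{Halted}$ expressed as $\mu Z.\,(\ex{Halted} \lor \DIAM{Z})$. The only divergence is cosmetic bookkeeping: the paper stores the previous counter values in relations $C_i^{p}$ and activates the appropriate successor/predecessor constraint via $\ex{Op}$ and $\ex{Target}$ flags, whereas you use dedicated $\ex{Old}_i/\ex{New}_i$ relations with unconditional constraints, which amounts to the same mechanism.
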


\begin{proof}
The proof is by reduction from the halting problem of two-counter
machines. 
A \emph{counter} is a memory
register that stores a (non-negative) integer. Notice that
the proof works in the same way even if we substitute $\mathbb{Z}$
with $\mathbb{Q}$ or $\mathbb{R}$, provided that they are equipped
with the successor relation.

Given two positive integers $n, m \in \mathbb{N}^+$, an
\emph{$m$-counter machine} $\cm$ with counters $c_1,\ldots,c_m$
is a program constituted by a (numbered) sequence of $n$ instructions:
\[
1: \mathit{CMD}_1; \quad 2: \mathit{CMD}_2; \quad \ldots \quad n: \halt;
\]
where the $n$-th instruction indicates that $\cm$ halts, while for
every $k \in \set{1,\ldots,n-1}$, instruction  $k:
\mathit{CMD}_k$ has one of the two following forms:
\begin{compactitem}
\item (\emph{increment command} for counter $i$) $\mathit{CMD}_k =
  \inc{i}{k'}$, with $i \in \set{1,\ldots,m}$ and $k' \in
  \set{1,\ldots,n}$, which increases the counter $c_i$ of one unit, and then
  jumps to instruction number $k'$:
\[
\quad k: c_i:=c_i+1;~\goto{k'};
\]
\item (\emph{conditional decrement command} for counter $i$)
$\mathit{CMD}_k =\cdec{i}{k'}{k''}$, with $i \in \set{1,\ldots,m}$ and $k',k'' \in
  \set{1,\ldots,n}$, which tests
  whether the value of counter $i$ is zero. If so, it jumps to
  instruction $k'$; otherwise, it decreases counter $i$ of one unit,
  and then jumps to instruction $k''$: 
\[k: \begin{array}[t]{rl}
\textbf{if}~c_i == 0 & \textbf{then}~\goto{k'}; \\
& \textbf{else}~\{ c_i:=c_i-1;~\goto{k''}; \}\\
\end{array}
\]
\end{compactitem}
An \emph{input} for an $m$-counter machine is an $m$-tuple of values
$\tup{d_1,\ldots,d_m}$ (such that $d_i \in \mathbb{N}$), used to
initialize its counters. Given an $m$-counter machine $\cm$ and an
input $I$ of size $m$, we say that $\cm$ \emph{halts on input}
$I$ if the execution of $\cm$ with counter initial values set by
$I$ eventually reaches the last, $\halt$ command.

It is well-known that checking whether a 2-counter machine halts on a
given input is undecidable \cite{Min67}, and that undecidability still
holds when checking whether the 2-counter machine halts on input
$\tup{0,0}$.

We show how to encode a 2-counter machine into a state-bounded, shallow-typed
RMAS containing a single agent specification that work over unary
relations only. Specifically, given a 2-counter machine $\cm$ with $n$
instructions, we
construct RMAS $\sys_\cm = \tup{\set{AT,ZT}, \set{AF,ZF}, \set{\cv{0},\ldots,\cv{k}},\set{\typed{\scname{input}}}, \set{\mathit{go}},
    \emptyset,\instspec_\cm}$, where $k = max\set{2,n}$, and:
\begin{compactitem}
\item $AT = \tup{\mathbb{A},\set{=}}$ is the agent type (just used to
  keep track of the $\inst$ name), $ZT =
  \tup{\mathbb{Z},\set{<,=,\succp}}$ is the integer type   (but, as specified above, $\mathbb{Z}$ can be seamlessly substituted
  by $\mathbb{Q}$ or $\mathbb{R}$).
\item $AF$ and $ZF$ are the base facets defined starting from $AT$ and
  $ZT$ respectively.
\item $\typed{\scname{input}} = \tup{\scname{input}/0,\tup{},ZF}$ is a
  0-ary service that returns integer values.
\item $\mathit{go}$ is a message sent by $\inst$ to itself so as to
  trigger the processing of the next instruction.
\item $\instspec_\cm$ is a specification for the institutional agent
  that mimics the program of $\cm$.
\end{compactitem}

Specifically,  $\instspec_\cm =
  \tup{\mathsf{instspec},\schema_\cm,\constr_\cm,\idb^\inst,\crule_\cm,\uact_\cm,\urule_\cm}$,
  where:
\begin{compactitem}
\item $\schema_\cm = \left\{
\begin{array}{@{}l@{}}
\typed{C_1}(ZF),\typed{C_1^{p}}(ZF),
  \typed{C_2}(ZF) ,\typed{C_2^{p}}(ZF),\\\typed{\ex{PC}}(ZF),\typed{\ex{Op}}(ZF),\typed{\ex{Target}}(ZF),\typed{\ex{Halted}}()\\
\typed{\agent}(AF),\typed{\myname}(AF)
\end{array}
\right\}
$ where:
\begin{compactitem}
\item $\typed{C_1}$ and $\typed{C_2}$ store the current values of the two counters, 
\item $\typed{C_1^{p}}$ and $\typed{C_2^{p}}$ store their previous values, 
\item $\typed{\ex{PC}}$ stores the program counter (i.e., the number of the instruction
to be processed),
\item $\typed{\ex{Op}}$ indicates the nature of the 
operator to be applied ($\cv{0}$ means increment, while $\cv{1}$ means decrement),
\item $\typed{\ex{Target}}$ indicates the target counter, that is, the
  counter to which the operation must be applied ($\cv{1}$ means the
  first counter, $\cv{2}$ the second),
\item $\typed{\ex{Halted}}$ is a proposition indicating that the
  agent finished the execution (i.e., reached the last instruction of $\cm$).
\end{compactitem}
\item $\constr_\cm$ contains constraints that encode the semantics of
  operations. In particular:
\begin{compactitem}
\item In the case of increment, the target counter must have a current
  value that is successor of the previous value:
\[
\begin{array}{@{}l@{}}
\ex{Op}(\cv{0}) \land \ex{Target}(\cv{1}) \\
\qquad \limp 
(\forall x_p,x. C_1(x) \land C_1^{p}(x_p) \limp \succp(x,x_p))\\
\ex{Op}(\cv{0}) \land \ex{Target}(\cv{2}) \\
\qquad \limp 
(\forall x_p,x. C_2(x) \land C_2^{p}(x_p) \limp \succp(x,x_p))
\end{array}
\]
\item In the case of decrement, the opposite holds, i.e., the target
  counter must have a current value that is precedessor of the
  previous value:
\[
\begin{array}{@{}l@{}}
\ex{Op}(\cv{1}) \land \ex{Target}(\cv{1}) \\
\qquad \limp 
(\forall x_p,x. C_1(x) \land C_1^{p}(x_p) \limp \succp(x_p,x))\\
\ex{Op}(\cv{1}) \land \ex{Target}(\cv{2}) \\
\qquad \limp 
(\forall x_p,x. C_2(x) \land C_2^{p}(x_p) \limp \succp(x_p,x))
\end{array}
\]
\end{compactitem}
\item The initial database of $\inst$ initializes the two counters to
  $\cv{0}$, and the program counter to the first instruction: 
\[
\small
\begin{array}{@{}l@{}}
\idb^\inst =
\set{\agent(\inst),\myname(\inst),C_1(\cv{0}),C_2(\cv{0}),PC(\cv{1})}
\end{array}
\]
\item $\crule_\cm$ contains just a single rule, which enables $\inst$
  to send a $\ex{go}$ message to itself if it is not halted:
\[
\myname(a) \land \neg \ex{Halted}~\ena~\ex{go}() ~\toa~ a
\]
\item $\uact_\cm$ contains the following actions:
\begin{compactitem}
\item $\typed{\actname{set-pc}}(ZF)$ updates the program
  counter to the value passed as parameter:
\[
\actname{set-pc}(\pname{next}):
\left\{
\begin{array}{@{}r@{~}l@{}l@{}}
\ex{PC}(x) \rightsquigarrow
&\del&\set{\ex{PC}(x)},\\
\true \rightsquigarrow  &\add&\set{\ex{PC}(\pname{next})}
\end{array}
\right\}
\]
\item $\typed{\actname{set-op}}(ZF,ZF)$ sets the operation,
  i.e., the operation type and the target counter, to the passed parameters:
\[
\actname{set-op}(\pname{o},\pname{t}):
\left\{
\begin{array}{@{}r@{~}l@{}l@{}}
\ex{Op}(x) \rightsquigarrow
&\del&\set{\ex{Op}(x)},\\
\ex{Target}(x) \rightsquigarrow
&\del&\set{\ex{Target}(x)},\\
\true \rightsquigarrow  &\add&\set{\ex{Op}(\pname{o})}\\
\true \rightsquigarrow  &\add&\set{\ex{Target}(\pname{t})}\\
\end{array}
\right\}
\]
\item $\typed{\actname{u-c}}(ZF)$ updates the value of the
  counter whose index is passed as parameter, and at the same time
  remembers the current value moving it to the ``previous'' counter relation:
\[
\hspace*{-.6cm}
\actname{u-c}(\pname{c}):
\left\{
\begin{array}{@{}l@{~}c@{~}l@{}l@{}}
\pname{c} = \cv{1} \land C_1^p(x)& \rightsquigarrow
&\del&\set{C_1^p(x)}\\
\pname{c} = \cv{1} \land C_1(x)  
&
\rightsquigarrow
&\del&\set{C_1(x)},
\add\set{C_1^p(x)}\\
\pname{c} = \cv{1} &\rightsquigarrow &\add&\set{C_1(\scname{input}())}\\
\pname{c} = \cv{2} \land C_2^p(x)& \rightsquigarrow
&\del&\set{C_2^p(x)}\\
\pname{c} = \cv{2} \land C_2(x)  
&
\rightsquigarrow
&\del&\set{C_2(x)},
\add\set{C_2^p(x)}\\
\pname{c} = \cv{2} &\rightsquigarrow &\add&\set{C_2(\scname{input}())}\\
\end{array}
\right\}
\]
It is worth noting that the action nondeterministically updates the
content of the first or second counter, depending on the value of the parameter. However, by considering the
constraints modelled in $\constr_\cm$, only the successor state that
has picked exactly the successor or precedessor value of the current
one will be selected, depending on what the current operation is.
\item $\typed{\actname{stop}}()$ is an action without parameters that just
  sets the $\ex{Halted}$ flag to true:
\[
\actname{stop}(): \set{\true \rightsquigarrow \add \set{\ex{Halted}}}
\]
\end{compactitem}
\item $\urule_\cm$ constains a set of rules that mirror the
  instructions of $\cm$, according from the following translation
  schema:
\begin{compactitem}
\item For instruction $k: \inc{i}{k'}$ (with $i \in \set{1,2}$), we
  get:
\[
\begin{array}{@{}l@{}}
\onev~\ex{go}~\ifc~\ex{PC}(\cv{k})~\thendo~
\actname{set-pc}(\cv{k}')\\
\onev~\ex{go}~\ifc~\ex{PC}(\cv{k})~\thendo~
\actname{set-op}(\cv{0},\cv{i})\\
\onev~\ex{go}~\ifc~\ex{PC}(\cv{k})~\thendo~
\actname{u-c}(\cv{i})\\
\end{array}
\]
The first rule handles the update of the program counter. The second
rule indicates that counter $\cv{i}$ must be subject to operation with
code $\cv{0}$. The third rule indicates that the instruction require
to update the content of counter $\cv{i}$.
\item For instruction $k: \cdec{i}{k'}{k''}$ (with $i \in \set{1,2}$), we
  get:
\[
\begin{array}{@{}l@{}}
\onev~\ex{go}~\ifc~\ex{PC}(\cv{k}) \land C_{\cv{i}}(\cv{0})~\thendo~
\actname{set-pc}(\cv{k}')\\
\onev~\ex{go}~\ifc~\ex{PC}(\cv{k}) \land \neg C_{\cv{i}}(\cv{0})~\thendo~
\actname{set-pc}(\cv{k}'')\\
\onev~\ex{go}~\ifc~\ex{PC}(\cv{k}) \land \neg C_{\cv{i}}(\cv{0})~\thendo~
\actname{set-op}(\cv{1},\cv{i})\\
\onev~\ex{go}~\ifc~\ex{PC}(\cv{k}) \land \neg C_{\cv{i}}(\cv{0})~\thendo~
\actname{u-c}(\cv{i})\\
\end{array}
\]
The formalization is specular to the case of increment, with the
proviso that the manipuation of the counter is triggered only if the
counter is not
$\cv{0}$.
\item For instruction $n: \halt$, we simply get:
\[
\onev~\ex{go}~\ifc~\ex{PC}(\cv{n})~\thendo~\actname{halt}()
\]
\end{compactitem}
\end{compactitem}
It is now apparent that $\cm$ halts on input $\tup{0,0}$ if and
only if $\ts_{\sys_\cm} \models \mu Z. (\ex{Halted}) \lor \DIAM{Z}$

\end{proof}


\subsection{Densely-Ordered RMASs}
Given the previous undecidability result, we focus on dense orders. A
\emph{densely-ordered} RMAS only relies on data types equipped with
domain-specific equality $=$ and, possibly, total dense orders, as well as
corresponding facets. For this class of RMASs, we have:

\begin{theorem}
Verification of closed $\mulpd$ properties over state-bounded,
densely-ordered RMASs is decidable, and reducible to conventional,
finite-state model checking.
\end{theorem}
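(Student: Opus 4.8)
By Theorem~\ref{thm:facet} we may assume without loss of generality that $\sys$ is shallow-typed, so that its transition system is the one produced by \textsc{build-ts-shallow} (Figure~\ref{alg:ts-sh}). The plan is to construct a \emph{finite-state} transition system $\fts_\sys$ together with a relation to $\ts_\sys$ that is a bisimulation strong enough to preserve every closed $\mulpd$ property, and then to solve the verification problem by conventional $\mu$-calculus model checking on $\fts_\sys$. The whole argument rests on two facts about densely-ordered RMASs: \emph{genericity} of the construction and \emph{homogeneity} of dense linear orders.

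First I would establish genericity. Every query, constraint, and communicative/update rule of a densely-ordered RMAS mentions only the rigid relations $=$ and $<$ and constants from $\cdom{\dfset}$. Hence the entire construction of Figure~\ref{alg:ts-sh} is invariant under any order-automorphism $h$ of the data domains in $\ddom{\dtset}$ that fixes $\cdom{\dfset}$ pointwise: $h$ sends runs to runs and preserves the answers to all location queries $Q_\ell$. Second I would invoke homogeneity of dense linear orders (Cantor): every order-isomorphism between two finite subsets of $\tup{\mathbb{Q},<}$ that fixes the constants extends to an automorphism of the whole order. Combining the two, any two states of $\ts_\sys$ whose agent databases coincide after renaming their stored objects by a constant-fixing, order-type-preserving bijection are related by a global automorphism, and are therefore bisimilar.

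This suggests the abstraction. An abstract state records, for each active agent, its database in which every stored data object is replaced by its rank in the linear order of the finite set comprising the currently active objects together with the finitely many constants of the specification; equivalently, the dense order $<$ is replaced by a \emph{non-rigid} relation ranging over the active domain only. State-boundedness bounds both the number of objects stored per agent and (applying it to the $\agent$ relation of $\inst$) the number of active agents, so together with the fixed constant set there are only finitely many such order-types, giving a finite abstract state space. The abstract transition relation mirrors Figure~\ref{alg:ts-sh} step by step; the only non-finitary point is the service-call substitution (\textbf{pick}~$\sigma$), where a fresh value is injected. Abstractly, a fresh value is chosen as one of the finitely many order-positions relative to the current active domain: strictly below the minimum, strictly above the maximum, or strictly between two consecutive active values. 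Density guarantees that each such abstract position is concretely realizable, while conversely every concrete fresh value falls into exactly one such position; this is precisely what makes the non-rigid, active-domain-only order faithful.

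The crux---and the step I expect to be the main obstacle---is to prove that the map sending each concrete state to its order-type is a \emph{persistence-preserving} bisimulation, the notion of bisimulation tailored to $\mulpd$. The matching of local conditions (the $Q_\ell$ atoms and the active-domain quantifier $\exists x.\,\inadom_\dt(x)\land\Phi$) is immediate from genericity, and the forth/back conditions for the $\DIAM{}$ and $\BOX{}$ cases follow from the realizability argument above. The delicate part is the \emph{cross-state} quantification: the bisimulation must track the identity of the objects that persist along a transition, so that a persisting concrete witness corresponds to a persisting abstract witness and vice versa. Unlike the pure-equality setting, where fresh objects are fully interchangeable and persistence is tracked by a bounded pool of representatives, here a persisting object also carries an order-position that must remain consistent as the surrounding active domain changes (deletions merge gaps, insertions split them). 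Showing that finitely many order-types nonetheless suffice---that the non-rigid order can be consistently propagated along transitions without ever recording absolute values---is exactly where homogeneity and density are used, and it is the technical heart of the proof. Invariance of $\mulpd$ under this bisimulation is then the standard $\mu$-calculus argument combined with the persistence-preserving treatment of the first-order layer, as in \cite{BCDDM13,BCDM14,MoCD14}. Since $\fts_\sys$ is finite-state and its per-state domain is finite and totally ordered by the non-rigid order, evaluating $Q_\ell$ atoms and active-domain quantifiers becomes finite-model evaluation and the fixpoint layer becomes propositional $\mu$-calculus; hence $\ts_\sys\models\Phi$ iff $\fts_\sys\models\Phi$, which is decidable by conventional finite-state model checking.
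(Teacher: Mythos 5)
Your proposal is correct and rests on the same core insight as the paper---compile away facets via Theorem~\ref{thm:facet}, replace the rigid dense order by a non-rigid order over the active domain plus constants, treat a fresh service-call result as one of the finitely many order positions it can occupy (your gap analysis is exactly the paper's notion of a \emph{densely-ordered commitment}), and derive finiteness from state-boundedness---but your decomposition is genuinely different. The paper builds a three-stage pipeline of intermediate transition systems, each with its own lemma: first a finite-branching system $\pts_{\widehat{\sys}}$ obtained by picking, per type, an equality or densely-ordered commitment and instantiating calls through a fixed representative function $\textsc{assign-res}$ (Lemma~\ref{lem:fb}); then a flattened system in which every rigid comparison $x <_{\dt} y$ is rewritten to an explicit, non-rigid $\lessthan{\dt}$ database relation maintained only for objects in $\adom{\state} \cup \adom{\state'}$ (Lemma~\ref{lem:order}, line~\ref{alg:lt-filter} of Figure~\ref{alg:ts-order}); and finally a finite-state system obtained by \emph{recycling} passive, no-longer-active data objects as the representatives for fresh results (Figure~\ref{alg:ts-rcycl}). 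You instead quotient directly to order-types via genericity plus Cantor homogeneity and posit a single persistence-preserving bisimulation. What the paper's route buys is precisely the point you flag as the ``technical heart'': by recycling \emph{concrete} objects rather than passing to ranks, the identity of persisting objects is preserved automatically across transitions, so the cross-state quantification of $\mulpd$ is handled by the unchanged machinery of \cite{BCDDM12,BCDDM13}, with the order information carried along in the propagated $\lessthan{\dt}$ facts; in your rank abstraction that step is genuinely delicate (ranks of persisting objects shift as insertions split gaps and deletions merge them), and you state rather than prove that finitely many order-types can be consistently propagated. Conversely, your automorphism/homogeneity framing makes explicit the model-theoretic reason the abstraction is sound, which the paper leaves implicit inside the commitment definitions; to close your one thin spot you could either carry out the propagation argument directly or fall back on the paper's concrete-representative construction, which discharges it by design.
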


\newcommand{\curadom}[1]{\mathit{ADom}_s(#1)}

\newcommand{\assignres}[4]{\textsc{assign-res}_{#1}^{#4}(#2,#3)}

\begin{figure*}[!t]
\begin{algorithmic}[1]
\small
\Procedure{build-fb-ts-shallow}{$\widehat{\sys}$}
\\\textbf{input:} Shallow-typed RMAS $\widehat{X}  = \tup{\dtset, \widehat{\dtset},
  \cdom{\dfset},\widehat{\scset}, \widehat{\evset}}$, with $\dtset =
\set{\dt_u^1,\ldots,\dt_u^n} \cup \set{\dt_o^1,\ldots,\dt_o^m}$,
\textbf{output:} TS $\Upsilon_\X=\tup{\ddom{\dt},\Sigma,
  \state_0,\trans}$
\State $AS_{0} \gets \set{\tup{\cv{n},\cv{spec}_\cv{n}} \mid
  \hasSpec(\cv{n},\cv{spec}_\cv{n}) \in \idb^\inst}$ \Comment{Initial
  agents with their specifications}
\ForAll{$\tup{\cv{n},\cv{spec}_\cv{n}} \in AS_{0}$}
 $\state_0.\db(\cv{n}) \gets \idb^{\cv{spec}_\cv{n}}$
\Comment{Specify the initial state by extracting the initial DBs from
  the agent specs}
\EndFor
\State $\Sigma \gets \{\state_0\}$, $\trans \gets \emptyset$
\While{true}
\State \textbf{pick} $\state \in \Sigma$
\Comment{Nondeterministically pick a state}
\State $\curag \gets \set{\tup{\cv{n},\cv{spec}_\cv{n}} \mid
  \hasSpec(\cv{a},\cv{spec}_\cv{n}) \in \state.\db(\inst)}$
\Comment{Get currently active agents with their specifications}
\State \textbf{pick} $\tup{\cv{a},\cv{spec}_\cv{a}} \in \curag$\Comment{Nondeterministically
  pick an active agent $\cv{a}$, elected as ``sender''}
\State $\enaset \gets \Call{get-msgs}{\widehat{\cspec}^{\cv{spec}_\cv{a}},s.\db(\cv{a}),\curag}$ \Comment{Get the enabled messages with target agents}
\If{$\enaset \neq \emptyset$}
\State \textbf{pick} $\tup{M(\vec{\cv{o}}),\cv{b}} \in
\enaset$, with $\tup{\cv{b},\cv{spec}_\cv{b}} \in \curag$
\Comment{Pick a message+target agent and trigger message
  exchange and reactions}
\State $\aset_\cv{a} \gets \emptyset$, $\aset_\cv{b} \gets \emptyset$
\Comment{Get the actions with actual parameters to be applied by
  $\cv{a}$ and $\cv{b}$}
\ForAll{matching on-send rules $``\onev~ M(\vec{x})~\toa~ t~\ifc~ Q(t,\vec{x}) ~\thendo~
  \alpha(t,\vec{x})"$ in $\widehat{\urule}^{\cv{spec}_\cv{a}}$} 
\If{$\ans{Q(\cv{b},\vec{\cv{o}}},\state.\db(\cv{a}))$} 
$\aset_\cv{a} \gets \aset_\cv{a} \cup \alpha(\cv{b},\vec{\cv{o}})$
\label{alg:acta-sh}
\EndIf
\EndFor
\ForAll{matching on-receive rules $``\onev~ M(\vec{x})~\froma~ s~\ifc~ Q(s,\vec{x}) ~\thendo~
  \alpha(s,\vec{x})"$  in $\widehat{\urule}^{\cv{spec}_\cv{b}}$} 
\If{$\ans{Q(\cv{a},\vec{\cv{o}}},\state.\db(\cv{b}))$} 
$\aset_\cv{b} \gets \aset_\cv{b} \cup \alpha(\cv{a},\vec{\cv{o}})$
\label{alg:actb-sh}
\EndIf
\EndFor
\State $\tup{\torem^\cv{a},\toaddsk^\cv{a}} \gets
\Call{get-facts}{\widehat{\sys},s.\db(\cv{a}),\aset_{\cv{a}}}$, 
 $\tup{\torem^\cv{b},\toaddsk^\cv{b}} \gets
\Call{get-facts}{\widehat{\sys},s.\db(\cv{b}),\aset_{\cv{b}}}$
\State $\dbvar_s^{\cv{a}} \gets (s.\db(\cv{a}) \setminus
\torem^{\cv{a}}) \cup \toaddsk^{\cv{a}}$ \Comment{Calculate new
  $\cv{a}$'s DB, still with service calls to be issued}
\State $\dbvar_s^{\cv{b}} \gets (s.\db(\cv{b}) \setminus
\torem^{\cv{b}}) \cup \toaddsk^{\cv{b}}$ \Comment{Calculate new
  $\cv{b}$'s DB, still with service calls to be issued}
\ForAll{data type $\dt \in \dtset$} 
\Comment{Fetch the active domain and service calls for each type}
\State $\curadom{\dt} \gets 
\begin{array}{@{}l@{}l@{}}
&\set{\cv{d}
  \mid \cv{d} \in \tdom{\dt} \cap \cdom{\dfset}}\\
\cup 
&\left\{\cv{d}
  \mid \cv{d} \in \tdom{\dt} \cap \adom{\state}
\right\}\\
\cup &
\set{\scname{f}(\vec{\cv{o}}) \mid \scname{f}(\vec{\cv{o}}) \in \calls{\dbvar_s^{\cv{a}} \cup
    \dbvar_s^{\cv{b}} } \text{ and } \typed{\scname{f}} =
  \tup{\scname{f}/n,\dfset^{in},\df^{out}} \in \widehat{\scset} \text{
  with } \df^{out} = \tup{\dt,\true}} 
\end{array}$
\EndFor
\State \textbf{pick} 
$\mathfrak{H} \in 
\left\{\tup{\P_1,\ldots,\P_n,\H_1,\ldots,\H_m} \left|
\begin{array}{@{}l@{}}
\P_i \text{ is a }
T_u^i\text{-equality commitment on } \curadom{T_u^i} \text{ for } i
\in \set{1,\ldots,n},\\
\H_j \text{ is a }
T_o^j\text{-densely ordered commitment on } \curadom{T_o^j} \text{ for } j
\in \set{1,\ldots,m}
\end{array}
\right.
\right\}
$
\State $ \sigma \gets \left\{\scname{f}(\vec{\cv{o}}) \mapsto \cv{d}
  \mid \scname{f}(\vec{\cv{o}}) \in \callsvar \text{ and }
  \assignres{\mathfrak{H}}{\state}{\scname{f}(\vec{\cv{o}})}{\ddom{\dtset}} = \cv{d}\right\}$
\label{alg:sc-sh}
\State $\dbvar^{\cv{a}}_{\mathit{cand}} \gets \dbvar_s^{\cv{a}}\sigma, \dbvar^{\cv{b}}_{\mathit{cand}}
\gets \dbvar_s^{\cv{b}}\sigma$
\Comment{Obtain new candidate DBs by substituting service calls with
  results}
\If{$\dbvar^{\cv{a}}_{\mathit{cand}}$ satisfies
  $\widehat{\constr}^{\cv{a}}$}
$\dbvar_{\cv{a}} \gets \dbvar^{\cv{a}}_{\mathit{cand}}$
\Comment{Update $\cv{a}$'s DB}
\label{alg:commita-sh}
\Else\ 
$\dbvar_{\cv{a}} \gets \state.\db(\cv{a})$
\Comment{Rollback $\cv{a}$'s DB}
\EndIf
\If{$\dbvar^{\cv{b}}_{\mathit{cand}}$ satisfies
  $\widehat{\constr}^{\cv{b}}$}
$\dbvar_{\cv{b}} \gets \dbvar^{\cv{b}}_{\mathit{cand}}$
\Comment{Update $\cv{b}$'s DB}
\label{alg:commitb-sh}
\Else\ 
$\dbvar_{\cv{b}} \gets \state.\db(\cv{b})$
\Comment{Rollback $\cv{b}$'s DB}
\EndIf
\State \textbf{pick} fresh state $\state'$ \Comment{Create new state}
\State $\newag \gets \emptyset$ \Comment{Determine the (possibly
  changed)   set of active agents and their specs}
\If{$\cv{a} = \cv{\inst}$}
 $\newag \gets \set{\tup{\cv{n},\cv{spec}_\cv{n}} \mid \hasSpec(\cv{n},\cv{spec}_\cv{n}) \in \dbvar_{\cv{a}}}$
\ElsIf{$\cv{b} = \cv{\inst}$}
 $\newag \gets \set{\tup{\cv{n},\cv{spec}_\cv{n}} \mid \hasSpec(\cv{n},\cv{spec}_\cv{n}) \in \dbvar_{\cv{b}}}$
\Else\ $\newag \gets \curag$
\Comment{No change if $\inst$ is not involved in the interaction or
  must reject the update}
\EndIf
\ForAll{$\tup{\cv{n},\cv{spec}_\cv{n}} \in \newag$} \Comment{Do the
  update for each active agent}
\If{$\cv{n} = \cv{a}$}
  $\state'.\db(\cv{n}) \gets \dbvar_{\cv{a}}$ 
  \Comment{Case of sender agent} 
\ElsIf{$\cv{n} = \cv{b}$} 
  $\state'.\db(\cv{n}) \gets \dbvar_{\cv{b}}$
\Comment{Case of target agent}
\ElsIf{$\cv{n} \not\in \curag$} 
\Comment{Case of newly created agent}
\State $\state'.\db(\cv{n}) \gets
\idb^{\cv{spec}_\cv{n}} \cup \set{\myname(\cv{n})}$
\Comment{$\cv{n}$'s initial DB gets the initial data fixed by its
  specification, plus its name}
\Else\ $\state'.\db(\cv{n}) \gets \state.\db(\cv{n})$
\Comment{Default case: persisting agent not affected by the interaction}
\EndIf
\EndFor
\If{ $\exists \state'' \in \Sigma$ s.t.~$\state''.\db(\inst) =
  \state'.\db(\inst)$ and for each $\tup{\cv{n},\_} \in \curag$,
  $\state''.\db(\cv{n}) = \state'.\db(\cv{n})$ }
\State $\trans \gets \trans \cup \tup{\state,\state''}$
\Comment{State already exists: connect $\state$ to that state}
\Else
\ $\Sigma \gets \Sigma \cup \set{\state'}$, 
$\trans \gets \trans \cup \tup{\state,\state'}$
\Comment{Add and connect new state}
\EndIf
\EndIf
\EndWhile
\EndProcedure

\end{algorithmic}
\vspace{-.3cm}

\caption{Procedure for constructing a transition system that is a
   finite-branching, faithful abstraction of the transition system constructed by
   \textsc{build-ts-shallow} }\label{alg:ts-fb}

\end{figure*}


Let $\sys = \rmas$ be an RMAS, and $\Phi$ be a closed $\mulpd$
property.
Notice that, by hypothesis, $\dtset$ is
constituted by a set $\dtset_u$ of data types equipped with
domain-specific equality only, and a set $\dtset_o$ of data
types equipped also with a dense total order: $\dtset = \dtset_u \uplus \dtset_o$.

The proof is quite involved, so we separate it into several steps and
intermediate lemmas.




The first step consists in reformulating 
the input RMAS $\sys$ into the equivalent, shallow-typed version
$\widehat{\sys} = \tup{\dtset, \widehat{\dtset},
  \cdom{\dfset},\widehat{\scset}, \widehat{\evset}}$, as defined in the
  proof of Theorem~\ref{thm:facet}. By Theorem~\ref{thm:facet}, we
  have that $\ts_\sys \models \Phi$ if and only if $\ts_{\widehat{\sys}}
  \models \Phi$.

As a second step, we consider the infinite-state
transition system $\ts_{\widehat{\sys}}$, and seek a faithful (sound
and complete) finite-state abstraction of it, suitably extending the
technique in \cite{BCDDM13} so as to consider types, and dense orders
in particular. Since $\sys$ is
state-bounded, two sources of infinity are possibly present in $\ts_{\sys}$
and $\ts_{\widehat{\sys}}$:
\begin{compactenum}
\item infinite branching, that is, presence of a state with infinitely many successors due to
  the injection of data through service calls;
\item infinite runs, that is, runs that visit infinitely many
  different agent databases. 
\end{compactenum}

We can get rid of the infinite-branching in $\ts_{\widehat{\sys}}$ by
suitably pruning it:
\begin{lemma} 
\label{lem:fb}
For every shallow-typed RMAS $\widehat{\sys}$, there exists a transition system $\pts_{\widehat{\sys}}$ that obeys
the following properties:
\begin{compactenum}[\it (i)]
\item $\pts_{\widehat{\sys}}$ is finite-branching;
\item for every closed $\mulpd$ property $\Phi$, $\ts_{\widehat{\sys}}\models
  \Phi$ if  and only if $\pts_{\widehat{\sys}} \models \Phi$.
\end{compactenum}
\end{lemma}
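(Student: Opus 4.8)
The plan is to obtain $\pts_{\widehat\sys}$ from $\ts_{\widehat\sys}$ by pruning, at each state, the infinitely many successors produced by the service-call resolution step (the ``\textbf{pick} $\sigma$'' line of \textsc{build-ts-shallow}, Figure~\ref{alg:ts-sh}) down to a finite set of representatives, one per \emph{data-isomorphism type}. First I would record the two finiteness bounds that state-boundedness provides: since active agents are tracked in the (bounded) database of $\inst$ and each agent database is bounded, the active domain $\adom{\state}$ of every state $\state$ of $\ts_{\widehat\sys}$ is uniformly bounded; and since the update rules, their queries, and the effects are finitely many and are evaluated over this bounded active domain, only a bounded number $N$ of distinct ground service calls $\calls{\cdot}$ is issued in any single transition.

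The conceptual core is a \emph{genericity} property, inherited and extended from \cite{BCDDM13}: the answers to the queries in $\widehat\sys$, the satisfaction of its constraints, and the semantics of every $\mulpd$ formula depend on data objects only through the finitely many constants of $\cdom{\dfset}$, through equality, and---for the ordered types in $\dtset_o$---through the dense orders. Hence all three are invariant under any bijection on $\ddom{\dtset}$ that fixes $\cdom{\dfset}$ and preserves both $=$ and the orders of $\dtset_o$. I would state and prove this invariance by a routine induction on queries and on the structure of $\mulpd$ (with an environment for the fixpoint variables), the only point of care being that, to respect the cross-state constructs $\exists x.\inadom_\dt(x)\land\Phi$, $\inadom_{\dt_i}(\vec{x_i})\land\DIAM{\Phi}$ and $\inadom_{\dt_i}(\vec{x_i})\land\BOX{\Phi}$, the witnessing bijection must fix every object that \emph{persists} between the two states being compared.

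With genericity in hand, I would classify the candidate substitutions $\sigma$ at a state $\state$: call $\sigma_1,\sigma_2$ \emph{equivalent over $\state$} if some bijection fixing $\adom{\state}\cup\cdom{\dfset}$ and preserving $=$ and the orders maps the results of $\sigma_1$ to those of $\sigma_2$. For an unordered output type a result is characterised by which element of $\adom{\state}\cup\cdom{\dfset}$ or which other pending call it equals; for a densely-ordered output type it is characterised by its position---an element, or one of the finitely many open gaps---in the finite linearly ordered set $\adom{\state}\cup\cdom{\dfset}$, together with the relative order of the pending results among themselves. As $\adom{\state}\cup\cdom{\dfset}$ is finite and at most $N$ calls occur, there are finitely many such patterns, hence finitely many equivalence classes. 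Crucially, \emph{density} guarantees that each pattern is realisable by genuine domain elements: between any two active values, and above or below all of them, a fresh value always exists. Thus every class is non-empty and corresponds to an actual successor of $\state$ in $\ts_{\widehat\sys}$. I then define $\pts_{\widehat\sys}$ exactly as \textsc{build-ts-shallow}, except that the service-call step picks one $\sigma$ per equivalence class over the current state; this yields finitely many successors per state, establishing (i), and makes $\pts_{\widehat\sys}$ a sub-transition-system of $\ts_{\widehat\sys}$ (each kept successor is a genuine one).

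For (ii) I would exhibit a bisimulation $B$ between $\ts_{\widehat\sys}$ and $\pts_{\widehat\sys}$ witnessed by data-object bijections that fix the persisting active domain and constants and preserve $=$ and the orders: \emph{forth}, every $\ts_{\widehat\sys}$-successor of a $B$-related pair is equivalent over the source state to a kept representative, via a bijection fixing $\adom{\state}\cup\cdom{\dfset}$ (newly injected values do not persist, so fixing the source active domain suffices); \emph{back}, every $\pts_{\widehat\sys}$-successor is already a $\ts_{\widehat\sys}$-successor. Propagating the witnessing bijections along runs keeps $B$ consistent with persistence. A final induction on $\Phi$, using the genericity property to transfer query answers, $\inadom$ membership, and fixpoints across $B$-related states, yields $\ts_{\widehat\sys}\models\Phi$ iff $\pts_{\widehat\sys}\models\Phi$. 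The main obstacle is precisely this last coordination: ensuring the witnessing bijections can be chosen to fix \emph{all} persisting objects simultaneously and coherently along entire runs, so that $B$ is a genuine history-respecting bisimulation compatible with the across-states quantification of $\mulpd$, rather than a merely state-local isomorphism; density is what keeps every abstract class inhabited, so that the forth condition never fails.
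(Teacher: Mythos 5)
Your proposal is correct and takes essentially the same route as the paper: your equivalence classes of candidate substitutions $\sigma$ are precisely the paper's $\dt_u$-equality and $\dt_o$-densely ordered commitments over the current active domain plus pending service calls, your one-representative-per-class pruning is exactly the paper's $\textsc{assign-res}$-based \textsc{build-fb-ts-shallow} construction (with finiteness counted via Bell numbers and permutations), and property preservation is likewise obtained by adapting the genericity/bisimulation argument of \cite{BCDDM12,BCDDM13}, with density ensuring realizability and the atom $x < y$ being the only genuinely new case---just as the paper notes. One cosmetic remark: the lemma does not assume state-boundedness and your argument does not actually need it, since per-state finiteness of the agent databases already yields a finite active domain and hence finitely many commitment patterns at each state.
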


To produce $\pts_{\widehat{\sys}}$, we extend the notion of \emph{equality
 commitment} exploited in \cite{BCDDM12,BCDDM13}. 
Equality commitments are used to abstractly describe how
the result of a service call relates through (in)equality to the values present in the
active domain of the system, and to those returned by other service
calls issued in the same moment, without considering their actual, specific
results. 
 Technically, we adapt the definition of equality commitment in
 \cite{BCDDM12} to the case of RMASs, taking into account that:
\begin{inparaenum}[\it (i)]
\item differently from DCDSs, data objects are typed, and
\item some data objects could be compared not only with equality, but
  also with a domain-specific total, dense relation.
\end{inparaenum}

Consider a data type $\dt_u \in \dtset_u$, and a set $S$ made up of data
objects in $\tdom{\dt_u}$ and of ground service calls built by applying a
service call $\typed{\scname{f}} \in \scset$ to input data objects,
such that the return type of $\typed{\scname{f}}$ is compatible with
$\dt_u$. A \emph{$\dt_u$-equality commitment} $\P$ on $S$ is a partition of $S$, that
is, a set of disjoint subsets of $S$, called \emph{cells}, such that
the union of the cells in $\P$ is exactly $S$. Each cell contains at
most one data object (but arbitrarily many ground service calls). For
any $e \in \P$, $\cell{e}{\P}$ denotes the cell to which $e$ belongs.

The intention of $\H$ is to abstractly
characterize how the elements in $S$ are related to each other via the
domain-specific relation $=_{\dt_u}$ of $\dt_u$. In particular, $\P$ is
used to capture equality and
non-equality commitments on the members of $S$ in the following sense:
for every $e_1,e_2 \in S$, we have $e_1 =_{\dt_u} e_2$ if and only if
$\cell{e_1}{\H} =_{\dt_u} \cell{e_2}{\H}$.

Consider now a data type $\dt_o \in \dtset_o$, and a set $S$ as before. 
A \emph{$\dt_o$-densely ordered commitment} $\H$ on $S$ is a pair
$\tup{\P,\ex{pos}}$, where:
\begin{compactitem}
\item $\P$ is a $\dt_o$-equality commitment over $S$;
\item $\ex{pos}$ is an ordering over $\P$ that is compatible with $S$,
  i.e., $\ex{pos}$ is a bijection $\set{1,\ldots,|\P|} \longrightarrow
  \P$ that obeys to the following property: for every $P_1,P_2 \in
  \P$, whenever $P_1$ contains a data object $\cv{d}_1 \in \dt$ and $P_2$ contains a data object $\cv{d}_2$ in $\tdom{\dt}$ , we have $\ex{pos}(P_1)
  <_{\mathbb{N}} \ex{pos}(P_2)$ if and only if $\cv{d}_1 <_{\dt_o} \cv{d}_2$,
  where $<_{\mathbb{N}}$ denotes the total order relation on natural
  numbers. 
\end{compactitem}

The intention of $\H$ is to abstractly
characterize how the elements in $S$ are related to each other via the
domain-specific relations $=_{\dt_o}$ and $<_{\dt_o}$ of $\dt$. Specifically, $\P$
covers equality, while $\ex{pos}$ accounts for $<$, and orders the
  members of $S$ in the following sense: for every $e_1,e_2 \in S$, we
  have $e_1 <_{\dt_o} e_2 $ if and only if $\ex{pos}(\cell{e_1}{\P}) <_\mathbb{N}
  \ex{pos}(\cell{e_2}{\P})$.

We now exploit commitments to change the
$\textsc{build-ts}$ algorithm, shown in Figure~\ref{alg:ts} and used
to construct $\ts_{\widehat{\sys}}$. In particular, we start from the
\textsc{ts-build-shallow} procedure, and modify the function that
nondeterministically selects the results returned by service
calls. First of all, we assume the existence of a pre-defined
function $\textsc{assign-res}$, parameterized by a tuple of commitments, which substitutes a service call
with a corresponding result that is in accordance with the cell to
which the service call belongs. In particular, let $\dtset_u =
\set{\dt_u^1,\ldots,\dt_u^n}$ and $\dtset_o =
\set{\dt_o^1,\ldots,\dt_o^m}$. Let
$\tup{S_u^1,\ldots,S_u^n,S_o^1,\ldots,S_o^m}$ be a tuple of sets, each
containing data objects from the corresponding type, and possibly also
service calls whose return type matches with that type.  Let
$\mathfrak{H} = \tup{\P_1,\ldots,\P_n,\H_1,\ldots,\H_m}$ be a tuple of
commitments, where each $\P_i$ is a $T_u^i$-equality commitment built over
$S_u^i$, and where each $\H_j$ is a $T_o^j$-densely ordered commitment
built over $S_o^j$. 

Specifically, given a data domain $\Delta$, we define
\[\textsc{assign-res}_{\mathfrak{H}}^\Delta: \Sigma \times
\calls{\bigcup_{i \in \set{1,\ldots,n}} \hspace*{-.5cm} S_u^i \cup \bigcup_{j \in
    \set{1,\ldots,m}} \hspace*{-.5cm} S_o^j } \longrightarrow \Delta\]
where, by fixing a state $\state \in \Sigma$,
$\textsc{assign-res}_{\mathfrak{H}}^\Delta$ obeys to the following properties:
\begin{compactitem}
\small
\item For $i \in \set{1,\ldots,n}$, for every service call $\scname{f}(\vec{\cv{o}}) \in S_u^i$
  and for every data object $\cv{d} \in S_u^i$:
  $\assignres{\mathfrak{H}}{\state}{\scname{f}(\vec{\cv{o}})}{\Delta} =_{T_u^i}
  \cv{d}$ iff $\cell{\scname{f}(\vec{\cv{o}})}{\P_i} =_{T_u^i}
  \cell{\cv{d}}{\P_i}$.
\item For $i \in \set{1,\ldots,n}$ and for every two service
  calls $\scname{f_1}(\vec{\cv{o_1}}),\scname{f_2}(\vec{\cv{o_2}}) \in
  S_u^i$: 
$ \assignres{\mathfrak{H}}{\state}{\scname{f_1}(\vec{\cv{o_1}})}{\Delta} =_{T_u^i}
 \assignres{\mathfrak{H}}{\state}{\scname{f_2}(\vec{\cv{o_2}})}{\Delta}$
   iff  $\cell{\scname{f_1}(\vec{\cv{o_1}})}{\P_i} =_{T_u^i}
 \cell{\scname{f_2}(\vec{\cv{o_2}})}{\P_i}$.
\item For $j \in \set{1,\ldots,m}$ with $\H_j = \tup{\P'_j,\ex{pos}_j}$, for every service call $\scname{f}(\vec{\cv{o}}) \in S_o^j$
  and for every data object $\cv{d} \in S_o^j$:
  $ \assignres{\mathfrak{H}}{\state}{\scname{f}(\vec{\cv{o}})}{\Delta} =_{T_u^i}
  \cv{d}$ iff $\cell{\scname{f}(\vec{\cv{o}})}{\P'_j}{\Delta} =_{T_u^i}
  \cell{\cv{d}}{\P'_j}$.
\item For $\H_j =
  \tup{\P'_j,\ex{pos}_j}$ ( $j \in \set{1,\ldots,m}$), and for every two service calls
  $\scname{f_1}(\vec{\cv{o_1}}),\scname{f_2}(\vec{\cv{o_2}}) \in
  S_o^j$: 
        $\assignres{\mathfrak{H}}{\state}{\scname{f_1}(\vec{\cv{o_1}})}{\Delta} =_{T_o^j}
 \assignres{\mathfrak{H}}{\state}{\scname{f_2}(\vec{\cv{o_2}})}{\Delta}$
   iff $\cell{\scname{f_1}(\vec{\cv{o_1}})}{\P'_j} =_{T_o^j}
 \cell{\scname{f_2}(\vec{\cv{o_2}})}{\P'_j}$.
\item For $\H_j =
  \tup{\P'_j,\ex{pos}_j}$ ( $j \in \set{1,\ldots,m}$), and for every two service calls
  $\scname{f_1}(\vec{\cv{o_1}}),\scname{f_2}(\vec{\cv{o_2}}) \in
  S_o^j$: 
 \begin{compactitem}
 \item 
$\assignres{\mathfrak{H}}{\state}{\scname{f_1}(\vec{\cv{o_1}})}{\Delta} =_{T_o^j}
 \assignres{\mathfrak{H}}{\state}{\scname{f_2}(\vec{\cv{o_2}})}{\Delta}$
   iff  $\cell{\scname{f_1}(\vec{\cv{o_1}})}{\P'_j} =_{T_o^j}
 \cell{\scname{f_2}(\vec{\cv{o_2}})}{\P'_j}$;
\item $\assignres{\mathfrak{H}}{\state}{\scname{f_1}(\vec{\cv{o_1}})}{\Delta} <_{T_o^j}
 \assignres{\mathfrak{H}}{\state}{\scname{f_2}(\vec{\cv{o_2}})}{\Delta}$ 
   iff $\ex{pos}(\cell{\scname{f_1}(\vec{\cv{o_1}})}{\P'_j}) <_{\mathbb{N}}
 \ex{pos}(\cell{\scname{f_2}(\vec{\cv{o_2}})}{\P'_j})$.
\end{compactitem}
\end{compactitem}
Intuitively, this function is used to select a \emph{single},
representative combination of service call results that obey to the
constraints imposed by a given commitment.

Figure~\ref{alg:ts-fb} shows the revised version of the algorithm in
Figure~\ref{alg:ts-sh}. Instead of picking any combination of service
call results, the $\textsc{build-fb-ts-shallow}$ algorithm picks an
equality/densely-ordered commitment for each type of the input RMAS,
constructed over the current active domain for that type, where the
current active domain for type $\dt$ is obtained by considering:
\begin{compactitem}
\item the initial data objects for $\dt$;
\item the current data objects for $\dt$;
\item the service calls that must be issued now, and whose return
  facet is defined over type $\dt$. 
\end{compactitem}
The combination of service call results for each type is then obtained
by applying the pre-defined $\textsc{assign-res}$ function.

Let  $\pts_{\widehat{\sys}}$ be the transition system obtained by the
application of the $\textsc{build-fb-ts-shallow}$ procedure over the
shallow-typed RMAS $\widehat{\sys}$. 
We first argue that $\pts_{\widehat{\sys}}$ is finite-branching,
differently from $\ts_{\widehat{\sys}}$, for which the function
$\textsc{get-call-res}$ may return infinitely many combinations of
service call results. In fact, given the current active domain
$\curadom{\dt}$ of a
type $\dt$, there are only finitely many commitments that can be
constructed over that set. More specifically, when $\dt$ is an
unordered type their number is bounded
by the Bell number of $|\curadom{\dt}|$, wherease when $\dt$ is an
ordered type their number is bounded
by the Bell number of $|\curadom{\dt}|$, multiplied by the factorial
of $|\curadom{\dt}|$ (so as to account for the permutation of data
objects). 
Since the $\textsc{assign-res}$ function assigns a single combination
of results for each commitment, there are only finitely many
combination of service call results, and consequently only finitely
many successor states of a given state can be present in
$\pts_{\widehat{\sys}}$.

To show that $\ts_{\widehat{\sys}}$ and $\pts_{\widehat{\sys}}$
satisfy the same set of $\mulpd$ formulae, one needs to follow
step-by-step the proof of \cite{BCDDM12,BCDDM13}, noticing that the
notion of densely-ordered commitment covers the case of formulae of
the form $x < y$, which is the only one not already tackled by \cite{BCDDM12,BCDDM13}.
This concludes the proof of Lemma~\ref{lem:fb}.

We now observe that $\pts_{\widehat{\sys}}$ may still contain runs visiting infinitely
many different states. The third phase of our proof consequently
consists of showing that it is possible to produce a ``folded''
folded transition system $\fts_{\widehat{\sys}}$ that
is finite-state, and such that for every closed $\mulpd$ property $\Phi$,
$\pts_{\widehat{\sys}}\models \Phi$ if and only if $\fts_{\widehat{\sys}}
\models \Phi$.

Before showing how this can be done, we define a variant of
$\textsc{build-fb-ts-shallow}$ that, instead of employing the
domain-specific (rigid) ordering relations, relies on additional
``comparison tables'' that are suitably manipulated state by
state. The algorithm is shown in Figure~\ref{alg:ts-order}. The
construction algorithm exploits a specific database (indexed in the
state by symbol $<$) to store the projection of the ordering relations
of types in $\dtset_o$, where only actively persisting data objects
are considered. Such database employs a relation $\lessthan{\dt_o}$
for each densely-ordered data type $\dt_o \in \dtset_o$. In order to make the
input RMAS insisting on such relations instead of the domain-specific
ones, we introduce the $\flatten$ operator, which takes an RMAS or one
of its components, and substitutes every occurrence of a query of the
form $x <_{\dtset_o} y$ with the corresponding atomic query $\lessthan{\dt_o}(x,y)$.

Such a database is
initialized by computing, for
each data type $\dt_o^i \in \dtset_o$, the transitive closure of the
$<_{\dt_o^i}$ relation on the initial data domain for $\dt_o^i$, and by inserting all extracted
pairs into the dedicated $\lessthan{\dt_o^i}$ binary relation. It is
then used whenever a query is issued over an agent database, so as to
complement it with the explicit listing of all $\ex{lessThan}$
relations. Finally, it is updated state-by-state:
\begin{compactitem}
\item on the one hand by considering the issued service calls,
in accordance with the $\ex{pos}$ relation of the established
densely-ordered commitments (cf.~line~\ref{alg:lt-update} in Figure~\ref{alg:ts-order});
\item on the other hand by filtering away those tuples that involve a
  data object that is not persisting when performing a transition from
  the current to the next state (cf.~line~\ref{alg:lt-filter} in Figure~\ref{alg:ts-order}).
\end{compactitem}

Let $\pts^{flat}_{\widehat{\sys}}$ be the transition system produced
by $\textsc{build-fb-ts-shallow-flat}(\widehat{\sys})$. We have that:
\begin{lemma}
\label{lem:order}
For every shallow-typed RMAS $\widehat{\sys}$ and every closed
$\mulpd$ property $\Phi$: 
\[\pts_{\widehat{\sys}} \models \Phi \text{ if and
only if } \pts^{flat}_{\widehat{\sys}} \models \flatten(\Phi)\]
\end{lemma}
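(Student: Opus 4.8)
The plan is to build a tight, state-by-state correspondence between the runs of \textsc{build-fb-ts-shallow}, which produces $\pts_{\widehat{\sys}}$ and consults the rigid orders $<_{\dt_o}$ directly, and the runs of \textsc{build-fb-ts-shallow-flat}, which produces $\pts^{flat}_{\widehat{\sys}}$, operates on $\flatten(\widehat{\sys})$, and consults the materialized tables $\lessthan{\dt_o}$ instead. Concretely, I would define a map $h$ from the states of $\pts^{flat}_{\widehat{\sys}}$ to those of $\pts_{\widehat{\sys}}$ that is the identity on every agent database and simply forgets the extra relations $\lessthan{\dt_o}$ carried by the flat states. The goal is then twofold: first, to show that $h$ is an isomorphism of the underlying transition structures; and second, to show that $h$, combined with $\flatten$, preserves the truth value of every $\mulpd$ formula.

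The core of the argument is an invariant maintained along the construction, proved by induction on the order in which states are generated by the \textbf{while} loop. The invariant asserts that for every reachable state $s^{flat}$ of $\pts^{flat}_{\widehat{\sys}}$, every $\dt_o \in \dtset_o$, and $s = h(s^{flat})$, the table stored in $s^{flat}$ satisfies
\[
\lessthan{\dt_o} = \set{\tup{\cv{d_1},\cv{d_2}} \mid \cv{d_1},\cv{d_2} \in \curadom{\dt_o} \text{ and } \cv{d_1} <_{\dt_o} \cv{d_2}},
\]
i.e.\ it is exactly the restriction of the rigid order to the active domain of $\dt_o$ (initial data objects plus the currently stored objects of type $\dt_o$). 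The base case holds because $\lessthan{\dt_o}$ is initialized with the transitive closure of $<_{\dt_o}$ on the initial data domain. For the inductive step I would verify that the two table updates preserve the invariant: line~\ref{alg:lt-update} inserts, for each freshly issued service call, the pairs dictated by the $\ex{pos}$ component of the chosen $\dt_o$-densely ordered commitment, and by the very definition of such a commitment these pairs coincide with the order that the concrete results selected by \textsc{assign-res} carry under $<_{\dt_o}$; line~\ref{alg:lt-filter} then deletes precisely the pairs mentioning objects that do not persist into the successor, restoring coincidence with $<_{\dt_o}$ on the successor's active domain. Density is what makes this consistent: it guarantees that the strict positions prescribed by $\ex{pos}$ are always realizable by genuine elements of $\tdom{\dt_o}$, so the symbolic table can never diverge from the rigid order.

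With the invariant established, I would argue that the two algorithms proceed in lockstep. Both make the same nondeterministic choices -- the picked state, sender, enabled message, and the tuple $\mathfrak{H}$ of commitments -- the only difference being that $\pts_{\widehat{\sys}}$ evaluates the queries, rule conditions, and constraints of $\widehat{\sys}$ against $<_{\dt_o}$, whereas $\pts^{flat}_{\widehat{\sys}}$ evaluates their $\flatten$-images against $\lessthan{\dt_o}$ (joined in at query time, as prescribed by the flat construction). Since every query, communicative/update rule, and database constraint in an RMAS is domain-independent and hence relativized to the active domain, each comparison atom $x <_{\dt_o} y$ actually evaluated binds $x$ and $y$ to objects in $\curadom{\dt_o}$; by the invariant, $<_{\dt_o}$ and $\lessthan{\dt_o}$ agree on exactly these objects (including the rigid constants of $\cdom{\dfset}$, which always persist and are always in $\curadom{\dt_o}$). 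Consequently the candidate databases, the conformance and constraint checks, the recomputed sets of active agents, and the state-equality test all produce identical outcomes, so $h$ is indeed an isomorphism of the transition structures up to the redundant tables.

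Finally, I would prove the formula correspondence by structural induction on $\Phi$: for every state $s$ and every assignment $\theta$ of the free variables to objects in $\adom{s}$, one has $s,\theta \models \Phi$ in $\pts_{\widehat{\sys}}$ iff $h^{-1}(s),\theta \models \flatten(\Phi)$ in $\pts^{flat}_{\widehat{\sys}}$. The sole non-routine base case is an atom $x <_{\dt_o} y$ occurring in a local query $Q_\ell$, which is settled by the invariant exactly as above; the Boolean cases, the $\inadom$-guarded quantifier, and the $\DIAM{}$/$\BOX{}$ modalities all commute with $\flatten$ (which rewrites only $<$-atoms) and are preserved by $h$, which respects both the transition relation and the per-state active domains, while the fixpoint cases follow by the standard monotone-approximant argument. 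Instantiating this at the initial states (which correspond under $h$) and for closed $\Phi$ yields $\pts_{\widehat{\sys}} \models \Phi$ iff $\pts^{flat}_{\widehat{\sys}} \models \flatten(\Phi)$. I expect the main obstacle to be the inductive maintenance of the table invariant across service-call injection and object removal -- specifically, checking that $\ex{pos}$ faithfully mirrors $<_{\dt_o}$ on the newly injected results and that filtering leaves neither dangling nor missing pairs -- since everything else reduces to the lockstep correspondence and an otherwise routine induction.
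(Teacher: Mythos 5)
Your overall strategy---a lockstep induction on the construction of the two transition systems followed by a structural induction on $\Phi$---is the same as the paper's, but your table invariant is stated incorrectly, and the error lands exactly on the point that makes this lemma nontrivial. You assert that in every reachable state of $\pts^{flat}_{\widehat{\sys}}$ the relation $\lessthan{\dt_o}$ is precisely the restriction of $<_{\dt_o}$ to the \emph{current} active domain, and you read line~\ref{alg:lt-filter} of Figure~\ref{alg:ts-order} as ``restoring coincidence with $<_{\dt_o}$ on the successor's active domain.'' That is a misreading: the filter keeps every pair $\lessthan{\dt_o}(\cv{d_1},\cv{d_2})$ with $\cv{d_1},\cv{d_2} \in \adom{\state} \cup \adom{\state'}$, i.e., comparisons over the union of the \emph{previous} and current active domains, and this slack is essential. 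In $\mulpd$, the guards $\bigwedge_i \inadom_{\dt_i}(\vec{x_i})$ attached to $\DIAM{\cdot}$ and $\BOX{\cdot}$ only require the free variables of the continuation formula to be live \emph{before} the transition; after moving to $\state'$, a comparison atom $x <_{\dt_o} y$ may be evaluated under an assignment to objects of $\adom{\state}$ that are no longer in $\adom{\state'}$. Under your invariant the flattened atom $\lessthan{\dt_o}(x,y)$ would be false at $\state'$ for lack of a table entry, while the rigid comparison can be true, so the inductive step of your formula correspondence fails at the modal cases. Your final induction hypothesis, which quantifies only over assignments into $\adom{s}$, assumes away exactly these across-state assignments, which is why the gap is invisible in your write-up but real. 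This is, notably, the one point the paper's own (terse) proof singles out: $\mulpd$ can compare objects live now or live in the immediately preceding state, and line~\ref{alg:lt-filter} is designed to retain precisely those comparisons.

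The repair is what the algorithm already does: state the invariant as ``$\state'.\db(<)$ agrees with the rigid order, as realized by the values chosen via $\textsc{assign-res}$ (which in this construction still draws from $\ddom{\dtset}$, so the chosen values genuinely respect $<_{\dt_o}$), on all pairs from $\adom{\state} \cup \adom{\state'}$ (together with the initial constants in $\cdom{\dfset}$, which the commitments always include),'' prove it via line~\ref{alg:lt-update} and the compatibility condition on $\ex{pos}$ in densely-ordered commitments, and run the structural induction with assignments ranging over the previous-or-current active domain. One-step lookback suffices because the $\inadom$ guards re-anchor all free variables at every modality---a point your proposal never isolates, yet it is the reason the filtered table is adequate. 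The remainder of your argument (identical nondeterministic picks including $\mathfrak{H}$, domain independence of rule queries and constraints guaranteeing that execution steps only ever compare active objects, and the routine commutation of $\flatten$ with the Boolean, quantifier, and fixpoint cases) is sound and matches the paper's intended proof.
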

The lemma can be proven by induction on the construction of the two
transition systems, recalling that:
\begin{compactitem}
\item Every execution step of an RMAS is triggered by issuing
  domain-independent queries over the current database of one of its
  agents, and therefore comparisons can only be applied to data
  objects actively present in that databse.
\item $\mulpd$ queries can only compare data objects that are present
  in the current active domain of the system, or that were present in
  the immediately previous state. This is suitably handled, for
  $\flatten(\Phi)$, in line \ref{alg:lt-filter} of
  Figure~\ref{alg:ts-order}, where all comparisons between data
  objects present in the previous or current states are explicitly
  maintained. 
\end{compactitem}
It is also important to observe that $\pts^{flat}_{\widehat{\sys}}$
does not alter the state-boundedness of $\pts_{\widehat{\sys}}$,
because it only adds relations on data objects that are present in the
current or previous active domains, while comparisons between old data
objects are filtered away.

However, the crucial property of the construction of
$\pts^{flat}_{\widehat{\sys}}$, is that apart from data objects
present in the initial data domain, \emph{the comparison database is not based on
  the domain-specific ordering relations, but is constructed starting
  from the picked densely-ordered commitments}, as shown in line~\ref{alg:lt-update} of Figure~\ref{alg:ts-order}.
We combine this crucial property with the inability of $\mulpd$, due
to its persistent nature, of comparing currently active data objects with objects that were encountered in
 the past, but are not active anymore.  In particular, we can directly
 apply the data recycling technique in \cite{BCDDM12,BCDDM13}, reusing
 old, forgotten data objects in place of fresh ones. 

Figure~\ref{alg:ts-rcycl} shows the construction algorithm with
recycling of data objects. Let $\fts_{\widehat{\sys}}$ be the
transition system produced by such an algorithm. Due to the fact,
argued before, that during the system construction comparisons are
stored by analyzing densely-ordered commitments, and not 
domain-specific ordering relations, correctness is obtained
by adapting the proof in \cite{BCDDM12,BCDDM13}. In particular, we
obtain that, when the original RMAS is state-bounded, then only a
bounded number of new data objects
must be inserted before recycling makes it not necessary anymore to consider
fresh values, that is, before the set $\mathit{Passive}$ is guaranteed
to contain sufficiently many used but non-active data objects.  This
implies that the construction algorithm of Figure~\ref{alg:ts-rcycl}
terminates, and in turn that  $\fts_{\widehat{\sys}}$ is finite-state,
and represents at the same time a sound and complete abstraction of
the original system.

By putting everything together, we obtain in
fact that, for every state-bounded, densely-ordered RMAS $\sys$, and
for every $\mulpd$ property $\Phi$:
\begin{compactenum}
\item $\fts_{\widehat{\sys}}$ can be effectively constructed using the
  procedure $\textsc{build-ts-abstract}$ of Figure~\ref{alg:ts-rcycl};
\item $ \fts_{\widehat{\sys}}$ has a finite number of states;
\item $\ts_\sys \models \Phi \text{ if and only if } \fts_{\widehat{\sys}}
\models \flatten(\Phi)$.
\end{compactenum}
This concludes the proof.
\qed

\newcommand{\idblt}{\ex{D}_0^<}
\newcommand{\dblt}{\ex{D}^<}
\newcommand{\dbltplus}{\ex{D}_+^<}

\begin{figure*}[!t]
\begin{algorithmic}[1]
\small
\Procedure{build-fb-ts-shallow-flat}{$\widehat{\sys}$}
\\\textbf{input:} Shallow-typed, RMAS $\widehat{X}  = \tup{\dtset, \widehat{\dtset},
  \cdom{\dfset},\widehat{\scset}, \widehat{\evset}}$, with $\dtset =
\set{\dt_u^1,\ldots,\dt_u^n} \cup \set{\dt_o^1,\ldots,\dt_o^m}$\\
\textbf{output:} transition system $\Upsilon_\X=\tup{\ddom{\dt},\Sigma,
  \state_0,\trans}$
\State $\idblt \gets \emptyset$ \Comment{Initial DB incorporating the
  domain-specific $<$ relations for data objects in $\cdom{\dfset}$}
\ForAll{$i \in \set{1,\ldots,m}$}
\ForAll{$\cv{d_1},\cv{d_2} \in \cdom{\dfset} \cap \ddom{\dt_o^m}$}
\If{$\cv{d_1} <_{\dt_o^m}\cv{d_2}$} 
$\idblt \gets \idblt \cup \set{\lessthan{\dt_o^m}(\cv{d_1},\cv{d_2})}$
\EndIf
\EndFor
\EndFor
\State $AS_{0} \gets \set{\tup{\cv{n},\cv{spec}_\cv{n}} \mid
  \hasSpec(\cv{n},\cv{spec}_\cv{n}) \in \idb^\inst}$ \Comment{Initial
  agents with their specifications}
\ForAll{$\tup{\cv{n},\cv{spec}_\cv{n}} \in AS_{0}$}
 $\state_0.\db(\cv{n}) \gets \idb^{\cv{spec}_\cv{n}}$
\Comment{Specify the initial state by extracting the initial DBs from
  the agent specs}
\EndFor
\State $\state_0.\db(<) \gets \idblt$ \Comment{Insert the special
  less-than DB} 
\State $\Sigma \gets \{\state_0\}$, $\trans \gets \emptyset$
\While{true}
\State \textbf{pick} $\state \in \Sigma$
\Comment{Nondeterministically pick a state}
\State $\curag \gets \set{\tup{\cv{n},\cv{spec}_\cv{n}} \mid
  \hasSpec(\cv{a},\cv{spec}_\cv{n}) \in \state.\db(\inst)}$
\Comment{Get currently active agents with their specifications}
\State \textbf{pick} $\tup{\cv{a},\cv{spec}_\cv{a}} \in \curag$\Comment{Nondeterministically
  pick an active agent $\cv{a}$, elected as ``sender''}
\State $\enaset \gets
\Call{get-msgs}{\flatten(\widehat{\cspec}^{\cv{spec}_\cv{a}}),s.\db(\cv{a}) \cup
  \state.\db(<),\curag}$ \Comment{Get the enabled messages with target agents}
\If{$\enaset \neq \emptyset$}
\State \textbf{pick} $\tup{M(\vec{\cv{o}}),\cv{b}} \in
\enaset$, with $\tup{\cv{b},\cv{spec}_\cv{b}} \in \curag$
\Comment{Pick a message+target agent and trigger message
  exchange and reactions}
\State $\aset_\cv{a} \gets \emptyset$, $\aset_\cv{b} \gets \emptyset$
\Comment{Get the actions with actual parameters to be applied by
  $\cv{a}$ and $\cv{b}$}
\ForAll{matching on-send rules $``\onev~ M(\vec{x})~\toa~ t~\ifc~ Q(t,\vec{x}) ~\thendo~
  \alpha(t,\vec{x})"$ in $\flatten(\widehat{\urule}^{\cv{spec}_\cv{a}})$} 
\If{$\ans{Q(\cv{b},\vec{\cv{o}}},\state.\db(\cv{a})\cup \state.\db(<))$} 
$\aset_\cv{a} \gets \aset_\cv{a} \cup \alpha(\cv{b},\vec{\cv{o}})$
\EndIf
\EndFor
\ForAll{matching on-receive rules $``\onev~ M(\vec{x})~\froma~ s~\ifc~ Q(s,\vec{x}) ~\thendo~
  \alpha(s,\vec{x})"$  in $\flatten(\widehat{\urule}^{\cv{spec}_\cv{b}})$} 
\If{$\ans{Q(\cv{a},\vec{\cv{o}}},\state.\db(\cv{b}) \cup \state.\db(<))$} 
$\aset_\cv{b} \gets \aset_\cv{b} \cup \alpha(\cv{a},\vec{\cv{o}})$
\EndIf
\EndFor
\State $\tup{\torem^\cv{a},\toaddsk^\cv{a}} \gets
\Call{get-facts}{\flatten(\widehat{\sys}),s.\db(\cv{a})\cup
  \state.\db(<),\aset_{\cv{a}}}$
\State
 $\tup{\torem^\cv{b},\toaddsk^\cv{b}} \gets
\Call{get-facts}{\flatten(\widehat{\sys}),s.\db(\cv{b})\cup \state.\db(<),\aset_{\cv{b}}}$
\State $\dbvar_s^{\cv{a}} \gets (s.\db(\cv{a}) \setminus
\torem^{\cv{a}}) \cup \toaddsk^{\cv{a}}$ \Comment{Calculate new
  $\cv{a}$'s DB, still with service calls to be issued}
\State $\dbvar_s^{\cv{b}} \gets (s.\db(\cv{b}) \setminus
\torem^{\cv{b}}) \cup \toaddsk^{\cv{b}}$ \Comment{Calculate new
  $\cv{b}$'s DB, still with service calls to be issued}
\ForAll{data type $\dt \in \dtset$} 
\Comment{Fetch the active domain and service calls for each type}
\State $\curadom{\dt} \gets 
\begin{array}{@{}l@{}l@{}}
&\set{\cv{d}
  \mid \cv{d} \in \tdom{\dt} \cap \cdom{\dfset}}\\
\cup 
&\left\{\cv{d}
  \mid \cv{d} \in \tdom{\dt} \cap \adom{\state}
\right\}\\
\cup &
\set{\scname{f}(\vec{\cv{o}}) \mid \scname{f}(\vec{\cv{o}}) \in \calls{\dbvar_s^{\cv{a}} \cup
    \dbvar_s^{\cv{b}} } \text{ and } \typed{\scname{f}} =
  \tup{\scname{f}/n,\dfset^{in},\df^{out}} \in \widehat{\scset} \text{
  with } \df^{out} = \tup{\dt,\true}} 
\end{array}$
\EndFor
\State \textbf{pick} 
$\mathfrak{H} \in 
\left\{\tup{\P_1,\ldots,\P_n,\H_1,\ldots,\H_m} \left|
\begin{array}{@{}l@{}}
\P_i \text{ is a }
T_u^i\text{-equality commitment on } \curadom{T_u^i} \text{ for } i
\in \set{1,\ldots,n},\\
\H_j \text{ is a }
T_o^j\text{-densely ordered commitment on } \curadom{T_o^j} \text{ for } j
\in \set{1,\ldots,m}
\end{array}
\right.
\right\}
$
\State $ \sigma \gets \left\{\scname{f}(\vec{\cv{o}}) \mapsto \cv{d}
  \mid \scname{f}(\vec{\cv{o}}) \in \callsvar \text{ and }
  \assignres{\mathfrak{H}}{\state}{\scname{f}(\vec{\cv{o}})}{\ddom{\dtset}} = \cv{d}\right\}$
\State $\dblt \gets \emptyset$ \Comment{Recalculate the
  $\ex{lessThan}$ relations by considering the current active domains
  and the picked commitments}
\ForAll{$i \in \set{1,\ldots,m}$, with $\H_i = \tup{\P'_i,\ex{pos}_i}$}
\ForAll{$\cv{d_1},\cv{d_2} \in \P'_i\sigma$}
\If{$\ex{pos}_i(\cell{\cv{d_1}}{\P'_i\sigma}) <_\mathbb{N} \ex{pos}_i(\cell{\cv{d_2}}{\P'_i\sigma})$}
\State $\dblt \gets \dblt \cup
\set{\lessthan{\dt_o^i}(\cv{d_1},\cv{d_2})}$
\label{alg:lt-update}
\EndIf
\EndFor
\EndFor
\State $\dbvar^{\cv{a}}_{\mathit{cand}} \gets \dbvar_s^{\cv{a}}\sigma, \dbvar^{\cv{b}}_{\mathit{cand}}
\gets \dbvar_s^{\cv{b}}\sigma$
\Comment{Obtain new candidate DBs by substituting service calls with
  results}
\If{$\dbvar^{\cv{a}}_{\mathit{cand}}$ satisfies
  $\flatten(\widehat{\constr}^{\cv{a}})$}
$\dbvar_{\cv{a}} \gets \dbvar^{\cv{a}}_{\mathit{cand}}$
\Comment{Update $\cv{a}$'s DB}
\Else\ 
$\dbvar_{\cv{a}} \gets \state.\db(\cv{a})$
\Comment{Rollback $\cv{a}$'s DB}
\EndIf
\If{$\dbvar^{\cv{b}}_{\mathit{cand}}$ satisfies
  $\flatten(\widehat{\constr}^{\cv{b}})$}
$\dbvar_{\cv{b}} \gets \dbvar^{\cv{b}}_{\mathit{cand}}$
\Comment{Update $\cv{b}$'s DB}
\Else\ 
$\dbvar_{\cv{b}} \gets \state.\db(\cv{b})$
\Comment{Rollback $\cv{b}$'s DB}
\EndIf
\State \textbf{pick} fresh state $\state'$ \Comment{Create new state}
\State $\newag \gets \emptyset$ \Comment{Determine the (possibly
  changed)   set of active agents and their specs}
\If{$\cv{a} = \cv{\inst}$}
 $\newag \gets \set{\tup{\cv{n},\cv{spec}_\cv{n}} \mid \hasSpec(\cv{n},\cv{spec}_\cv{n}) \in \dbvar_{\cv{a}}}$
\ElsIf{$\cv{b} = \cv{\inst}$}
 $\newag \gets \set{\tup{\cv{n},\cv{spec}_\cv{n}} \mid \hasSpec(\cv{n},\cv{spec}_\cv{n}) \in \dbvar_{\cv{b}}}$
\Else\ $\newag \gets \curag$
\Comment{No change if $\inst$ is not involved in the interaction or
  must reject the update}
\EndIf
\ForAll{$\tup{\cv{n},\cv{spec}_\cv{n}} \in \newag$} \Comment{Do the
  update for each active agent}
\If{$\cv{n} = \cv{a}$}
  $\state'.\db(\cv{n}) \gets \dbvar_{\cv{a}}$ 
  \Comment{Case of sender agent} 
\ElsIf{$\cv{n} = \cv{b}$} 
  $\state'.\db(\cv{n}) \gets \dbvar_{\cv{b}}$
\Comment{Case of target agent}
\ElsIf{$\cv{n} \not\in \curag$} 
\Comment{Case of newly created agent}
\State $\state'.\db(\cv{n}) \gets
\idb^{\cv{spec}_\cv{n}} \cup \set{\myname(\cv{n})}$
\Comment{$\cv{n}$'s initial DB gets the initial data fixed by its
  specification, plus its name}
\Else\ $\state'.\db(\cv{n}) \gets \state.\db(\cv{n})$
\Comment{Default case: persisting agent not affected by the interaction}
\EndIf
\EndFor
\State $\dbltplus \gets \set{\lessthan{\dt_o}(\cv{d_1},\cv{d_2}) \mid
  \lessthan{\dt_o}(\cv{d_1},\cv{d_2}) \in \dblt \text{ and }
  \cv{d_1},\cv{d_2} \in \adom{\state} \cup \adom{\state'}}$
\Comment{Filter $\ex{lessThan}$}
\label{alg:lt-filter}
\State $s'.\db(<) \gets \dbltplus$
\Comment{Keep the explicit $\mathit{lessThan}$ relation only for
  persisting objects}
\If{ $\exists \state'' \in \Sigma$ s.t.~$\state''.\db(\inst) =
  \state'.\db(\inst)$ and for each $\tup{\cv{n},\_} \in \curag$,
  $\state''.\db(\cv{n}) = \state'.\db(\cv{n})$ }
\State $\trans \gets \trans \cup \tup{\state,\state''}$
\Comment{State already exists: connect $\state$ to that state}
\Else
\ $\Sigma \gets \Sigma \cup \set{\state'}$, 
$\trans \gets \trans \cup \tup{\state,\state'}$
\Comment{Add and connect new state}
\EndIf
\EndIf
\EndWhile
\EndProcedure

\end{algorithmic}
\vspace{-.3cm}

\caption{Procedure for constructing a transition system that is
  equivalent to that of
   \textsc{build-fb-ts-shallow}, but incorporates the ordering
   relations as special database facts }\label{alg:ts-order}

\end{figure*}


\newcommand{\used}{\mathit{UsedObj}}
\newcommand{\passive}{\mathit{PassiveObj}}

\begin{figure*}[!t]
\begin{algorithmic}[1]
\small
\Procedure{build-abstract-ts}{$\widehat{\sys}$}
\\\textbf{input:} Shallow-typed, RMAS $\widehat{X}  = \tup{\dtset, \widehat{\dtset},
  \cdom{\dfset},\widehat{\scset}, \widehat{\evset}}$, with $\dtset =
\set{\dt_u^1,\ldots,\dt_u^n} \cup \set{\dt_o^1,\ldots,\dt_o^m}$\\
\textbf{output:} transition system $\Upsilon_\X=\tup{\ddom{\dt},\Sigma,
  \state_0,\trans}$
\State $\idblt \gets \emptyset$ \Comment{Initial DB incorporating the
  domain-specific $<$ relations for data objects in $\cdom{\dfset}$}
\ForAll{$i \in \set{1,\ldots,m}$}
\ForAll{$\cv{d_1},\cv{d_2} \in \cdom{\dfset} \cap \ddom{\dt_o^m}$}
\If{$\cv{d_1} <_{\dt_o^m}\cv{d_2}$} 
$\idblt \gets \idblt \cup \set{\lessthan{\dt_o^m}(\cv{d_1},\cv{d_2})}$
\EndIf
\EndFor
\EndFor
\State $AS_{0} \gets \set{\tup{\cv{n},\cv{spec}_\cv{n}} \mid
  \hasSpec(\cv{n},\cv{spec}_\cv{n}) \in \idb^\inst}$ \Comment{Initial
  agents with their specifications}
\ForAll{$\tup{\cv{n},\cv{spec}_\cv{n}} \in AS_{0}$}
 $\state_0.\db(\cv{n}) \gets \idb^{\cv{spec}_\cv{n}}$
\Comment{Specify the initial state by extracting the initial DBs from
  the agent specs}
\EndFor
\State $\state_0.\db(<) \gets \idblt$ \Comment{Insert the special
  less-than DB} 
\State $\Sigma \gets \{\state_0\}$, $\trans \gets \emptyset$
\State $\used \gets \cdom{\dfset}$ \Comment{Initialization of the
  container of used data objects}
\While{true}
\State \textbf{pick} $\state \in \Sigma$
\Comment{Nondeterministically pick a state}
\State $\curag \gets \set{\tup{\cv{n},\cv{spec}_\cv{n}} \mid
  \hasSpec(\cv{a},\cv{spec}_\cv{n}) \in \state.\db(\inst)}$
\Comment{Get currently active agents with their specifications}
\State \textbf{pick} $\tup{\cv{a},\cv{spec}_\cv{a}} \in \curag$\Comment{Nondeterministically
  pick an active agent $\cv{a}$, elected as ``sender''}
\State $\enaset \gets
\Call{get-msgs}{\flatten(\widehat{\cspec}^{\cv{spec}_\cv{a}}),s.\db(\cv{a}) \cup
  \state.\db(<),\curag}$ \Comment{Get the enabled messages with target agents}
\If{$\enaset \neq \emptyset$}
\State \textbf{pick} $\tup{M(\vec{\cv{o}}),\cv{b}} \in
\enaset$, with $\tup{\cv{b},\cv{spec}_\cv{b}} \in \curag$
\Comment{Pick a message+target agent and trigger message
  exchange and reactions}
\State $\aset_\cv{a} \gets \emptyset$, $\aset_\cv{b} \gets \emptyset$
\Comment{Get the actions with actual parameters to be applied by
  $\cv{a}$ and $\cv{b}$}
\ForAll{matching on-send rules $``\onev~ M(\vec{x})~\toa~ t~\ifc~ Q(t,\vec{x}) ~\thendo~
  \alpha(t,\vec{x})"$ in $\flatten(\widehat{\urule}^{\cv{spec}_\cv{a}})$} 
\If{$\ans{Q(\cv{b},\vec{\cv{o}}},\state.\db(\cv{a})\cup \state.\db(<))$} 
$\aset_\cv{a} \gets \aset_\cv{a} \cup \alpha(\cv{b},\vec{\cv{o}})$
\EndIf
\EndFor
\ForAll{matching on-receive rules $``\onev~ M(\vec{x})~\froma~ s~\ifc~ Q(s,\vec{x}) ~\thendo~
  \alpha(s,\vec{x})"$  in $\flatten(\widehat{\urule}^{\cv{spec}_\cv{b}})$} 
\If{$\ans{Q(\cv{a},\vec{\cv{o}}},\state.\db(\cv{b}) \cup \state.\db(<))$} 
$\aset_\cv{b} \gets \aset_\cv{b} \cup \alpha(\cv{a},\vec{\cv{o}})$
\EndIf
\EndFor
\State $\tup{\torem^\cv{a},\toaddsk^\cv{a}} \gets
\Call{get-facts}{\flatten(\widehat{\sys}),s.\db(\cv{a})\cup
  \state.\db(<),\aset_{\cv{a}}}$
\State
 $\tup{\torem^\cv{b},\toaddsk^\cv{b}} \gets
\Call{get-facts}{\flatten(\widehat{\sys}),s.\db(\cv{b})\cup \state.\db(<),\aset_{\cv{b}}}$
\State $\dbvar_s^{\cv{a}} \gets (s.\db(\cv{a}) \setminus
\torem^{\cv{a}}) \cup \toaddsk^{\cv{a}}$ \Comment{Calculate new
  $\cv{a}$'s DB, still with service calls to be issued}
\State $\dbvar_s^{\cv{b}} \gets (s.\db(\cv{b}) \setminus
\torem^{\cv{b}}) \cup \toaddsk^{\cv{b}}$ \Comment{Calculate new
  $\cv{b}$'s DB, still with service calls to be issued}
\ForAll{data type $\dt \in \dtset$} 
\Comment{Fetch the active domain and service calls for each type}
\State $\curadom{\dt} \gets 
\begin{array}{@{}l@{}l@{}}
&\set{\cv{d}
  \mid \cv{d} \in \tdom{\dt} \cap \cdom{\dfset}}\\
\cup 
&\left\{\cv{d}
  \mid \cv{d} \in \tdom{\dt} \cap \adom{\state}
\right\}\\
\cup &
\set{\scname{f}(\vec{\cv{o}}) \mid \scname{f}(\vec{\cv{o}}) \in \calls{\dbvar_s^{\cv{a}} \cup
    \dbvar_s^{\cv{b}} } \text{ and } \typed{\scname{f}} =
  \tup{\scname{f}/n,\dfset^{in},\df^{out}} \in \widehat{\scset} \text{
  with } \df^{out} = \tup{\dt,\true}} 
\end{array}$
\EndFor
\State $\passive \gets \used \setminus \adom{\state} $
\Comment{Calculate passive objects, i.e., data objects used in the
  past but not active now}
\State \textbf{pick} 
$\mathfrak{H} \in 
\left\{\tup{\P_1,\ldots,\P_n,\H_1,\ldots,\H_m} \left|
\begin{array}{@{}l@{}}
\P_i \text{ is a }
T_u^i\text{-equality commitment on } \curadom{T_u^i} \text{ for } i
\in \set{1,\ldots,n},\\
\H_j \text{ is a }
T_o^j\text{-densely ordered commitment on } \curadom{T_o^j} \text{ for } j
\in \set{1,\ldots,m}
\end{array}
\right.
\right\}$
\State $\Delta \gets \ddom{\dtset}$ \Comment{By default, service calls
are substitued with data objects arbitrarily taken from $\ddom{\dtset}$}
\If{$\left|\bigcup_{\P \in \set{\P_1,\ldots,\P_n,\P'_1,\ldots,\P'_m}}
    \set{ec \in \P \mid \text{there is no } \cv{d} \in ec} \right|
  \leq \left| \passive \right|$} \Comment{Sufficiently many passive objects}
\State $\Delta \gets \passive$ \Comment{Pick the fresh results by
  recycling objects in $\passive$ }
\EndIf
\State $ \sigma \gets \left\{\scname{f}(\vec{\cv{o}}) \mapsto \cv{d}
  \mid \scname{f}(\vec{\cv{o}}) \in \callsvar \text{ and }
  \assignres{\mathfrak{H}}{\state}{\scname{f}(\vec{\cv{o}})}{\Delta} =
  \cv{d}\right\}$
\Comment{Get fresh or recycled values}
\State $\dblt \gets \emptyset$ \Comment{Recalculate the
  $\ex{lessThan}$ relations by considering the current active domains
  and the picked commitments}
\ForAll{$i \in \set{1,\ldots,m}$, with $\H_i = \tup{\P'_i,\ex{pos}_i}$}
\ForAll{$\cv{d_1},\cv{d_2} \in \P'_i\sigma$}
\If{$\ex{pos}_i(\cell{\cv{d_1}}{\P'_i\sigma}) <_\mathbb{N} \ex{pos}_i(\cell{\cv{d_2}}{\P'_i\sigma})$}
\State $\dblt \gets \dblt \cup
\set{\lessthan{\dt_o^i}(\cv{d_1},\cv{d_2})}$
\EndIf
\EndFor
\EndFor
\State $\dbvar^{\cv{a}}_{\mathit{cand}} \gets \dbvar_s^{\cv{a}}\sigma, \dbvar^{\cv{b}}_{\mathit{cand}}
\gets \dbvar_s^{\cv{b}}\sigma$
\Comment{Obtain new candidate DBs by substituting service calls with
  results}
\If{$\dbvar^{\cv{a}}_{\mathit{cand}}$ satisfies
  $\flatten(\widehat{\constr}^{\cv{a}})$}
$\dbvar_{\cv{a}} \gets \dbvar^{\cv{a}}_{\mathit{cand}}$
\Comment{Update $\cv{a}$'s DB}
\Else\ 
$\dbvar_{\cv{a}} \gets \state.\db(\cv{a})$
\Comment{Rollback $\cv{a}$'s DB}
\EndIf
\If{$\dbvar^{\cv{b}}_{\mathit{cand}}$ satisfies
  $\flatten(\widehat{\constr}^{\cv{b}})$}
$\dbvar_{\cv{b}} \gets \dbvar^{\cv{b}}_{\mathit{cand}}$
\Comment{Update $\cv{b}$'s DB}
\Else\ 
$\dbvar_{\cv{b}} \gets \state.\db(\cv{b})$
\Comment{Rollback $\cv{b}$'s DB}
\EndIf
\State \textbf{pick} fresh state $\state'$ \Comment{Create new state}
\State $\newag \gets \emptyset$ \Comment{Determine the (possibly
  changed)   set of active agents and their specs}
\If{$\cv{a} = \cv{\inst}$}
 $\newag \gets \set{\tup{\cv{n},\cv{spec}_\cv{n}} \mid \hasSpec(\cv{n},\cv{spec}_\cv{n}) \in \dbvar_{\cv{a}}}$
\ElsIf{$\cv{b} = \cv{\inst}$}
 $\newag \gets \set{\tup{\cv{n},\cv{spec}_\cv{n}} \mid \hasSpec(\cv{n},\cv{spec}_\cv{n}) \in \dbvar_{\cv{b}}}$
\Else\ $\newag \gets \curag$
\Comment{No change if $\inst$ is not involved in the interaction or
  must reject the update}
\EndIf
\ForAll{$\tup{\cv{n},\cv{spec}_\cv{n}} \in \newag$} \Comment{Do the
  update for each active agent}
\If{$\cv{n} = \cv{a}$}
  $\state'.\db(\cv{n}) \gets \dbvar_{\cv{a}}$ 
  \Comment{Case of sender agent} 
\ElsIf{$\cv{n} = \cv{b}$} 
  $\state'.\db(\cv{n}) \gets \dbvar_{\cv{b}}$
\Comment{Case of target agent}
\ElsIf{$\cv{n} \not\in \curag$} 
\Comment{Case of newly created agent}
\State $\state'.\db(\cv{n}) \gets
\idb^{\cv{spec}_\cv{n}} \cup \set{\myname(\cv{n})}$
\Comment{$\cv{n}$'s initial DB gets the initial data fixed by its
  specification, plus its name}
\Else\ $\state'.\db(\cv{n}) \gets \state.\db(\cv{n})$
\Comment{Default case: persisting agent not affected by the interaction}
\EndIf
\EndFor
\State $\dbltplus \gets \set{\lessthan{\dt_o}(\cv{d_1},\cv{d_2}) \mid
  \lessthan{\dt_o}(\cv{d_1},\cv{d_2}) \in \dblt \text{ and }
  \cv{d_1},\cv{d_2} \in \adom{\state} \cup \adom{\state'}}$
\Comment{Filter $\ex{lessThan}$}
\State $s'.\db(<) \gets \dbltplus$
\Comment{Keep the explicit $\mathit{lessThan}$ relation only for
  persisting objects}
\If{ $\exists \state'' \in \Sigma$ s.t.~$\state''.\db(\inst) =
  \state'.\db(\inst)$ and for each $\tup{\cv{n},\_} \in \curag$,
  $\state''.\db(\cv{n}) = \state'.\db(\cv{n})$ }
\State $\trans \gets \trans \cup \tup{\state,\state''}$
\Comment{State already exists: connect $\state$ to that state}
\Else
\ $\Sigma \gets \Sigma \cup \set{\state'}$, 
$\trans \gets \trans \cup \tup{\state,\state'}$
\Comment{Add and connect new state}
\EndIf
\EndIf
\EndWhile
\EndProcedure

\end{algorithmic}
\vspace{-.3cm}

\caption{Procedure for constructing a sound and complete abstraction
  of the transition system constructed with the 
   \textsc{build-fb-ts-shallow-flat} procedure, by recycling
   non-persisting data objects}\label{alg:ts-rcycl}

\end{figure*}




\section{Conclusion}

RMASs constitute a very rich modeling framework for data-aware multiagent
systems. The presence of concrete data types and their facets greatly empowers
its modeling capabilities, making it, e.g., apt to capture mutual exclusion
protocols, asynchronous interactions with bounded queues, and price-based
protocols. Our key result, namely that densely-order, state-bounded RMASs are
verifiable with standard model checking techniques, paves the way towards
concrete verification algorithms for this class of systems
\cite{LoQR09,CCDG*14}. In this respect, a major obstacle is the exponentiality
in the data slots that can be changed over time, a source of complexity that is
inherent in all data-aware dynamic systems \cite{DeSV07}. We intend to attack
this by proposing data modularization techniques to decompose the system into
smaller components.

 From a foundational perspective, our work presents connections to
 \cite{Bela14}, which extends the framework in \cite{BeLP12} with types so as
 to model and verify auctions.  The two settings are incomparable w.r.t.\ both
 the framework and the verification logic, and it would be interesting to
 study cross-transfer of results between the two settings.


\bibliographystyle{aaai}
\bibliography{main-bib}



\end{document}

\endinput
